\documentclass{article}
\usepackage[margin=1in]{geometry}

\usepackage{amsmath,graphicx,psfrag,verbatim}
\usepackage[small,bf]{caption}
\usepackage{amssymb}
\usepackage{float}

\usepackage[numbers,sort]{natbib}

\usepackage[x11names]{xcolor}
\usepackage{listings}
\usepackage{longtable}
\usepackage{subcaption}
\usepackage{mathtools}
\usepackage{adjustbox}
\usepackage{csquotes}

\usepackage{hyperref}

\definecolor{mydarkblue}{rgb}{0,0.08,0.45}
\hypersetup{%
    colorlinks = true,
    urlcolor   = mydarkblue,
    linkcolor  = mydarkblue,
    citecolor  = mydarkblue%
}

\usepackage{amsthm, thmtools, thm-restate}

\theoremstyle{plain}
\newtheorem{theorem}{Theorem}[section]
\newtheorem{proposition}[theorem]{Proposition}
\newtheorem{lemma}[theorem]{Lemma}

\theoremstyle{definition}

\theoremstyle{remark}
\newtheorem{remark}[theorem]{Remark}
\newtheorem*{remark*}{Remark}



\makeatother

\usepackage[capitalise]{cleveref}
\crefname{definition}{Definition}{Definitions}
\crefname{assumption}{Assumption}{Assumptions}
\usepackage{amsmath,amsfonts,amssymb}
\usepackage{bm,bbm,pifont}

\usepackage{xparse}
\DeclareFontFamily{U}{ntxmia}{}
\DeclareFontShape{U}{ntxmia}{m}{it}{<-> ntxmia }{}
\DeclareFontShape{U}{ntxmia}{b}{it}{<-> ntxbmia }{}
\DeclareSymbolFont{lettersA}{U}{ntxmia}{m}{it}
\SetSymbolFont{lettersA}{bold}{U}{ntxmia}{b}{it}

\ExplSyntaxOn
\NewDocumentCommand{\varmathbb}{m}
 {
  \tl_map_inline:nn { #1 }
   {
    \use:c { varbb##1 }
   }
 }
\cs_new_protected:Nn \__mathbb_define:Nn
 {
  \DeclareMathSymbol{#1}{\mathord}{lettersA}{#2}
 }
\cs_generate_variant:Nn \__mathbb_define:Nn {ce}
\tl_map_inline:nn { ABCDEFGHIJKLMNOPQRSTUVWXYZ }
 {
  \__mathbb_define:ce { varbb#1 } { \int_eval:n { `#1+67 } }
 }
\tl_map_inline:nn { abcdefghijklmnopqrstuvwxyz }
 {
  \__mathbb_define:ce { varbb#1 } { \int_eval:n { `#1+61 } }
 }
\DeclareMathSymbol{\varbbimath}{\mathord}{lettersA}{'270}
\DeclareMathSymbol{\varbbjmath}{\mathord}{lettersA}{'271}
\ExplSyntaxOff

\def\norm#1{\lVert#1\rVert}

\def\bignorm#1{\left\lVert#1\right\rVert}

\def\bigopen#1{\left(#1\right)}

\newcommand{\vertiii}[1]{{\left\vert\kern-0.25ex\left\vert\kern-0.25ex\left\vert #1 
    \right\vert\kern-0.25ex\right\vert\kern-0.25ex\right\vert}}

\newcommand{\PS}{\mathrm{PS}}
\newcommand{\PO}{\mathrm{PO}}
\newcommand{\PR}{\mathrm{PR}}
\newcommand{\Reg}{\mathrm{Reg}}

\newcommand{\Unif}{\mathrm{Unif}}

\newcommand{\ARGD}{\mathcal{A}_{\mathrm{RGD}}}
\newcommand{\ARRM}{\mathcal{A}_{\mathrm{RRM}}}
\newcommand{\ARSGDG}{\mathcal{A}_{\mathrm{SGD\text{-}greedy}}}
\newcommand{\ARSGDL}{\mathcal{A}_{\mathrm{SGD\text{-}lazy}}}
\newcommand{\AZGD}{\mathcal{A}_{\mathrm{ZGD}}}
\newcommand{\AZGDtwo}{\mathcal{A}_{\mathrm{ZGD}2}}

\def\1{\bm{1}}

\DeclareMathAlphabet{\mathsfit}{\encodingdefault}{\sfdefault}{m}{sl}
\SetMathAlphabet{\mathsfit}{bold}{\encodingdefault}{\sfdefault}{bx}{n}

\def\gA{{\mathcal{A}}}

\def\gC{{\mathcal{C}}}
\def\gD{{\mathcal{D}}}

\def\gG{{\mathcal{G}}}

\def\gN{{\mathcal{N}}}
\def\gO{{\mathcal{O}}}
\def\gP{{\mathcal{P}}}

\def\gW{{\mathcal{W}}}

\def\gZ{{\mathcal{Z}}}

\def\sB{{\mathbb{B}}}

\def\sN{{\mathbb{N}}}

\def\sS{{\mathbb{S}}}

\newcommand{\E}{\mathbb{E}}

\newcommand{\R}{\mathbb{R}}

\DeclareMathOperator*{\argmin}{arg\,min}

\newcommand{\RRM}{\mathrm{RRM}}

\usepackage{kpfonts}

\title{Partially Performative Prediction}
\author{Jaewook Lee%
\thanks{Department of Electrical Engineering, Stanford University
(\texttt{jwl99@stanford.edu}).}
\and Tijana Zrnic%
\thanks{Departments of Statistics and Management Science \& Engineering, Stanford University
(\texttt{tijana.zrnic@stanford.edu}).}
}
\date{%
June 5, 2026%
}

\begin{document}
\maketitle

\begin{abstract}
Performative prediction studies feedback loops that arise when predictive models are deployed in consequential domains. In these settings, deploying a model can change the population whose patterns the model aims to predict, inducing a distribution shift that is \emph{endogenous} to the learning system. This perspective departs from classical treatments of distribution shift, where shifts are typically modeled as \emph{exogenous} changes in the data-generating process. Yet, in practice, distribution shift is rarely one or the other. Predictive models may influence future data through the decisions they support, while the world itself continues to drift for reasons beyond the learner’s control.
We study \emph{partially performative prediction}, a framework that captures both endogenous and exogenous sources of distribution shift. The framework generalizes performative prediction by allowing the data distribution to evolve both in response to the deployed model and according to an external, time-varying process. We extend the central notions of performative stability and performative optimality to this setting by defining their online analogues that track the evolving partially performative environment. We analyze practical learning heuristics, including repeated retraining, and characterize when they successfully adapt to partially performative environments.
\end{abstract}


\section{Introduction}
\label{sec:1}

When prediction informs decision-making, predictive models do not merely describe a population; they can also shape it. Disease forecasts guide interventions that slow disease spread; recommendation systems influence user preferences; stock and home price forecasts affect the prices that ultimately materialize. Performative prediction \cite{perdomo20} formalizes this phenomenon by studying learning problems in which deploying a model changes the distribution of future observations.

The core idea is that deploying a model $\theta$ induces a model-dependent distribution over observations, denoted by $\gD(\theta)$. The learner's objective is to minimize the expected loss under the induced distribution, called the \emph{performative risk}:
\begin{align}
    \label{eqn:PP_objective}
    \PR(\theta) = \E_{Z \sim \gD(\theta)} [\ell(Z, \theta)],
\end{align}
where $Z$ denotes an observation, typically a feature--label pair $Z=(X,Y)$. The central object $\gD(\theta)$ is called the \emph{distribution map}. Thus, the goal is not simply to choose a model that performs well on the population as currently observed, but one that performs well after the model's impact on the population has been realized. Existing work studies two central solution concepts: \emph{performative stability} and \emph{performative optimality}, and analyzes when retraining procedures converge to such equilibria. In particular, these procedures generate a sequence of model updates $\theta_1,\theta_2,\dots$, whose limit is either a performatively stable or a performatively optimal point.

Performativity, therefore, surfaces as distribution shift: each model update may induce a new data distribution. What distinguishes performative distribution shift is that it is \emph{endogenous}: it arises from the learner's own actions. This contrasts with classical models of distribution shift in machine learning, which typically treat the shift as driven by \emph{exogenous} factors. For example, online learning studies algorithms for adapting a sequence of actions $\theta_t$ to an arbitrary sequence of distributions $P_t$.

The starting point of this work is the observation that, in realistic performative environments, distribution shift is rarely purely endogenous. Deployed predictive systems are embedded in changing populations, markets, and platforms. Their predictions may influence future data, but so too may seasonal effects, policy changes, market forces, and strategic behavior outside the learner's control. 

To capture this interplay, we introduce \emph{partially performative prediction}, an online model that combines endogenous and exogenous sources of distribution shift. At round $t$, the data distribution is
\begin{align*}
    \gD_t(\theta)
    &= (1 - \alpha_t)\gD(\theta) + \alpha_t P_t,
\end{align*}
where $\gD(\theta)$ captures the performative response to the deployed model $\theta$, $P_t$ captures an exogenous distribution shift, and $\alpha_t\in[0,1]$ controls their relative strength. This framework strictly generalizes the classical performative setting, recovered when $\alpha_t=0$, and approaches a purely exogenous-shift setting when $\alpha_t=1$. Note also that this model allows for a fixed non-performative component, obtained when $P_t$ is constant across $t$.

Partially performative models require a reconsideration of the learner's objective. Since the environment is fundamentally non-stationary, the goal can no longer simply be convergence to a fixed model, as in performative stability or performative optimality. Instead, the learner must contend with a moving target whose moves are only partly induced by the learner's own decisions. At the same time, the model highlights several new questions about the interaction between endogenous and exogenous shifts. Is a smaller value of $\alpha_t$ always more favorable to the learner, or can a large exogenous component help? How does the rate of change of $P_t$ affect the performance of learning algorithms? And how do the conditions under which retraining heuristics converge, or fail to converge, change relative to the classical performative setting ($\alpha_t=0$)?

\subsection{Our contributions}

We study all the above questions and more in the partially performative setting. Our first contribution is to extend the classical solution concepts of performative prediction to non-stationary environments. In the standard setting, performative stability and performative optimality are fixed models. Under partial performativity, the relevant stable and optimal points may change over time. We therefore introduce regret-based analogues, which we call \emph{performative stability regret} and \emph{performative optimality regret}. These quantities compare the learner's sequence of decisions to the time-varying sequence of performatively stable points
$\theta^\PS_1,\theta^\PS_2,\dots$
and performatively optimal points
$\theta^\PO_1,\theta^\PO_2,\dots$,
respectively. We then analyze several natural learning algorithms, including repeated risk minimization \cite{perdomo20} and gradient-based methods, under these regret notions.

A key quantity in our bounds is the \emph{path length}: the cumulative movement of the comparator sequence. For performative stability and performative optimality, we denote the one-step changes~by
\begin{align*}
    \Delta_{t}^{\PS} = \|\theta^\PS_{t} - \theta^\PS_{t+1}\| \quad \text{ and } \quad \Delta_{t}^{\PO} = \|\theta^\PO_t - \theta^\PO_{t+1}\|.
\end{align*}
The path lengths $\sum_t \Delta_{t}^{(\cdot)}$ measure the degree of non-stationarity in the partially performative problem. Slowly varying exogenous distributions $P_t$ lead to small path length, while rapidly changing $P_t$ can lead to large path length.

Our first main result bounds the stability regret of repeated risk minimization (RRM), an idealized form of retraining given by
\[\theta_{t+1} = \argmin_{\theta} \E_{Z\sim \gD_t(\theta_t)}[\ell(Z,\theta)].\]
Let $\gA_{\RRM}$ denote the RRM algorithm.

\begin{theorem}[Informal]
\label{thm:informalthm1}
    Suppose that the loss $\ell(z,\theta)$ is $\beta_{z}$-smooth in $z$ and $\mu$-strongly convex in $\theta$, and that the distribution map $\gD$ is $\epsilon$-Lispchitz with $\epsilon < \frac{\mu}{(1-\alpha_{\min})\beta_{z}}$, where $\alpha_{\min} = \min_{t\in[T]}\alpha_t$. Then, the performative stability regret of repeated risk minimization after $T$ steps satisfies
    \begin{align}
        \mathrm{Reg}_{T}^{ \PS }(\gA_{\RRM}) &= \gO\left(1 + \sum_{t=1}^{T-1} \Delta^\PS_{t} \right).
    \end{align}
\end{theorem}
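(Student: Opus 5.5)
We follow the contraction argument for repeated risk minimization in the classical performative setting \cite{perdomo20}, adapted to the time-varying map $\gD_t(\theta) = (1-\alpha_t)\gD(\theta) + \alpha_t P_t$. We take the performative stability regret to be $\sum_{t=1}^T \|\theta_t - \theta^\PS_t\|$, where $\theta^\PS_t$ is the fixed point of the round-$t$ retraining map
\[
G_t(\phi) = \argmin_\theta \E_{Z\sim\gD_t(\phi)}[\ell(Z,\theta)],
\]
so that $\theta^\PS_t = G_t(\theta^\PS_t)$. If the formal definition uses loss differences instead of distances, the bound below transfers through Lipschitzness of the loss on the bounded parameter set, at the cost of constants. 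RRM is then exactly $\theta_{t+1} = G_t(\theta_t)$.

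\textbf{Step 1: each $G_t$ is a contraction.} First, $\gD_t$ is $(1-\alpha_t)\epsilon$-Lipschitz in $W_1$. The reason is that the exogenous component $\alpha_t P_t$ does not depend on $\phi$. Hence the $W_1$ distance between the two mixtures is at most $(1-\alpha_t)W_1(\gD(\phi),\gD(\phi'))$, which follows by coupling the $P_t$ parts identically. Next we rerun the argument of \cite{perdomo20}. Take $\theta = G_t(\phi)$ and $\theta' = G_t(\phi')$, and combine the first-order optimality conditions with $\mu$-strong convexity. The gradient difference $\E_{\gD_t(\phi)}\nabla\ell(Z,\theta') - \E_{\gD_t(\phi')}\nabla\ell(Z,\theta')$ is controlled by Kantorovich--Rubinstein duality, using that $\nabla_\theta \ell$ is $\beta_z$-Lipschitz in $z$. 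This gives
\[
\|G_t(\phi) - G_t(\phi')\| \le \frac{(1-\alpha_t)\epsilon\beta_z}{\mu}\|\phi - \phi'\| \le \rho\,\|\phi - \phi'\|,
\qquad \rho := \frac{(1-\alpha_{\min})\epsilon\beta_z}{\mu} < 1 .
\]
In particular, $\theta^\PS_t$ exists and is unique by Banach's fixed point theorem.

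\textbf{Step 2: one-step recursion.} Write $e_t = \|\theta_t - \theta^\PS_t\|$. The triangle inequality and Step 1 give
\[
e_{t+1} \le \|G_t(\theta_t) - G_t(\theta^\PS_t)\| + \|\theta^\PS_t - \theta^\PS_{t+1}\| \le \rho\, e_t + \Delta^\PS_t .
\]

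\textbf{Step 3: summation.} Unrolling gives $e_t \le \rho^{t-1}e_1 + \sum_{s<t}\rho^{t-1-s}\Delta^\PS_s$. Summing over $t$ and swapping the order of summation yields
\[
\sum_{t=1}^T e_t \le \frac{e_1}{1-\rho} + \frac{1}{1-\rho}\sum_{t=1}^{T-1}\Delta^\PS_t = \gO\Big(1 + \sum_{t=1}^{T-1}\Delta^\PS_t\Big),
\]
since $e_1$ is bounded (for instance, by the diameter of the parameter set).

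\textbf{Main obstacle.} The delicate step is Step 1. We must check that the contraction argument of \cite{perdomo20} goes through with the mixture distribution, so that the effective Lipschitz constant really is $(1-\alpha_t)\epsilon$. We must also make sure that the relevant gradients are evaluated at a common point, so that only the $z$-smoothness $\beta_z$ enters. Both the mixture-coupling argument and this duality step need care in the write-up, but neither is deep.
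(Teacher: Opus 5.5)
Your proposal is correct and follows the paper's proof of \cref{thm:rrm} essentially step for step. Your Step 1 is \cref{lem:pislipschitz}, i.e.\ the argument of \cite{perdomo20} with the mixture giving effective sensitivity $(1-\alpha_t)\epsilon$. The recursion $e_{t+1}\le \gamma_t e_t + \Delta_t^{\PS}$ is identical. Your unrolling and the paper's telescoping lemma (using $1-\gamma\le 1-\gamma_t$) give the same $\frac{1}{1-\gamma}$ factor.

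The one point to tighten is the conversion to the paper's regret, which is defined through $\PR_t(\theta_t)-\PR_t(\theta_t^{\PS})$. That step needs $\PR_t$ itself to be $L$-Lipschitz, which is assumed in the formal theorem, or follows from $\ell$ being Lipschitz in both $z$ and $\theta$ together with sensitivity (\cref{lem:rgd_lislip}). Lipschitzness of $\ell$ in $\theta$ alone is not enough, because the comparator is evaluated under a different induced distribution.
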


This result makes explicit a tradeoff between endogenous and exogenous shifts. When the shifts are mostly performative ($\alpha_t \approx 0, \forall\, t$), the regret has no dependence on $T$ because the path length vanishes. The condition on performativity reduces to the standard condition $\epsilon < \frac{\mu}{\beta_{z}}$ from \cite{perdomo20}. Conversely, when the shifts are mostly exogenous ($\alpha_t \approx 1, \forall\, t$), there is no condition on the strength of performativity, but the regret depends on the non-stationarity of $P_t$. If $P_t$ is near-constant, the regret is constant too; if $P_t$ fluctuates rapidly, the regret could be large. Thus, retraining on the most recent data is effective under mild distribution shift, but ineffective when shifts are unpredictable.

We also prove analogous stability regret bounds for gradient-based algorithms, including those with ``greedy'' and ``lazy'' deployment schedules \cite{mendlerdunner20}.

Our second main contribution concerns performative optimality. We study algorithms that explicitly anticipate performativity by using zeroth-order gradient estimation techniques. Such methods perturb deployed models to estimate how changes in the model affect the induced distribution, and prior work has shown that related procedures can converge to performatively optimal points in the classical setting \cite{miller21, izzo2021learn, liu2024two, lin2024plug, chen24}. We show that, in partially performative environments, these ideas yield regret guarantees relative to the time-varying sequence of performatively optimal points.

\begin{theorem}[Informal]
\label{thm:informalthm2}
    Suppose that the performative risk at step $t$ is convex for all $t$. Then, there exists an algorithm $\gA$ that after $T$ steps achieves performative optimality regret of
    \begin{align*}
        \Reg^\PO_T(\gA)
        &= \gO \left( d\, \sqrt{T} \cdot \left( 1 + \sum_{t=1}^{T-1} \Delta^\PO_{t} \right) \right),
    \end{align*}
where $d$ is the dimension of $\theta$.
\end{theorem}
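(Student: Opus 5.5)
The algorithm I will analyze is online projected gradient descent driven by a one-point (Flaxman--Kalai--McMahan type) zeroth-order gradient estimate. Write $\PR_t(\theta)=\E_{Z\sim\gD_t(\theta)}[\ell(Z,\theta)]$; this is convex by assumption. At round $t$ the algorithm keeps an iterate $\theta_t$ in a slightly shrunk feasible set $(1-\delta)\Theta$. It samples $u_t\sim\Unif(\sS^{d-1})$, deploys $\tilde\theta_t=\theta_t+\delta u_t$, and observes the realized loss $\hat\ell_t$ on data drawn from $\gD_t(\tilde\theta_t)$. It then sets
\[
g_t=\tfrac{d}{\delta}\,\hat\ell_t\,u_t,\qquad \theta_{t+1}=\Pi_{(1-\delta)\Theta}(\theta_t-\eta g_t).
\]
The standard fact I will use is that $\E[g_t\mid\theta_t]=\nabla\hat{\PR}_t(\theta_t)$, where $\hat{\PR}_t(\theta)=\E_{v\sim\Unif(\sB)}\PR_t(\theta+\delta v)$ is the smoothed risk. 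This smoothed risk is convex. If $\PR_t$ is $L$-Lipschitz, which follows from Lipschitzness of $\ell$ and $\epsilon$-Lipschitzness of $\gD$, then $|\hat{\PR}_t-\PR_t|\le L\delta$. I will also assume bounded losses $|\ell|\le B$, which gives $\|g_t\|\le dB/\delta$.

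The core step is a dynamic-regret analysis of online gradient descent against the moving comparator $\theta^\PO_t$, or rather its projection onto the shrunk set. Expanding $\|\theta_{t+1}-\theta^\PO_{t+1}\|^2$ and inserting $\theta^\PO_t$ gives
\[
\|\theta_{t+1}-\theta^\PO_{t+1}\|^2\le \|\theta_t-\theta^\PO_t\|^2-2\eta\langle g_t,\theta_t-\theta^\PO_t\rangle+\eta^2\|g_t\|^2+3D\,\Delta^\PO_t ,
\]
where $D$ is the diameter of $\Theta$. I then sum over $t$, take expectations, and use convexity of $\hat{\PR}_t$. This yields
\[
\E\sum_t\big(\hat{\PR}_t(\theta_t)-\hat{\PR}_t(\theta^\PO_t)\big)\le \frac{D^2+3D\sum_t\Delta^\PO_t}{2\eta}+\frac{\eta T d^2B^2}{2\delta^2}.
\]
To go back to the true regret (assuming the regret is defined on the deployed points $\tilde\theta_t$), I add the error terms $L\delta T$ from the smoothing, from the shrinkage of the comparator, and from the perturbation $\tilde\theta_t-\theta_t$.

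The last step is tuning. One point of care: the rate $d\sqrt T$ in the statement is faster than the $T^{3/4}$ that one-point estimates usually give. So I will instead use a two-point estimator $g_t=\frac{d}{2\delta}\big(\hat\ell_t^+-\hat\ell_t^-\big)u_t$, deploying $\theta_t\pm\delta u_t$ on two sub-rounds, or equivalently one deployment with a reference value carried over from the previous round. With Lipschitz $\PR_t$ this gives $\E\|g_t\|^2\lesssim d^2L^2$ (up to sampling noise), independently of $\delta$. Taking $\delta\sim 1/\sqrt T$ and $\eta\sim\frac{1}{d\sqrt T}$ then gives $\Reg^\PO_T=\gO\big(d\sqrt T\,(1+\sum_t\Delta^\PO_t)\big)$. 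Choosing $\eta$ without knowledge of the path length is what produces the multiplicative rather than additive form $(1+\sum_t\Delta^\PO_t)$: it comes from the term $D\sum_t\Delta^\PO_t/\eta$.

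I expect the main obstacle to be the two-point estimate in a non-stationary environment. The two evaluations must see the same $\PR_t$, which is fine if both sub-rounds share $\alpha_t$ and $P_t$. If the reference is instead taken from round $t-1$, there is a drift term $\PR_t-\PR_{t-1}$, which has to be absorbed into the path-length or variation budget. I would first try to define the round so that both perturbations occur under the same $(\alpha_t,P_t)$, which avoids this issue entirely.
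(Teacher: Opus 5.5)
Your proposal matches the paper's proof of the convex case of \cref{thm:zgdtwopoint} in all its ingredients: two-point zeroth-order gradient descent with both perturbed models $\theta_t\pm\delta u_t$ evaluated under the same $\PR_t$, projection onto a shrunk copy of $\Theta$, the dynamic online-gradient-descent recursion in which each comparator move costs $O(D_\Theta\Delta^\PO_t)$, the $\delta$-independent bound $\|g_t\|\le dL$, additive $O(\delta T)$ errors from smoothing, perturbation and comparator shrinkage, and a step size $\eta\asymp D_\Theta/(dL\sqrt{T})$ whose $\frac{D_\Theta}{\eta}\sum_t\Delta^\PO_t$ term produces the multiplicative form. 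The one point to tighten is that, as in the paper, the two queries must return the exact risks $\PR_t(\theta_t\pm\delta u_t)$ (or averages over order $1/\delta^2$ samples), because single-sample losses add a noise term scaling like $d^2/\delta^2$ to $\E\|g_t\|^2$ (precisely the one-point behavior that forces $T^{3/4}$), so ``up to sampling noise'' is not harmless, and the carried-over-reference variant is not equivalent and would need a separate analysis.
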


Again, we see path length as a key quantity in the bound. Like in the previous result, we can again observe a tradeoff between having a large endogenous and a large exogenous component. A large exogenous component might make the path length large. At the same time, a larger endogenous component can make the performative risk nonconvex, even when the loss $\ell(z,\theta)$ is convex. If $\alpha_t \approx 1$, the performative component is weak; consequently, convexity of the loss directly implies convexity of the performative risk. Thus, exogenous variation can make the target harder to track, while endogenous feedback can make the target harder to optimize.

\subsection{Related work}

Our work contributes to the growing literature on performative prediction \cite{perdomo20}, which formalizes settings in which deploying a model changes the future data distribution. A large body of work studies algorithms for finding performatively stable points \cite{mendlerdunner20, drusvyatskiy2023stochastic, mofakhami2023performative, wang2023network, narang2023multiplayer, perdomo2025revisiting, farina2026stability} and performatively optimal points \cite{miller21, izzo2021learn, kim2023making, narang2023multiplayer, xue2024distributionally, lin2024plug}, under progressively weaker assumptions. See \citet{hardt2025performative} and \citet{kehrenberg2026dissecting} for recent surveys of the area.

Most relevant to our work are extensions with stateful observation models \cite{brown2022performative, ray2022decision, izzo2022learn, li2022state, liu2024two}, which allow prior deployments to influence the current data distribution. These models capture memory, delayed responses, and other forms of endogenous dynamics. However, they still pursue the classical notions of performative stability and performative optimality; the underlying environment is fully determined by the history of deployments. In contrast, our model allows exogenous components of the data-generating process to evolve over time, so the relevant stable and optimal points may drift.

Another closely related thread studies performative regret minimization \cite{jagadeesan22, chen24, yan2023zero, park2024parameter}, where the learner deploys models sequentially and seeks regret bounds on the accumulated performative risk. In these works, the performative risk is static: the distribution map $\gD(\theta)$ is fixed, and regret is measured against a fixed baseline. In our setting, the distribution map $\gD_t(\theta)$ may vary with time. Accordingly, we study dynamic notions of performative regret, benchmarking the learner against time-varying sequences of performatively stable or performatively optimal models.

At a technical level, our model of partially performative prediction resembles the model of algorithmic collective action \cite{hardt2023algorithmic} in the presence of selfish agents by \citet{zhu2025look}. They similarly show a faster convergence rate of repeated risk minimization in the presence of a fixed, non-performative distributional component, which corresponds to the behavior of a coordinated collective in their case. They do not, however, analyze regret guarantees or non-stationary environments.

Our analysis builds on tools from online learning, in particular online convex optimization \cite{zinkevich03, hazan2016introduction} and zeroth-order optimization \cite{flaxman05, agarwal10}. We study dynamic regret, which measures performance relative to a changing sequence of optima rather than a single best baseline in hindsight. Dynamic regret  originated in tracking changing experts and comparators \cite{herbster98,zinkevich03} and was later sharpened through path and loss variation \cite{jadbabaie15,mokhtari16,yang16,zhang17}. Some of our results rely on zeroth-order optimization, where path-length-dependent dynamic regret guarantees are known \cite{zhao20}, though existing analyses do not accommodate performative feedback.

\section{Preliminaries}
\label{sec:2}

We begin by reviewing the formal setup of performative prediction, with a focus on the partially performative setting studied in this work. The learner chooses model parameters $\theta \in \Theta$, where $\Theta$ is a convex, compact subset of $\R^d$ of diameter $D_{\Theta}$. We write $\Pi_{\Theta}$ for the Euclidean projection onto $\Theta$. Performance is measured by a loss function $\ell(z,\theta)$, where $z$ denotes a data point; thus, $\ell(z,\theta)$ is the loss incurred by model $\theta$ on instance $z$.

In partially performative prediction, the data distribution after deploying model $\theta$ at time $t$ is
\begin{align}
    \gD_t(\theta)
    =
    (1-\alpha_t)\gD(\theta) + \alpha_t P_t,
    \label{eq:pp}
\end{align}
where $\alpha_t\in[0,1]$ is a fixed sequence of mixture coefficients and $P_t$ is an exogenous distribution that is not affected by the deployed model. The component $\gD(\theta)$ captures the performative response to model $\theta$, while $P_t$ captures time-varying external shift. Setting $\alpha_t=0$ recovers the standard performative prediction model \cite{perdomo20}. When the deployed model at time $t$ is $\theta_t$, we sometimes use the shorthand $\gD_t \coloneqq \gD_t(\theta_t)$.

The partially performative setting admits natural analogues of the central concepts from performative prediction. We define the \emph{performative risk} and \emph{decoupled performative risk} at time $t$ as
\begin{align*}
    \mathrm{PR}_t(\theta)
    &\coloneqq
    \E_{Z\sim \gD_t(\theta)}[\ell(Z,\theta)],\quad 
    \mathrm{DPR}_t(\theta,\theta')
    \coloneqq
    \E_{Z\sim \gD_t(\theta')}[\ell(Z,\theta)].
\end{align*}
The performative risk evaluates a model on the distribution that it induces. The decoupled performative risk separates the model being evaluated, $\theta$, from the model that induces the distribution,~$\theta'$.

The \emph{performatively stable point} at time $t$ is defined as the point $\theta_t^{\PS}\in\Theta$ satisfying
\begin{align*}
    \theta_t^{\PS}
    \coloneqq
    \argmin_{\theta\in\Theta}
    \mathrm{DPR}_t(\theta,\theta_t^{\PS}).
\end{align*}
Thus, once $\theta_t^{\PS}$ is deployed and induces the distribution $\gD_t(\theta_t^{\PS})$, it minimizes the expected loss on that induced distribution. The \emph{performatively optimal point} at time $t$ is the point
\begin{align*}
    \theta_t^{\PO}
    \coloneqq
    \argmin_{\theta\in\Theta}
    \mathrm{PR}_t(\theta).
\end{align*}
Performative optimality jointly accounts for the loss incurred by predicting with $\theta$ and the distributional response induced by deploying $\theta$. In general, stable and optimal points do not coincide~\cite{perdomo20}.

Because the stable and optimal points may vary with time, a natural objective is to achieve low regret relative to the sequences $\{\theta_t^\PS\}_{t\in[T]}$ and $\{\theta_t^\PO\}_{t\in[T]}$. This leads to our definitions of \emph{performative stability regret} and \emph{performative optimality regret}.
Let $\gA$ be an algorithm that outputs a possibly stochastic sequence of iterates $\{\theta_t\}_{t\in[T]}$. We define the performative stability regret and the performative optimality regret of $\gA$ as
\begin{align*}
    \Reg_T^{\PS}(\gA)
    &\coloneqq
    \sum_{t=1}^{T}
    \left(
        \E[\mathrm{PR}_t(\theta_t)]
        -
        \mathrm{PR}_t(\theta_t^\PS)
    \right), \\ 
    \Reg_T^{\PO}(\gA)
    &\coloneqq
    \sum_{t=1}^{T}
    \left(
        \E[\mathrm{PR}_t(\theta_t)]
        -
        \mathrm{PR}_t(\theta_t^\PO)
    \right).
\end{align*}
The expectation is taken over any randomness in the algorithm.

Several previous works have studied regret guarantees in performative prediction \cite{jagadeesan22, yan2023zero, chen24, farina2026stability}. Their guarantees are given in terms of \emph{static} regret, which compares the learner's realized loss to that of a single fixed $\theta\in\Theta$. In the standard performative prediction model, this is natural, since the stable and optimal points do not change over time. In the partially performative setting, however, the exogenous component can make these points time-varying. We therefore study \emph{dynamic} regret, which compares the learner's performance to a sequence of models. This distinction parallels the classical difference between static and dynamic regret in online learning; see, for example, \cite{hazan2016introduction}.

Finally, we invoke several common assumptions from the performative prediction literature, including (strong) convexity and smoothness conditions on the loss $\ell(z,\theta)$.
We say that a continuously differentiable function $\ell : \gZ \times \Theta \to \R$ is 
$\mu$-strongly convex in $\theta$ if
\begin{align}
    \ell(z, \theta')
    &\ge \ell(z, \theta) + \langle \nabla_{\theta} \ell(z, \theta), \theta' - \theta \rangle + \frac{\mu}{2} \| \theta' - \theta \|^2, \qquad \forall \theta, \theta' \in \Theta, \, z \in \gZ,
    \label{eq:strong_convexity}
\end{align}
and convex in $\theta$ if the above holds with $\mu=0$.

We say that $\ell$ is $\beta_{\theta}$-smooth in $\theta$ if
\begin{align}
    \| \nabla_{\theta} \ell(z, \theta) - \nabla_{\theta} \ell(z, \theta') \|
    &\le \beta_{\theta} \| \theta - \theta' \|, \qquad \forall \, \theta, \theta' \in \Theta \, , z \in \mathcal{Z},
    \label{eq:smoothness_theta}
\end{align}
and $\beta_{z}$-smooth in $z$ if\footnote{Note that $\beta_{z}$-smoothness is also defined using the gradient with respect to $\theta$.}
\begin{align}
    \| \nabla_{\theta} \ell(z, \theta) - \nabla_{\theta} \ell(z', \theta) \|
    &\le \beta_{z} \| z - z' \|, \qquad \forall \, z, z' \in \mathcal{Z}, \, \theta \in \Theta.
    \label{eq:smoothness_z}
\end{align}
We say that $\PR_t(\theta)$ is $L$-Lipschitz if
\begin{align}
    | \PR_t(\theta) - \PR_t(\theta') |
    &\le L \| \theta - \theta' \|, \qquad \forall \, \theta, \theta' \in \Theta.
    \label{eq:lipschitz}
\end{align}

Here we also state one assumption specific to performative settings: sensitivity of the distribution map.
We say that $\gD$ is $\epsilon$-sensitive if
\begin{align}
\label{eq:sensitivity}
    \gW_1\bigl(\gD(\theta), \gD(\theta')\bigr)
    \le
    \epsilon \|\theta-\theta'\|,
    \qquad
    \forall\, \theta,\theta'\in\Theta,
\end{align}
where $\gW_1$ denotes the Wasserstein-$1$ distance. This condition requires the performative response to vary smoothly with the deployed model. Intuitively,  $\epsilon$ measures the strength of performative effects.

\begin{remark}
    \label{rem:admissible}
    When we make assumptions on the loss as a function of $z$, we require the assumptions to hold for all $\theta$; for example, $z \mapsto \ell(z, \theta)$ is $\beta_{z}$-smooth for all $\theta$.
    For the assumptions on the loss as a function of $\theta$, however, our results hold under a weaker assumption that the \emph{expected} risk $\mathrm{R}_{D} (\theta) \coloneqq \E_{Z \sim D} [\ell(Z, \theta)]$ is $\mu$-strongly convex, $\beta_{\theta}$-smooth, etc, for all admissible distributions that could be generated by $\gD_t(\theta_t)$. For simplicity we state the assumptions for all $z\in \mathcal{Z}$, as in \cref{eq:strong_convexity,eq:smoothness_theta,eq:lipschitz}, however all the proofs go through if we consider the expected risk.
    
    Making assumptions on the expected risk is strictly weaker, especially if we have additional structure on the distributions that can be generated by $\gD_t(\theta_t)$.
    For example, consider a simple one-dimensional coin-flip setting with a single feature where the random variable $Z = (X, Y) \in \{-1, +1\}^{2}$ has a fixed distribution of $X \sim \Unif(\{-1, +1\})$ and only the conditional law of $Y \mid X$ can change due to performative and exogenous factors.
    Consider
    \begin{align*}
        \ell(z, \theta)
        &= \frac{1}{2} (\theta - y)^2 - x \theta^2.
    \end{align*}
    We have that $\theta \mapsto \ell(z, \theta)$ is nonconvex in $\theta$ for $x=1$.
    However, if we average the loss over the distribution of $(X,Y)$, we have
    \begin{align*}
        \mathrm{R}_{D} (\theta)
        &= \E_{Z \sim D} \left[ \frac{1}{2} (\theta - Y)^2 - X \theta^2 \right] = \E_{Z \sim D} \left[ \frac{1}{2} (\theta - Y)^2 \right],
    \end{align*}
    which is $1$-strongly convex in $\theta$.
    Similarly, we can also consider
    \begin{align*}
        \ell(z, \theta)
        &= \frac{1}{2} (\theta - y)^2 - x |\theta|,
    \end{align*}
    which is non-smooth at $\theta = 0$, but admits a $1$-smooth $\mathrm{R}_{D} (\theta)$ under the same setup.
\end{remark}

\section{Repeated risk minimization under partial performativity}
\label{sec:3}

We begin by studying repeated risk minimization (RRM) \citep{perdomo20}. Given an initial $\theta_1 \in \Theta$, RRM sets
\begin{align}
\tag{RRM}
    \theta_{t+1} & = \argmin_{\theta \in \Theta} \E_{Z \sim \gD_t(\theta_t)} [\ell (Z, \theta)].
\end{align}
We use $\ARRM$ to denote this algorithm.

We start from a key technical lemma that characterizes one-step RRM updates.

\begin{restatable}{lemma}{lempislipschitz}
    \label{lem:pislipschitz}
    Suppose that $\gD(\theta)$ is $\epsilon$-sensitive \eqref{eq:sensitivity}
    and $\ell(z, \theta)$ is $\mu$-strongly convex in $\theta$ \eqref{eq:strong_convexity} and $\beta_{z}$-smooth in $z$ \eqref{eq:smoothness_z}.
    Then, the iterates of $\ARRM$ satisfy
    \begin{align*}
        \norm{\theta_{t+1} - \theta_{t}^{\PS}} &\le \frac{(1 - \alpha_{t}) \epsilon \beta_{z}}{\mu} \| \theta_{t} - \theta_{t}^{\PS} \|. 
    \end{align*}
\end{restatable}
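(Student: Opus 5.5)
We adapt the argument of \citet{perdomo20} for the contraction of RRM to the mixture map $\gD_t$. The exogenous component $P_t$ is the same in $\gD_t(\theta_t)$ and in $\gD_t(\theta_t^\PS)$, so it cancels when we compare the two objectives. Only the performative part survives, and it is scaled by $(1-\alpha_t)$.

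\textbf{Step 1: first-order optimality.} Write $G(\theta,\theta') \coloneqq \E_{Z\sim\gD_t(\theta')}[\nabla_\theta \ell(Z,\theta)]$. Since $\theta_{t+1}$ minimizes $\mathrm{DPR}_t(\cdot,\theta_t)$ over $\Theta$ and $\theta_t^\PS$ minimizes $\mathrm{DPR}_t(\cdot,\theta_t^\PS)$ over $\Theta$, the variational inequalities give
\[
\langle G(\theta_{t+1},\theta_t),\,\theta_t^\PS-\theta_{t+1}\rangle\ge 0
\quad\text{and}\quad
\langle G(\theta_t^\PS,\theta_t^\PS),\,\theta_{t+1}-\theta_t^\PS\rangle\ge 0.
\]
By $\mu$-strong convexity of $\mathrm{DPR}_t(\cdot,\theta_t)$, which holds because the loss is strongly convex pointwise (or by \cref{rem:admissible}), we have
\[
\langle G(\theta_t^\PS,\theta_t)-G(\theta_{t+1},\theta_t),\,\theta_t^\PS-\theta_{t+1}\rangle \ge \mu\|\theta_{t+1}-\theta_t^\PS\|^2 .
\]
Adding the first variational inequality and then subtracting the second yields
\[
\mu\|\theta_{t+1}-\theta_t^\PS\|^2 \le \langle G(\theta_t^\PS,\theta_t)-G(\theta_t^\PS,\theta_t^\PS),\,\theta_t^\PS-\theta_{t+1}\rangle .
\]

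\textbf{Step 2: bound the distribution-difference term.} Set $u = (\theta_t^\PS-\theta_{t+1})/\|\theta_t^\PS-\theta_{t+1}\|$; the claim is trivial if this norm is zero. Define $f(z) = \langle \nabla_\theta \ell(z,\theta_t^\PS), u\rangle$. By $\beta_z$-smoothness in $z$, $f$ is $\beta_z$-Lipschitz in $z$. Linearity of expectation in the mixture gives
\[
\E_{\gD_t(\theta_t)}f-\E_{\gD_t(\theta_t^\PS)}f=(1-\alpha_t)\bigl(\E_{\gD(\theta_t)}f-\E_{\gD(\theta_t^\PS)}f\bigr),
\]
because the $\alpha_t P_t$ terms cancel exactly. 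Kantorovich--Rubinstein duality and $\epsilon$-sensitivity \eqref{eq:sensitivity} then bound the right-hand side by $(1-\alpha_t)\beta_z\epsilon\|\theta_t-\theta_t^\PS\|$.

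\textbf{Step 3: conclude.} Combining Steps 1 and 2 gives
\[
\mu\|\theta_{t+1}-\theta_t^\PS\|^2\le (1-\alpha_t)\epsilon\beta_z\|\theta_t-\theta_t^\PS\|\,\|\theta_{t+1}-\theta_t^\PS\|,
\]
and dividing by $\mu\|\theta_{t+1}-\theta_t^\PS\|$ gives the claim.

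\textbf{Main obstacle.} The only delicate point is that the constraint set $\Theta$ forces us to use variational inequalities rather than zero-gradient conditions. The step to check is that the two inequalities are combined with the correct signs. We also implicitly assume that $\theta_t^\PS$ exists; this is taken as given by the definition in the statement. No contraction condition on $\epsilon$ is needed for this one-step bound.
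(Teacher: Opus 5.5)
Your proof is correct and follows essentially the same route as the paper's, which is the Perdomo et al.\ argument: two variational inequalities plus strong monotonicity of the gradient under $\gD_t(\theta_t)$, with the gradient discrepancy evaluated at $\theta_t^{\PS}$ and bounded via Kantorovich--Rubinstein duality and sensitivity. The differences are cosmetic: the paper first proves the Lipschitz bound for the RRM map $\mathcal{G}$ at arbitrary $\theta,\theta'$ and then substitutes the fixed point $\mathcal{G}(\theta_t^{\PS})=\theta_t^{\PS}$, and it removes the exogenous component via convexity of $\mathcal{W}_1$ rather than by your exact cancellation of the $\alpha_t P_t$ terms in the scalar test function $f$.
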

We defer the proof of \cref{lem:pislipschitz} to \cref{subsec:pislipschitz}.
We note that \cref{lem:pislipschitz} is a strict generalization of Theorem 3.5 by Perdomo et al.~\cite{perdomo20}, recovered when $\alpha_t = 0$ and $\theta_t^\PS \equiv \theta^\PS$ for all $t$. Building on this lemma, we obtain the following stability regret for RRM.

\begin{restatable}{theorem}{thmrrm}
    \label{thm:rrm}
    Suppose that $\gD(\theta)$ is $\epsilon$-sensitive \eqref{eq:sensitivity}, $\ell(z, \theta)$ is $\mu$-strongly convex in $\theta$ \eqref{eq:strong_convexity}
    and $\beta_{z}$-smooth in $z$ \eqref{eq:smoothness_z},
    and $\PR_t (\theta)$ is $L$-Lipschitz \eqref{eq:lipschitz} in $\theta$.
    Then, if $\epsilon < \frac{\mu}{(1 - \alpha_{\min}) \beta_{z}}$, where $\alpha_{\min} = \min_{t \in [T]} \alpha_{t}$, the iterates of $\ARRM$ satisfy
    \begin{align}
        \mathrm{Reg}_{T}^{ \PS } (\gA_{\mathrm{RRM}}) &\le \frac{L}{1 - \gamma} \cdot \bigopen{ \norm{ \theta_{1} - \theta_{1}^{\PS} } + \sum_{t=1}^{T-1} \Delta_{t}^{\PS} },
        \label{eq:rrm}
    \end{align}
    for $\gamma = \frac{(1 - \alpha_{\min}) \epsilon \beta_{z}}{\mu}$.
\end{restatable}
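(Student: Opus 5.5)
The plan is to reduce regret to a tracking error and then unroll a contraction with drift. Because $\PR_t$ is $L$-Lipschitz and RRM is deterministic, each regret term satisfies $\PR_t(\theta_t)-\PR_t(\theta_t^\PS)\le L\norm{\theta_t-\theta_t^\PS}$. Writing $e_t \coloneqq \norm{\theta_t-\theta_t^\PS}$, this gives $\Reg_T^\PS(\ARRM)\le L\sum_{t=1}^T e_t$, so it suffices to bound $\sum_t e_t$ by $\frac{1}{1-\gamma}\bigl(e_1+\sum_{t=1}^{T-1}\Delta_t^\PS\bigr)$.

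The key recursion comes from \cref{lem:pislipschitz} and the triangle inequality:
\begin{align*}
e_{t+1}=\norm{\theta_{t+1}-\theta_{t+1}^\PS}\le \norm{\theta_{t+1}-\theta_t^\PS}+\norm{\theta_t^\PS-\theta_{t+1}^\PS}\le \gamma_t e_t+\Delta_t^\PS .
\end{align*}
Here $\gamma_t=\frac{(1-\alpha_t)\epsilon\beta_z}{\mu}$. Since $\alpha_t\ge\alpha_{\min}$, we have $\gamma_t\le\gamma$, and the hypothesis $\epsilon<\frac{\mu}{(1-\alpha_{\min})\beta_z}$ is exactly $\gamma<1$. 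So $e_{t+1}\le \gamma e_t+\Delta_t^\PS$ for all $t\in[T-1]$.

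There are two equivalent ways to finish.

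\textbf{Unrolling.} Iterating gives
\begin{align*}
e_t\le \gamma^{t-1}e_1+\sum_{s=1}^{t-1}\gamma^{t-1-s}\Delta_s^\PS .
\end{align*}
Summing over $t$ and exchanging the order of summation bounds each geometric tail by $\sum_{k\ge0}\gamma^k=\frac1{1-\gamma}$. This yields the claimed bound.

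\textbf{Summing the recursion directly.} Summing over $t=1,\dots,T-1$ gives
\begin{align*}
\sum_{t=2}^{T}e_t\le\gamma\sum_{t=1}^{T-1}e_t+\sum_{t=1}^{T-1}\Delta_t^\PS .
\end{align*}
Adding $e_1$ to both sides and using $\sum_{t=1}^{T-1}e_t\le\sum_{t=1}^T e_t$ gives $(1-\gamma)\sum_{t=1}^T e_t\le e_1+\sum_{t=1}^{T-1}\Delta_t^\PS$. Multiplying by $L$ finishes the proof.

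There is no serious obstacle. The only points needing care are that the bound in \cref{lem:pislipschitz} is stated relative to $\theta_t^\PS$ rather than $\theta_{t+1}^\PS$, which is why the drift term $\Delta_t^\PS$ enters through the triangle inequality, and that the time-varying contraction factors $\gamma_t$ must be uniformly bounded by $\gamma$. Implicitly we also use that $\theta_t^\PS$ exists and is unique for each $t$, as presupposed in the definition of the stability regret.
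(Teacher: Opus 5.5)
Your proof is correct and follows the paper's route: \cref{lem:pislipschitz} plus the triangle inequality gives $e_{t+1}\le\gamma_t e_t+\Delta_t^{\PS}$, and your ``summing the recursion directly'' finish is the paper's telescoping \cref{lem:rgd_telescope}. The only difference is that the paper carries the time-varying $\gamma_t$ through the telescoping and replaces them by $\gamma$ only at the end, via constant weights $w=(1-\gamma)/L$. You bound $\gamma_t\le\gamma$ up front instead, which is equivalent and slightly simpler.
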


We defer the proof of \cref{thm:rrm} to \cref{subsec:rrm}.

\begin{remark}
    Although we state all (strong) convexity and smoothness conditions directly in terms of the loss $\ell$ for simplicity, all results in the paper also hold under the corresponding conditions on the \emph{expected} loss; see \cref{rem:admissible} for details.
    Also, we show in \cref{lem:rgd_lislip} that $\ell$ being Lipschitz in both $z$ and $\theta$ implies that $\PR_t (\theta)$ is Lipschitz.
\end{remark}

In the standard performative setting with $\alpha_t \equiv 0$, convergence of RRM to stability required that the contraction factor $\gamma=\frac{\epsilon \beta_{z}}{\mu}$ be strictly less than one.
Under the partially performative setting with possible $\alpha_t > 0$, the contraction factor $\gamma$ gets smaller by a factor of $(1 - \alpha_t)$, and thus we only need a weaker sensitivity condition $\gamma=\frac{(1-\alpha_{\min})\epsilon \beta_{z}}{\mu} <1$, or equivalently
$\epsilon < \frac{\mu}{(1 - \alpha_{\min}) \beta_{z}}$. At the same time, large exogenous shifts may lead to large gaps between subsequent stable points $\Delta_{t}^{\PS}$.
To provide more intuition for $\Delta_{t}^{\PS}$, we state an upper bound on $\Delta_{t}^{\PS}$ as a function of the mixture coefficient $\alpha_t$, how fast the distribution $P_t$ varies, and the properties of the loss function.

\begin{restatable}{proposition}{propwassbound}
\label{prop:wassbound}
    Fix $t\in \mathbb{N}$.
    Suppose that $\gD(\theta)$ is $\epsilon$-sensitive \eqref{eq:sensitivity}
    and $\ell(z, \theta)$ is $\mu$-strongly convex in $\theta$ \eqref{eq:strong_convexity} and $\beta_{z}$-smooth in $z$ \eqref{eq:smoothness_z}.
    Further, suppose that $\gW_{1} (\gD (\theta_{t+1}^{\mathrm{PS}}), P_{t+1}) \leq C$ for some $C<\infty$. Then, if $\epsilon < \frac{\mu}{(1 - \alpha_t) \beta_{z}}$, we have
    \begin{align*}
        \Delta_{t}^{\PS} \le \frac{\beta_{z}}{\mu-\beta_{z}(1-\alpha_t)\epsilon} \left(\alpha_{t} \gW_{1} (P_{t}, P_{t+1}) + 
        \lvert \alpha_{t} - \alpha_{t+1} \rvert C\right).
    \end{align*}
\end{restatable}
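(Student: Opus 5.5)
The idea is to compare the two fixed-point characterizations of $\theta_t^{\PS}$ and $\theta_{t+1}^{\PS}$ through the first-order optimality conditions. We use the standard argument behind \cref{lem:pislipschitz}. For distributions $D,D'$, the minimizers $G(D)=\argmin_{\theta\in\Theta}\E_{Z\sim D}[\ell(Z,\theta)]$ satisfy
\[
\norm{G(D)-G(D')}\le \frac{\beta_z}{\mu}\gW_1(D,D').
\]
This follows from $\mu$-strong convexity together with the variational inequalities at the two minimizers, plus the fact that $z\mapsto \nabla_\theta\ell(z,\theta)$ is $\beta_z$-Lipschitz. The latter gives $\norm{\E_D\nabla_\theta\ell(Z,\theta)-\E_{D'}\nabla_\theta\ell(Z,\theta)}\le\beta_z\gW_1(D,D')$ via Kantorovich--Rubinstein duality, applied to each unit direction. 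We may take this as the content already established in the proof of \cref{lem:pislipschitz}. We then write $\theta_t^{\PS}=G(\gD_t(\theta_t^{\PS}))$ and $\theta_{t+1}^{\PS}=G(\gD_{t+1}(\theta_{t+1}^{\PS}))$, which gives
\[
\Delta_t^{\PS}\le \frac{\beta_z}{\mu}\gW_1\bigl(\gD_t(\theta_t^{\PS}),\gD_{t+1}(\theta_{t+1}^{\PS})\bigr).
\]

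Next we bound this Wasserstein distance by inserting the intermediate distribution $\gD_t(\theta_{t+1}^{\PS})=(1-\alpha_t)\gD(\theta_{t+1}^{\PS})+\alpha_tP_t$. Since $\gW_1$ is jointly convex under mixtures with equal weights, we get
\[
\gW_1\bigl(\gD_t(\theta_t^{\PS}),\gD_t(\theta_{t+1}^{\PS})\bigr)\le(1-\alpha_t)\epsilon\,\Delta_t^{\PS}
\]
by $\epsilon$-sensitivity. For the remaining piece, we compare $\gD_t(\theta_{t+1}^{\PS})$ with $\gD_{t+1}(\theta_{t+1}^{\PS})=(1-\alpha_{t+1})\gD(\theta_{t+1}^{\PS})+\alpha_{t+1}P_{t+1}$. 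To handle the change in weights, we pass through $(1-\alpha_t)\gD(\theta_{t+1}^{\PS})+\alpha_tP_{t+1}$. Swapping $P_t$ for $P_{t+1}$ at fixed weight $\alpha_t$ costs at most $\alpha_t\gW_1(P_t,P_{t+1})$, again by mixture convexity. Changing the weight from $\alpha_t$ to $\alpha_{t+1}$ between the two fixed distributions $\gD(\theta_{t+1}^{\PS})$ and $P_{t+1}$ costs at most $|\alpha_t-\alpha_{t+1}|\,\gW_1(\gD(\theta_{t+1}^{\PS}),P_{t+1})\le|\alpha_t-\alpha_{t+1}|C$. To see this, write both mixtures as a common part with weight $\min(\alpha_t,\alpha_{t+1})$ on $P_{t+1}$, plus the leftover mass $|\alpha_t-\alpha_{t+1}|$ moved between the two components, and use the coupling that leaves the common part in place.

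Combining the pieces gives
\[
\Delta_t^{\PS}\le\frac{\beta_z}{\mu}\Bigl((1-\alpha_t)\epsilon\Delta_t^{\PS}+\alpha_t\gW_1(P_t,P_{t+1})+|\alpha_t-\alpha_{t+1}|C\Bigr).
\]
Because $\epsilon<\mu/((1-\alpha_t)\beta_z)$, the coefficient $1-\beta_z(1-\alpha_t)\epsilon/\mu$ is positive. We can therefore move the $\Delta_t^{\PS}$ term to the left-hand side and divide, which yields exactly the stated constant $\beta_z/(\mu-\beta_z(1-\alpha_t)\epsilon)$.

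The main obstacle is the Wasserstein bookkeeping for mixtures, in particular the weight-change step, where a careful coupling is needed to obtain the factor $|\alpha_t-\alpha_{t+1}|\gW_1(\gD(\theta_{t+1}^{\PS}),P_{t+1})$. The choice of intermediate point matters here, because it determines which term the constant $C$ must bound. Placing $\theta_{t+1}^{\PS}$ and $P_{t+1}$ in the weight-change step is what matches the hypothesis on $C$.
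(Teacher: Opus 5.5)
Your proposal is correct and follows essentially the same route as the paper. The paper also reduces $\Delta_t^{\PS}$ to $\frac{\beta_z}{\mu}\gW_1\bigl(\gD_t(\theta_t^{\PS}),\gD_{t+1}(\theta_{t+1}^{\PS})\bigr)$ using the two variational inequalities, strong monotonicity, and Kantorovich--Rubinstein duality; it then bounds this distance with the same mixture decomposition (packaged as \cref{lem:wassone}, which merges your first two triangle steps into one joint-convexity step before the weight change) and rearranges using $\epsilon<\mu/((1-\alpha_t)\beta_z)$.
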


We defer the proof of \cref{prop:wassbound} to \cref{subsec:wassbound}.
We observe that $\Delta_t^\PS$ is driven by (i) how fast the exogenous component drifts, $\gW_{1} (P_{t}, P_{t+1})$; (ii) the contribution of the exogenous component $\alpha_t$; and (iii) the local fluctuations in the mixture coefficient, $|\alpha_t - \alpha_{t+1}|$. In particular, $\Delta_t^\PS$ is small whenever $\alpha_t$ is consistently small or the distributions $P_t$ do not vary significantly.

As a special case, suppose that $\alpha_{t} \le c t^{-b}$, for some $c, b>0$, and that $\gW_{1} (P_{t}, P_{t+1})$ is uniformly bounded for all $t$. Then, if $\epsilon < \mu/\beta_{z}$, we can conclude
\begin{align*}
    \mathrm{Reg}_{T}^{ \PS } (\gA_{\mathrm{RRM}})
    = \mathcal{O} \bigopen{ 1 + \sum_{t=1}^{T-1} \alpha_{t} }.
\end{align*}

For the case $b\in (0,1)$, we have $\sum_{t=1}^{T-1} \alpha_{t} = \gO(T^{1-b})$, and therefore $\mathrm{Reg}_{T}^{\PS} (\ARRM) = \gO(T^{1-b})$.
Similarly, for $b = 1$ we have $\sum_{t=1}^{T-1} \alpha_{t} = \mathcal{O} (\log T)$, and for $b > 1$ we have $\sum_{t=1}^{T-1} \alpha_{t} = \mathcal{O} (1)$, each implying a regret upper bound of the same rate, respectively.

\section{Gradient descent under partial performativity}
\label{sec:4}

Next, we consider several variants of gradient descent and prove their stability regret bounds. We begin by considering repeated gradient descent (RGD) with access to the full expected gradient, after which we move on to stochastic gradient descent. In the standard performative prediction setting, the former was studied by \cite{perdomo20, mendlerdunner20} and the latter was studied by \cite{mendlerdunner20, drusvyatskiy2023stochastic}, among others.

\subsection{Repeated gradient descent}
\label{sec:rgd_main}

First, we consider repeated gradient descent (RGD), 
\begin{align*}
\tag{RGD}
    \theta_{t+1} &= \Pi_{\Theta} \big( \theta_{t} - \eta_{t} \E_{Z\sim \gD_t(\theta_t)}[\nabla \ell(Z,\theta_t)] \big),
\end{align*}
where $\eta_t>0$ is a step size sequence. We denote this algorithm by $\gA_{\mathrm{RGD}}$.

We obtain a similar rate for the stability regret bound as in the case of RRM.

\begin{restatable}{theorem}{thmrgd}%
    \label{thm:rgd}
    Suppose that $\gD(\theta)$ is $\epsilon$-sensitive \eqref{eq:sensitivity}, $\ell(z, \theta)$ is $\mu$-strongly convex \eqref{eq:strong_convexity} and $\beta_{\theta}$-smooth \eqref{eq:smoothness_theta} in $\theta$ and $\beta_z$-smooth \eqref{eq:smoothness_z} in $z$, and $\PR_t (\theta)$ is $L$-Lipschitz \eqref{eq:lipschitz} in $\theta$.
    Then, if $\epsilon < \frac{\mu}{(1 - \alpha_{\min}) \beta_{z}}$, where $\alpha_{\min} = \min_{t \in [T]} \alpha_{t}$,
    the iterates of $\gA_{\mathrm{RGD}}$ with step size
    \begin{align*}
        \eta_{t} &= \frac{\mu - (1 - \alpha_{t}) \epsilon \beta_{z}}{2 ( \beta_{\theta}^{2} + (1 - \alpha_{t})^{2} \epsilon^{2} \beta_{z}^{2} )}
    \end{align*}
    satisfy
    \begin{align}
        \mathrm{Reg}_{T}^{ \PS } (\gA_{\mathrm{RGD}}) &\le \frac{L}{1 - \overline{\gamma}} \cdot \bigopen{ \norm{ \theta_{1} - \theta_{1}^{\PS} } + \sum_{t=1}^{T-1} \Delta_{t}^{\PS} },
        \label{eq:rgd}
    \end{align}
    for $\overline{\gamma} = 1 - \frac{(\mu - (1 - \alpha_{\min}) \epsilon \beta_{z})^2}{4 ( \beta_{\theta}^{2} + (1 - \alpha_{\min})^{2} \epsilon^{2} \beta_{z}^{2} )}$.
\end{restatable}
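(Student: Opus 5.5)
The plan is to run the same ``contract, then telescope'' argument as for \cref{thm:rrm}. The one-step contraction of \cref{lem:pislipschitz} is replaced by a one-step contraction for a single projected gradient step. Write $e_t \coloneqq \norm{\theta_t - \theta_t^{\PS}}$ and $G_t(\theta,\theta') \coloneqq \E_{Z\sim\gD_t(\theta')}[\nabla_\theta \ell(Z,\theta)]$.

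\textbf{Step 1 (fixed-point form of $\theta_t^{\PS}$).} By first-order optimality of $\theta_t^{\PS}$ for the convex function $\mathrm{DPR}_t(\cdot,\theta_t^{\PS})$ on $\Theta$,
\[
\langle G_t(\theta_t^{\PS},\theta_t^{\PS}),\, \theta-\theta_t^{\PS}\rangle \ge 0 \quad \text{for all } \theta\in\Theta .
\]
Hence $\theta_t^{\PS}=\Pi_\Theta\bigl(\theta_t^{\PS}-\eta_t G_t(\theta_t^{\PS},\theta_t^{\PS})\bigr)$. Since the projection is nonexpansive,
\[
\norm{\theta_{t+1}-\theta_t^{\PS}}^2 \le \norm{(\theta_t-\theta_t^{\PS}) - \eta_t v_t}^2, \qquad v_t \coloneqq G_t(\theta_t,\theta_t)-G_t(\theta_t^{\PS},\theta_t^{\PS}).
\]

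\textbf{Step 2 (splitting $v_t$).} Split $v_t=a_t+b_t$ with $a_t = G_t(\theta_t,\theta_t)-G_t(\theta_t^{\PS},\theta_t)$ and $b_t = G_t(\theta_t^{\PS},\theta_t)-G_t(\theta_t^{\PS},\theta_t^{\PS})$.
\begin{itemize}
\item Strong convexity gives $\langle a_t,\theta_t-\theta_t^{\PS}\rangle\ge \mu e_t^2$, and $\beta_\theta$-smoothness gives $\norm{a_t}\le \beta_\theta e_t$.
\item In $b_t$ the exogenous parts $\alpha_t P_t$ cancel, leaving $(1-\alpha_t)$ times a difference of expectations under $\gD(\theta_t)$ and $\gD(\theta_t^{\PS})$. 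Each coordinate direction $u^\top\nabla_\theta\ell(z,\theta_t^{\PS})$ is $\beta_z$-Lipschitz in $z$. Kantorovich--Rubinstein duality with $\epsilon$-sensitivity then gives $\norm{b_t}\le(1-\alpha_t)\epsilon\beta_z e_t$; this is the computation already used in \cref{lem:pislipschitz}.
\end{itemize}

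\textbf{Step 3 (expanding the square).} Using $\norm{a_t+b_t}^2\le 2\norm{a_t}^2+2\norm{b_t}^2$ and writing $c_t=\mu-(1-\alpha_t)\epsilon\beta_z>0$, $B_t=\beta_\theta^2+(1-\alpha_t)^2\epsilon^2\beta_z^2$, we get
\[
e_{t+1,\star}^2 \coloneqq \norm{\theta_{t+1}-\theta_t^{\PS}}^2 \le \bigl(1-2\eta_t c_t+2\eta_t^2 B_t\bigr)e_t^2 .
\]
With $\eta_t=c_t/(2B_t)$ the bracket equals $1-c_t^2/(2B_t)$. Since $\sqrt{1-x}\le 1-x/2$, this gives $\norm{\theta_{t+1}-\theta_t^{\PS}}\le \bigl(1-\tfrac{c_t^2}{4B_t}\bigr)e_t$.

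Then I check monotonicity in $\alpha_t$: $c_t$ is nondecreasing and $B_t$ is nonincreasing in $\alpha_t$, so $1-c_t^2/(4B_t)\le\overline\gamma$ because $\alpha_t\ge\alpha_{\min}$. Also $\overline\gamma\in[0,1)$, since $c^2\le\mu^2\le\beta_\theta^2\le B$ and $c>0$. This exact square-root step is what delivers the factor $\overline\gamma$ in the statement rather than its square root.

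\textbf{Step 4 (telescoping).} The triangle inequality gives $e_{t+1}\le\overline\gamma e_t+\Delta_t^{\PS}$. Unrolling,
\[
e_t\le\overline\gamma^{t-1}e_1+\sum_{s<t}\overline\gamma^{t-1-s}\Delta_s^{\PS},
\]
and summing geometric series yields $\sum_{t=1}^T e_t\le\frac{1}{1-\overline\gamma}\bigl(e_1+\sum_{t=1}^{T-1}\Delta_t^{\PS}\bigr)$. Finally, $L$-Lipschitzness of $\PR_t$ gives $\PR_t(\theta_t)-\PR_t(\theta_t^{\PS})\le L e_t$; summing gives \eqref{eq:rgd}.

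The main obstacle is Step 3: keeping the constants tight enough, via the chosen $\eta_t$ together with $\sqrt{1-x}\le1-x/2$, to land exactly on $\overline\gamma$ and not on a weaker factor.
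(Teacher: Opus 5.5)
Your proposal is correct and follows the paper's proof essentially step for step. The shared elements are the fixed-point characterization $\theta_t^{\PS}=\Pi_\Theta(\theta_t^{\PS}-\eta_t\nabla \mathrm{R}_{\gD_t^{\PS}}(\theta_t^{\PS}))$ together with nonexpansiveness of the projection, the strong-monotonicity and sensitivity bounds, and the same per-step factor $1-c_t^2/(4B_t)$ (the paper's $1-2x\le(1-x)^2$ is your $\sqrt{1-x}\le 1-x/2$). The differences are cosmetic: you use one split of $v_t$ for both the inner-product and norm terms, and you unroll the recursion with the uniform factor $\overline{\gamma}$, checking monotonicity in $\alpha_t$ explicitly. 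The paper instead invokes its telescoping lemma with time-varying $\gamma_t$ and constant weights $w=(1-\overline{\gamma})/L\le(1-\gamma_t)/L$.
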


We defer the proof of \cref{thm:rgd} to \cref{subsec:rgd}. As in the case of RRM, we observe a regret upper bound of $\gO (1 + \sum_{t=1}^{T-1} \Delta_{t}^{\PS})$. The dependence on the path length mirrors existing results for online gradient descent \cite{mokhtari16}.

Interestingly, \citet{mendlerdunner20} assume that the stable point lies in the interior of $\Theta$. Our proof of \Cref{thm:rgd} bypasses such an assumption, and thus weakens the conditions for convergence to stability in fully performative environments, obtained by taking $\alpha_t \equiv 0$.


\subsection{Stochastic gradient descent: greedy and lazy deploy}
\label{sec:rsgdg_main}

We consider two variants of stochastic gradient descent. Both sequentially collect samples from the currently deployed model and take a single gradient step after each. They differ in their data collection schedule: \emph{greedy deploy} collects a single data point between two deployments, while \emph{lazy deploy} collects multiple data points and takes multiple offline steps between two deployments.

Before formally stating the algorithms and their guarantees, we introduce a standard assumption in stochastic optimization. We assume that the stochastic gradients have a quadratic-bounded variance: for all possible $\gD_t$, for some $C_V$ we have
\begin{align}
\label{ass:bv}
        \E_{Z \sim \gD_t} [\norm{\nabla_{\theta} \ell (Z, \theta)}^2]
        &\le \sigma^2 + C_V^2 \norm{\theta - \theta^{\star}_{\gD_t}}^2,
\end{align}
where $\theta^{\star}_{\gD_t} \coloneqq \argmin_\theta \E_{Z\sim \gD_t}[\ell(Z, \theta)]$.

\paragraph{Greedy deploy.}
Consider the \textit{greedy deploy} \citep{mendlerdunner20} variant of stochastic gradient descent (SGD):
\begin{align*}
\tag{greedy SGD}
    \theta_{t+1} &= \Pi_{\Theta} ( \theta_{t} - \eta_{t} \nabla_{\theta} \ell (Z_{t}, \theta_{t}) ), \quad \text{where} \ Z_{t} \sim \gD_{t},
\end{align*}
which uses a single sample per each deployment.
We denote the algorithm by $\ARSGDG$.

\begin{restatable}{proposition}{thmrsgdg}
    \label{thm:rsgdg}
    Let the same assumptions as in \cref{thm:rgd} and the quadratic variance bound \eqref{ass:bv} hold.
    Then, the iterates of $\ARSGDG$ with step size $\eta_{t} \propto \frac{1}{t + t_0}$, for some constant $t_0 > 0$, satisfy
    \begin{align}
        \mathrm{Reg}_{T}^{\PS} (\ARSGDG)
        &\le \gO \left( T^{1/2} + T^{1/4} \left( \sum_{t=1}^{T-1} (t + t_0 + 1)^{5/2} (\Delta_{t}^{\PS})^2 \right)^{\!\!1/2} \, \right).
        \label{eq:rsgdg}
    \end{align}
\end{restatable}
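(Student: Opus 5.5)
The plan is to derive a one-step recursion for the mean-squared distance $a_t \coloneqq \E\norm{\theta_t - \theta_t^{\PS}}^2$, solve it with the $1/(t+t_0)$ schedule, and convert it to regret via Lipschitzness. The regret conversion is immediate: $\Reg_T^{\PS} \le L\sum_{t=1}^T \E\norm{\theta_t-\theta_t^\PS} \le L\sum_{t=1}^T \sqrt{a_t}$. So everything reduces to bounding $a_t$.

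\textbf{One-step recursion.} Fix $t$ and condition on $\theta_t$. Let $G_t(\theta) \coloneqq \E_{Z\sim\gD_t(\theta_t)}[\nabla_\theta\ell(Z,\theta)]$. The projection $\Pi_\Theta$ is nonexpansive, and $\theta_t^\PS = \Pi_\Theta(\theta_t^\PS - \eta_t G^\PS_t(\theta_t^\PS))$ by the first-order optimality of the stable point, where $G^\PS_t$ denotes the expected gradient under $\gD_t(\theta_t^\PS)$. Expanding the square gives
\[
\E_t\norm{\theta_{t+1}-\theta_t^\PS}^2 \le \norm{\theta_t-\theta_t^\PS}^2 - 2\eta_t\langle G_t(\theta_t)-G_t^\PS(\theta_t^\PS),\theta_t-\theta_t^\PS\rangle + \eta_t^2\,\E_t\norm{\nabla\ell(Z_t,\theta_t)-G^\PS_t(\theta_t^\PS)}^2 .
\]
For the inner product, I reuse the ingredients of \cref{thm:rgd}. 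Strong convexity gives $\mu\norm{\theta_t-\theta_t^\PS}^2$. The Kantorovich--Rubinstein duality, together with $\beta_z$-smoothness and $\gW_1(\gD_t(\theta),\gD_t(\theta'))\le(1-\alpha_t)\epsilon\norm{\theta-\theta'}$, costs at most $(1-\alpha_t)\epsilon\beta_z\norm{\theta_t-\theta_t^\PS}^2$. The net drift is therefore $2\eta_t(\mu-(1-\alpha_{\min})\epsilon\beta_z)\eqqcolon 2\eta_t\lambda$ times the squared distance. For the second-moment term, I use \eqref{ass:bv} with $\theta^\star_{\gD_t}$. By the argument of \cref{lem:pislipschitz}, $\norm{\theta^\star_{\gD_t}-\theta_t^\PS}\le\gamma\norm{\theta_t-\theta_t^\PS}$, so this term is at most $\sigma^2 + C\norm{\theta_t-\theta_t^\PS}^2$. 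The result is $\E_t\norm{\theta_{t+1}-\theta_t^\PS}^2\le(1-2\lambda\eta_t+C\eta_t^2)\norm{\theta_t-\theta_t^\PS}^2 + c\sigma^2\eta_t^2$.

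\textbf{Switching comparators.} To pass from $\theta_t^\PS$ to $\theta_{t+1}^\PS$, I use Young's inequality with a time-dependent weight:
\[
\norm{x-\theta_{t+1}^\PS}^2\le(1+\rho_t)\norm{x-\theta_t^\PS}^2+(1+1/\rho_t)(\Delta_t^\PS)^2 .
\]
Choosing $\rho_t\asymp\lambda\eta_t/2$ keeps the contraction at $1-\lambda\eta_t$, for $t_0$ large enough to absorb the $\eta_t^2$ terms. This yields
\[
a_{t+1}\le\Bigl(1-\tfrac{k}{t+t_0}\Bigr)a_t + \frac{c_1}{(t+t_0)^2}+c_2(t+t_0)(\Delta_t^\PS)^2,
\]
where I take the constant in $\eta_t$ so that $k\ge$ some fixed value, for example $k=3/2$ or larger. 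Unrolling with the standard product bound $\prod_{s=j+1}^{t}(1-k/(s+t_0))\le((j+t_0)/(t+t_0))^{k}$ gives
\[
a_t\lesssim \frac{1}{t+t_0}+\frac{1}{(t+t_0)^{k}}\sum_{j<t}(j+t_0)^{k+1}(\Delta_j^\PS)^2 .
\]

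\textbf{Summing.} I then bound $\sum_t\sqrt{a_t}\le\sum_t (t+t_0)^{-1/2} + \sum_t\sqrt{b_t}$. The first sum is $O(T^{1/2})$. For the second, Cauchy--Schwarz gives $\sum_t\sqrt{b_t}\le\sqrt{T\sum_t b_t}$. Exchanging the order of summation, with $\sum_{t>j}(t+t_0)^{-k}\lesssim (j+t_0)^{1-k}$, yields $\sum_t b_t\lesssim\sum_j (j+t_0)^2(\Delta_j^\PS)^2$. This would give a $T^{1/2}$ prefactor with exponent $2$, whereas the stated bound has a $T^{1/4}$ prefactor and exponent $5/2$. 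To match it, I would instead apply Hölder differently: write $\sqrt{b_t}=(t+t_0)^{-1/4}\cdot(t+t_0)^{1/4}\sqrt{b_t}$, so that $\sum_t\sqrt{b_t}\le(\sum_t (t+t_0)^{-1/2})^{1/2}(\sum_t (t+t_0)^{1/2}b_t)^{1/2}$. The first factor is $O(T^{1/4})$. In the second, the exchange of summation produces the weight $(j+t_0)^{k+1}\cdot(j+t_0)^{3/2-k}=(j+t_0)^{5/2}$, which matches \eqref{eq:rsgdg}.

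The main obstacle is the comparator switch. The Young's-inequality penalty $(1+1/\rho_t)\sim t$ is exactly what creates the polynomial weights, and the constants must be tuned carefully ($\rho_t$, $t_0$, and $k>3/2$) so the $\eta_t^2$ variance terms and the $\gamma$-dependent second-moment terms are absorbed without destroying the contraction.
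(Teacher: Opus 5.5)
Your proposal is correct and follows the paper's proof essentially step for step: a mean-square one-step recursion with drift $\mu-(1-\alpha_{\min})\epsilon\beta_z$, a comparator switch costing $\asymp (t+t_0)(\Delta_t^{\PS})^2$ (your Young's inequality with $\rho_t\asymp\eta_t$ is the paper's triangle inequality plus AM--GM), and unrolling to the $(t+t_0+1)^{5/2}$ weights. The paper finishes more simply: it keeps $k=3/2$, bounds each partial sum by $S_T=\sum_{s<T}(s+t_0+1)^{5/2}(\Delta_s^{\PS})^2$, and uses $\sum_{t\le T}(t+t_0)^{-3/4}=O(T^{1/4})$, whereas your H\"older step needs $k>3/2$ strictly, since at $k=3/2$ it loses a logarithmic factor. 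There is also one small gap to patch: because you center the stochastic gradient at $G_t^{\PS}(\theta_t^{\PS})$, which need not vanish when $\theta_t^{\PS}$ lies on the boundary, the bound $\sigma^2+C\norm{\theta_t-\theta_t^{\PS}}^2$ does not follow from \eqref{ass:bv} alone. It needs a bias--variance split plus $\beta_\theta$-smoothness and sensitivity; the paper sidesteps this by writing $\theta_t^{\PS}=\Pi_\Theta(\theta_t^{\PS})$ and putting first-order optimality into the inner product instead.
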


We defer the proof of \cref{thm:rsgdg} to \cref{subsec:rsgdg}.
We can observe from \eqref{eq:rsgdg} that $\ARSGDG$ enjoys sublinear regret whenever we have $\Delta_{t}^{\PS} \le Mt^{-b}$ for $b > 1$,
\begin{align*}
    \mathrm{Reg}_{T}^{\PS} (\ARSGDG) &= \begin{cases}
        \gO (T^{2-b}), & \text{if } b\in(1, \frac{3}{2}), \\
        \gO (T^{1/2}), & \text{if } b\geq \frac{3}{2}.
    \end{cases}
\end{align*}
As discussed in Section \ref{sec:3}, we can have $\Delta_{t}^{\PS} \le Mt^{-b}$ when $\alpha_t \le ct^{-b}$ for some constant $c > 0$.

\begin{remark}
    Unlike the other results in \cref{sec:3} and \cref{sec:4}, the upper bound in \cref{thm:rsgdg} does not guarantee sublinear regret of $\ARSGDG$ when $\Delta_{t}^{\PS} \le Mt^{-b}$ decays with a rate of $b\in (0,1)$.
    We leave the search for a tighter upper bound or a matching lower bound to future work.
\end{remark}


\paragraph{Lazy deploy.}
Consider the \textit{lazy deploy} \citep{mendlerdunner20} variant of stochastic gradient descent (SGD):
\begin{align*}
\tag{lazy SGD}
    \varphi_{t, j+1} = \Pi_{\Theta} ( \varphi_{t, j} - \eta_{t, j} \nabla_{\theta} \ell (Z_{t, j}, \varphi_{t, j}) ),
\end{align*}
where $Z_{t, j} \stackrel{\text{i.i.d.}}{\sim} \gD_{t}$ for $j \in [n(t)]$, with $\varphi_{t, 1} = \theta_{t}$ and $\varphi_{t, n(t)+1} = \theta_{t+1}$.
In other words, $\varphi_{t,j}$ are internal iterates computed between two deployments $\theta_t$ and $\theta_{t+1}$. Within each deployment, we collect $n(t)$ samples from $\gD_{t}$, and we assume $n(t) \ge n_0 t^{r}$ for some $r > 0$. We denote this algorithm by $\ARSGDL$.

\begin{restatable}{proposition}{thmrsgdl}
    \label{thm:rsgdl}
    Suppose that the same assumptions as in \cref{thm:rsgdg} hold.
    Then, the iterates of $\ARSGDL$ with step sizes $\eta_{t, j} \propto \frac{1}{j + t_0}$, for some constant $t_0>0$, and $n(t) \ge n_0 t^{r}$ with a large enough $n_0$ satisfy
    \begin{align}
        \mathrm{Reg}_{T}^{\PS} (\ARSGDL)
        &\le \gO \left( \sum_{t=1}^{T-1} t^{-r/2} + \sum_{t=1}^{T-1} \Delta_{t}^{\PS} \right).
        \label{eq:rsgdl}
    \end{align}
\end{restatable}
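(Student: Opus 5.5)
The plan is to reduce lazy SGD to an inexact version of RRM and then run the same telescoping argument as in \cref{thm:rrm}. Let $\theta^{\RRM}_{t+1} \coloneqq \argmin_{\theta\in\Theta} \E_{Z\sim\gD_t(\theta_t)}[\ell(Z,\theta)]$ be the exact RRM step from the deployed point $\theta_t$. Within deployment $t$ the distribution $\gD_t=\gD_t(\theta_t)$ is fixed, so the inner loop is plain projected SGD on the $\mu$-strongly convex objective $\E_{Z\sim\gD_t}[\ell(Z,\cdot)]$, whose minimizer over $\Theta$ is exactly $\theta^{\RRM}_{t+1}$.

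\textbf{Step 1: inner-loop error.} Conditional on $\theta_t$, I would apply the standard last-iterate analysis of projected SGD with step sizes $\eta_{t,j}\propto 1/(j+t_0)$. Under the quadratic variance bound \eqref{ass:bv}, the $C_V^2\|\varphi-\theta^\star\|^2$ term is absorbed by strong convexity once $t_0$ is large enough. This gives
\[\E\bigl[\|\theta_{t+1}-\theta^{\RRM}_{t+1}\|^2 \mid \theta_t\bigr] \le \frac{c_1\,\|\theta_t-\theta^{\RRM}_{t+1}\|^2 + c_2\sigma^2}{n(t)+t_0}.\]
Since $\Theta$ is compact, the first term is at most $c_1 D_\Theta^2$, so the right-hand side is $\gO(1/n(t)) = \gO(t^{-r})$. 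By Jensen's inequality, $\E\|\theta_{t+1}-\theta^{\RRM}_{t+1}\| \le c_3\, t^{-r/2}$.

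\textbf{Step 2: contraction recursion.} \cref{lem:pislipschitz} gives $\|\theta^{\RRM}_{t+1}-\theta_t^{\PS}\|\le\gamma\|\theta_t-\theta_t^{\PS}\|$ with $\gamma<1$. Writing $e_t=\E\|\theta_t-\theta_t^{\PS}\|$ and using the triangle inequality,
\[e_{t+1} \le \gamma e_t + c_3 t^{-r/2} + \Delta_t^{\PS}.\]
Unrolling and summing over $t$, the geometric factors sum to at most $1/(1-\gamma)$, so
\[\sum_{t=1}^T e_t \le \frac{1}{1-\gamma}\Bigl(e_1 + \sum_{t=1}^{T-1}\bigl(c_3t^{-r/2}+\Delta_t^{\PS}\bigr)\Bigr).\]

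\textbf{Step 3: regret.} The $L$-Lipschitzness of $\PR_t$ gives $\Reg^{\PS}_T\le L\sum_t e_t$, which is the claimed bound \eqref{eq:rsgdl}, with $e_1\le D_\Theta$ absorbed into the constant.

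\textbf{Main obstacle.} The delicate part is Step 1. The factor $t_0$ and the requirement that $n_0$ be large must make the SGD recursion contract despite the $C_V^2$ term, which needs $\eta_{t,j} C_V^2$ small relative to $\mu$. The rate also has to be uniform in $t$, since the relevant distribution $\gD_t$ changes with $t$. I would handle this with the usual induction $\E\|\varphi_{t,j}-\theta^\star\|^2\le \frac{A}{j+t_0}$, where $A$ depends only on $\mu$, $\sigma$, $C_V$, and $D_\Theta$ and is therefore uniform across $t$. Projection is harmless because $\theta^\star\in\Theta$ and projection onto a convex set is nonexpansive. A second point to check is that \eqref{ass:bv} is stated with the unconstrained minimizer $\theta^\star_{\gD_t}$, whereas the analysis uses the constrained minimizer over $\Theta$. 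By the triangle inequality, the gap between the two inflates $\sigma^2$ by at most a constant times $C_V^2\|\theta^\star_{\gD_t}-\theta^{\RRM}_{t+1}\|^2$. This is bounded if the unconstrained minimizer lies within bounded distance of $\Theta$, which I would take as implicit in the assumption. Otherwise I would interpret $\theta^\star$ in \eqref{ass:bv} as the minimizer over $\Theta$.
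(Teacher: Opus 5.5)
Your proof is correct, and it follows a different route from the paper's. Both arguments use the same two ingredients. The first is the last-iterate bound for the inner SGD loop around the exact RRM point $\theta_{\gD_t}$ (your $\theta^{\RRM}_{t+1}$), which is \eqref{eq:rsgdl_1}. The second is the contraction of \cref{lem:pislipschitz}.

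The paper does not use compactness to control the inner error. It bounds $\|\theta_t-\theta_{\gD_t}\|\le(1+\gamma_t)\|\theta_t-\theta_t^{\PS}\|$, so the inner error enters multiplicatively, as $\frac{4(1+t_0)}{n(t)+t_0}\E\|\theta_t-\theta_t^{\PS}\|^2$. It then runs a second-moment recursion $\E\|e_{t+1}\|^2\le\tilde\gamma\,\E\|e_t\|^2+Jt^{-r}+(1+\tau)(\Delta_t^{\PS})^2$. Getting there needs Young-type splittings with free parameters $\zeta,\tau$, and a large $n_0$ to keep $\tilde\gamma<1$. The recursion is unrolled with \cref{lem:unroll}, and the paper then takes square roots and rearranges a double sum to return to $\sum_t\Delta_t^{\PS}$.

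You instead bound $\|\theta_t-\theta^{\RRM}_{t+1}\|\le D_\Theta$, which makes the inner error a purely additive $c_3t^{-r/2}$ term. The contraction then comes entirely from exact RRM. Your first-moment recursion telescopes through \cref{lem:rgd_telescope} exactly as in \cref{thm:rrm}.

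What your route buys is a shorter argument in which only $t_0$ must be large; any $n_0>0$ works. So the $n_0$ you mention in your ``main obstacle'' discussion plays no role for you.

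What the paper's route buys is a sharper coefficient. Its coefficient on $\sum_t t^{-r/2}$ is proportional to $\sigma$ (through $\sqrt{J}$) and does not involve $D_\Theta$. In the noiseless case $\sigma=0$ it therefore gives $\gO(1+\sum_t\Delta_t^{\PS})$. Your additive term scales like $\sqrt{(1+t_0)D_\Theta^2+4\sigma^2/\mu^2}$ and persists even without noise. Both routes suffice for the stated bound.

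Finally, your reading of $\theta^\star_{\gD_t}$ in \eqref{ass:bv} as the minimizer over $\Theta$ is the one the paper implicitly adopts in \cref{lem:rsgdl_inner}, where it applies the first-order optimality condition to that point.
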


We defer the proof of \cref{thm:rsgdl} to \cref{subsec:rsgdl}.
Summing up over $t$, we see that the first term in the regret is upper bounded by $\gO(1)$ for $r >2$, $\gO(\log T)$ for $r=2$, and $\gO (T^{1-r/2})$ for $r\in(0,2)$.

\section{Achieving optimality under partial performativity}
\label{sec:5}

We now turn our attention to studying performative optimality regret. Performative optimality is harder to achieve than stability because computing the gradients of $\PR_t(\theta)$ is rarely feasible, due to the (typically complex) dependence of $\PR_t(\theta)$ on $\theta$ through $\gD_t(\theta)$. We thus turn to zeroth-order algorithms, which only require querying $\PR_t(\theta)$, not its gradient.

Throughout, we make a couple of mild assumptions: that $|\ell(z, \theta)| \le F$ for some finite $F$, for all $z,\theta$, and that $\Theta$ contains a ball of radius $r$ centered at $0$, for some $r>0$.

To prove that the regret vanishes, we will need $\PR_t(\theta)$ to be convex or strongly convex. This is not guaranteed by the convexity of $\ell(z,\theta)$ alone. Miller et al.~\cite{miller21} identify conditions that ensure convexity of the performative risk, the most important one being a so-called mixture dominance condition. Miller et al. show that this condition is satisfied by location-scale families. In the partially performative setting, convexity is easier to ensure, since the  $\alpha_t$-fraction of the objective corresponding to the exogenous component is convex whenever $\ell(z,\theta)$ is convex. We formalize this below.

\begin{proposition}
    Suppose that $\gD(\theta)$ is $\epsilon$-sensitive \eqref{eq:sensitivity}, and $\ell(z, \theta)$ is $\mu$-strongly convex in $\theta$ \eqref{eq:strong_convexity} and $\beta_{z}$-smooth in $z$ \eqref{eq:smoothness_z}.
    Assume also the mixture dominance condition \cite{miller21}.
    Then, $\mathrm{PR}_{t}$ is $(\mu - 2 (1 - \alpha_{t}) \epsilon \beta_{z})$-convex.
\end{proposition}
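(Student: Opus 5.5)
The plan is to follow the convexity argument of Miller et al.~\cite{miller21}, but apply it only to the performative component and keep the exogenous component exactly as it is. By linearity of expectation in the mixture \eqref{eq:pp},
\begin{align*}
    \PR_t(\theta) = (1-\alpha_t)\,\E_{Z\sim\gD(\theta)}[\ell(Z,\theta)] + \alpha_t\,\E_{Z\sim P_t}[\ell(Z,\theta)].
\end{align*}
The second term does not depend on $\theta$ through the distribution. By \eqref{eq:strong_convexity} it is therefore $\mu$-strongly convex, and multiplying by $\alpha_t$ makes it $\alpha_t\mu$-strongly convex.

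For the first term, I would use the decoupled risk $\mathrm{DPR}^{0}(\theta,\theta') := \E_{Z\sim\gD(\theta')}[\ell(Z,\theta)]$, which satisfies $\PR^0(\theta)=\mathrm{DPR}^0(\theta,\theta)$. Miller et al.~show that, under $\epsilon$-sensitivity, $\mu$-strong convexity, $\beta_z$-smoothness in $z$, and mixture dominance, $\PR^0$ is $(\mu-2\epsilon\beta_z)$-convex. I will reproduce that argument in its first-order form. Fix $\theta,\theta'$. Mixture dominance says that $\theta'' \mapsto \mathrm{DPR}^0(\theta,\theta'')$ is convex along the segment, so
\begin{align*}
    \mathrm{DPR}^0(\theta,\theta')\ \ge\ \mathrm{DPR}^0(\theta,\theta)+\langle \nabla_2 \mathrm{DPR}^0(\theta,\theta),\theta'-\theta\rangle .
\end{align*}
Strong convexity in the first argument gives
\begin{align*}
    \mathrm{DPR}^0(\theta',\theta')\ \ge\ \mathrm{DPR}^0(\theta,\theta')+\langle \nabla_1 \mathrm{DPR}^0(\theta,\theta'),\theta'-\theta\rangle+\tfrac{\mu}{2}\|\theta'-\theta\|^2 .
\end{align*}
To compare the gradient at $\theta'$ with the gradient at $\theta$, I use $\beta_z$-smoothness in $z$ (so $z\mapsto\nabla_\theta\ell(z,\theta)$ is $\beta_z$-Lipschitz) together with Kantorovich--Rubinstein duality and $\epsilon$-sensitivity. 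This gives
\begin{align*}
    \|\nabla_1\mathrm{DPR}^0(\theta,\theta')-\nabla_1\mathrm{DPR}^0(\theta,\theta)\|\le \epsilon\beta_z\|\theta-\theta'\| .
\end{align*}
Adding the two inequalities, the identity $\nabla\PR^0=\nabla_1\mathrm{DPR}^0+\nabla_2\mathrm{DPR}^0$ on the diagonal, and this error term produce the first-order condition with modulus $\mu-2\epsilon\beta_z$. The factor $2$ comes from symmetrizing: the cross term is bounded once in each direction, i.e.\ by applying the argument with $\theta$ and $\theta'$ swapped and adding. This yields
\begin{align*}
    \langle\nabla\PR^0(\theta')-\nabla\PR^0(\theta),\theta'-\theta\rangle\ge(\mu-2\epsilon\beta_z)\|\theta'-\theta\|^2 .
\end{align*}

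Combining the two pieces,
\begin{align*}
    \langle\nabla\PR_t(\theta')-\nabla\PR_t(\theta),\theta'-\theta\rangle\ \ge\ \bigl[(1-\alpha_t)(\mu-2\epsilon\beta_z)+\alpha_t\mu\bigr]\|\theta'-\theta\|^2 = \bigl(\mu-2(1-\alpha_t)\epsilon\beta_z\bigr)\|\theta'-\theta\|^2,
\end{align*}
which is the claim. Here ``$\kappa$-convex'' is read as the monotonicity inequality above with $\kappa$ possibly negative. If differentiability of the distribution map is unavailable, I would instead run the same argument with the zeroth-order (secant) version of mixture dominance along the segment $\theta_s=(1-s)\theta+s\theta'$.

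The main obstacle is the performative component: reproducing the $\mu-2\epsilon\beta_z$ bound of Miller et al.~carefully, in particular getting exactly the factor $2$ and making the Wasserstein step rigorous. The mixture step itself is immediate. I would first try simply invoking Miller et al.'s Theorem for $\gD$ and then averaging moduli linearly, as above.
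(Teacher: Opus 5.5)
Your proposal is correct and matches the reasoning the paper sketches. The paper gives no written proof of this proposition; it motivates it only by separating the $\alpha_t$-weighted exogenous risk, which is $\mu$-strongly convex, from $(1-\alpha_t)$ times the fully performative risk, which is $(\mu-2\epsilon\beta_z)$-convex by Miller et al., and then averaging the moduli to $\mu-2(1-\alpha_t)\epsilon\beta_z$ exactly as you do.

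On your stated worry about the factor $2$: it already appears in the one-sided first-order inequality you derive. There, strong convexity contributes only $\tfrac{\mu}{2}\|\theta'-\theta\|^2$, while the gradient-mismatch term costs a full $\epsilon\beta_z\|\theta'-\theta\|^2$. So symmetrizing does not produce the factor; it only converts the bound into monotonicity form.
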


If $\mu - 2 (1 - \alpha_{t}) \epsilon \beta_{z}$ is positive, then $\PR_t(\theta)$ is \emph{strongly} convex by that amount.
If $\mu - 2 (1 - \alpha_{t}) \epsilon \beta_{z}$ is negative, then $\PR_t(\theta)$ is \emph{weakly} convex by that amount, meaning that adding a quadratic regularizer of that magnitude makes $\PR_t(\theta)$ convex.

\subsection{Two-deployment algorithm}

Consider two-point zeroth-order gradient descent (ZGD2) \cite{agarwal10}: starting from an initial $\theta_1 \in \Theta$, we~set
\begin{align*}
\tag{ZGD2}
    \theta_{t+1} &= \Pi_{(1 - \rho) \Theta} \bigopen{ \theta_{t} - \eta_t \cdot \tilde{g}_{t} },\\
    \tilde{g}_{t} &= \big( \mathrm{PR}_{t} (\theta_{t} + \delta u_{t}) - \mathrm{PR}_{t} (\theta_{t} - \delta u_{t}) \big) u_{t} \frac{d}{2 \delta}, \text{ where } u_{t} \sim \mathrm{Unif} (\sS^{d-1}).
\end{align*}
Here, $\eta_t>0$ is a step size, $\delta>0$ and $\rho \in [0, 1]$ are hyperparameters, and $u_t$ is sampled uniformly on the $d$-dimensional unit sphere $\sS^{d-1}$.
In other words, we deploy two models: $\phi_t^+ = \theta_t + \delta u_t$ and $\phi_t^- = \theta_t - \delta u_t$, observe their corresponding risks $\mathrm{PR}_{t} (\phi_t^+)$ and $\mathrm{PR}_{t} (\phi_t^-)$, and use those values to perform an approximate gradient update on $\theta_t$.
Practically, one can think of the procedure as partitioning the population into two halves and deploying two different treatments $\phi_t^+$ and $\phi_t^-$ within each.
It is known that $\E [\tilde{g}_{t} | \theta_{t}] = \nabla \widehat{\mathrm{PR}}_{t} (\theta_{t})$ for
$\widehat{\mathrm{PR}}_{t} (\theta) \coloneqq \E_{v} [ \mathrm{PR}_{t} (\theta + \delta v) ]$ with $v$ sampled uniformly within the $d$-dimensional unit ball,
$v \sim \Unif (\sB^{d})$.
We use projections to $(1 - \rho) \Theta$ with $\rho \ge \frac{\delta}{r}$ to ensure that the perturbed iterates $\{\phi_t^+,\phi_t^-\}$ are in the domain, i.e., $\theta_{t} \pm \delta u_{t} \in \Theta$.
We denote the algorithm by $\AZGDtwo$.

Note that here we have applied a slight abuse of notation, since $\theta_t$ is not the actual sequence \emph{deployed} by $\AZGDtwo$; $\theta_t$ is the sequence used internally for optimization. To accurately capture optimality regret, we measure it with respect to the deployed sequence $\{\phi_t^+,\phi_t^-\}$:
\begin{align*}
    \Reg_T^\PO(\AZGDtwo)
    &= 
    \sum_{t=1}^{T}
    \left(
        \max_{\phi_t \in \{\phi_t^+,\phi_t^-\}}\E[\mathrm{PR}_t(\phi_{t})]
        -
        \mathrm{PR}_t(\theta_t^\PO)
    \right).
\end{align*}

\begin{restatable}{theorem}{thmzgdtwopoint}
    \label{thm:zgdtwopoint}
    Suppose that $\mathrm{PR}_{t}$ is $L$-Lipschitz \eqref{eq:lipschitz}. If $\PR_t$ is $\mu'$-strongly convex, then $\AZGDtwo$ with parameters $\eta_{t} = \frac{1}{\mu' t}$, $\delta = \frac{d^2 L}{6 \mu' T}$, and $\rho = \frac{\delta}{r}$ satisfies, for $T \ge \frac{d^2L}{6\mu' r}$,
    \begin{align*}
        \Reg_T^\PO(\AZGDtwo)
        &= \gO \left( d^2 \log T + \sum_{t=1}^{T-1} t \Delta_{t}^{\PO} \right).
    \end{align*}
    If $\PR_t$ is convex, then $\AZGDtwo$ with parameters $\eta_t \equiv \eta = \frac{D_{\Theta}}{d L \sqrt{T}}$, $\delta = \frac{dD_{\Theta}}{6\sqrt{T}}$, and $\rho = \frac{\delta}{r}$ satisfies, for $T \ge \frac{d^2D_\Theta^2}{36r^2}$,
    \begin{align*}
        \Reg_T^\PO(\AZGDtwo)
        &= \gO \left(  d \sqrt{T} \cdot \left( 1 + \sum_{t=1}^{T-1} \Delta_{t}^{\PO} \right) \right).
    \end{align*}
\end{restatable}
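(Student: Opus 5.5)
The plan is to run the standard online-gradient-descent analysis on the smoothed losses $\widehat{\PR}_t$, with $\tilde g_t$ as an unbiased stochastic gradient, and to make the comparator time-varying. Write $\theta_t^\star \coloneqq (1-\rho)\theta_t^\PO \in (1-\rho)\Theta$ for the projected comparator. The argument then has three steps.

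\textbf{Step 1: reduce the deployed regret to a smoothed regret.} Since $\PR_t$ is $L$-Lipschitz, the smoothed risk satisfies $|\widehat{\PR}_t(\theta)-\PR_t(\theta)|\le L\delta$. Convexity of $\PR_t$ is inherited by $\widehat{\PR}_t$, as is $\mu'$-strong convexity. For each deployed point we also have $\PR_t(\phi_t^\pm)\le \PR_t(\theta_t)+L\delta$. On the comparator side,
\[
\PR_t(\theta_t^\star)-\PR_t(\theta_t^\PO)\le L\rho D_\Theta = L\delta D_\Theta/r .
\]
Combining these,
\[
\Reg_T^\PO \le \sum_t \E\big[\widehat{\PR}_t(\theta_t)-\widehat{\PR}_t(\theta_t^\star)\big] + \gO\big(T L\delta(1+D_\Theta/r)\big).
\]
The additive term is $\gO(d^2\log T)$-order or smaller (indeed $\gO(d^2)$) in the strongly convex case with $\delta\propto 1/T$, and $\gO(d\sqrt T)$ in the convex case with $\delta\propto d/\sqrt T$.

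\textbf{Step 2: one-step inequality against a moving comparator.} Nonexpansiveness of $\Pi_{(1-\rho)\Theta}$ gives
\[
\|\theta_{t+1}-\theta_t^\star\|^2\le\|\theta_t-\theta_t^\star\|^2-2\eta_t\langle\tilde g_t,\theta_t-\theta_t^\star\rangle+\eta_t^2\|\tilde g_t\|^2 .
\]
Take conditional expectations, using $\E[\tilde g_t\mid\theta_t]=\nabla\widehat{\PR}_t(\theta_t)$. For the second moment, Lipschitzness yields $|\PR_t(\theta_t+\delta u)-\PR_t(\theta_t-\delta u)|\le 2L\delta$, so $\|\tilde g_t\|\le dL$ deterministically. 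Then apply (strong) convexity of $\widehat{\PR}_t$. This gives
\[
\E[\widehat{\PR}_t(\theta_t)-\widehat{\PR}_t(\theta_t^\star)] \le \tfrac{1}{2\eta_t}\E\big[\|\theta_t-\theta_t^\star\|^2-\|\theta_{t+1}-\theta_t^\star\|^2\big]-\tfrac{\mu'}{2}\E\|\theta_t-\theta_t^\star\|^2+\tfrac{\eta_t}{2}d^2L^2 .
\]

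\textbf{Step 3: telescoping, which is the main obstacle.} The difficulty is that the comparator changes from $\theta_t^\star$ to $\theta_{t+1}^\star$. I would handle it with
\[
\|\theta_{t+1}-\theta_t^\star\|^2\ge\|\theta_{t+1}-\theta_{t+1}^\star\|^2-2D_\Theta\|\theta_t^\star-\theta_{t+1}^\star\|,
\]
using $\|a\|^2-\|b\|^2\le(\|a\|+\|b\|)\|a-b\|$ with all distances at most $D_\Theta$, and $\|\theta_t^\star-\theta_{t+1}^\star\|\le\Delta_t^\PO$. Each step therefore pays an extra $\frac{D_\Theta}{\eta_t}\Delta_t^\PO$.

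In the strongly convex case $\eta_t=1/(\mu' t)$, so the usual telescoping of $\big(\tfrac{1}{2\eta_t}-\tfrac{1}{2\eta_{t-1}}-\tfrac{\mu'}{2}\big)\|\theta_t-\theta_t^\star\|^2$ vanishes. The gradient terms sum to $\sum_t \frac{d^2L^2}{2\mu' t}=\gO(d^2\log T)$, and the drift terms give $\sum_t \mu' D_\Theta\, t\,\Delta_t^\PO$, which is the $\sum_t t\Delta_t^\PO$ term.

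In the convex case with constant $\eta$, telescoping gives $\frac{D_\Theta^2}{2\eta}+\frac{D_\Theta}{\eta}\sum_t\Delta_t^\PO+\frac{\eta T d^2L^2}{2}$. Plugging in $\eta=D_\Theta/(dL\sqrt T)$ yields $dLD_\Theta\sqrt T\,(1+\sum_t\Delta_t^\PO)$ up to constants.

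The conditions on $T$ guarantee $\rho=\delta/r\le1$, so that $(1-\rho)\Theta$ is well-defined and the perturbed points $\phi_t^\pm$ stay in $\Theta$. The main care needed is in Step 3: the comparator shift must be charged at scale $1/\eta_t$ rather than squared, and the shrinking of the domain must be controlled by $\rho$ as in Step 1.
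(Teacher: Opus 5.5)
Your proposal is correct and follows essentially the same route as the paper. It uses the unbiased two-point estimator with $\|\tilde g_t\|\le dL$, the one-step (strongly) convex projected-gradient inequality on $\widehat{\mathrm{PR}}_t$ over $(1-\rho)\Theta$, the bound $\|\theta_{t+1}-\theta_t^{u}\|^2\ge\|\theta_{t+1}-\theta_{t+1}^{u}\|^2-2D_\Theta\|\theta_t^{u}-\theta_{t+1}^{u}\|$ charged at scale $1/\eta_t$, and the shrunken comparator $(1-\rho)\theta_t^{\PO}$. The only small deviation is that you bound $\mathrm{PR}_t((1-\rho)\theta_t^{\PO})-\mathrm{PR}_t(\theta_t^{\PO})$ by Lipschitzness ($L\rho D_\Theta$, using $0\in\Theta$) rather than the paper's convexity-plus-bounded-loss estimate ($2\rho F$), which is equally valid and yields the same rates.
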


We defer the proof of \cref{thm:zgdtwopoint} to \cref{subsec:zgdtwopoint}.

If we only have access to stochastic feedback, like in \cref{sec:rsgdg_main}, then two-point zeroth-order gradient descent is not appropriate, since we cannot subject any one individual $Z_t$ to two different treatments $\phi_t^+$ and $\phi_t^-$. To address this challenge, in the following we analyze a one-point version of zeroth-order gradient descent and bound its optimality regret.


\subsection{Single-deployment algorithm}
\label{subsec:zgdmain}

Here we consider one-point zeroth-order gradient descent (ZGD) \cite{flaxman05}, which we denote by $\AZGD$:
\begin{align*}
\tag{ZGD}
\theta_{t+1} &= \Pi_{(1 - \rho) \Theta} (\theta_{t} - \eta_t  g_t),\\
 g_t &=  \frac{d}{\delta} \ell (Z_{t}, \phi_{t}) u_{t}, \text{ where }   u_{t} \sim \mathrm{Unif} (\sS^{d-1}), 
    \phi_{t} = \theta_{t} + \delta u_{t}, \text{ and }
     Z_{t} \sim \mathcal{D}_t (\phi_t),
\end{align*}
where $\eta_t >0$ is a step size, $\delta >0$ and $\rho\in[0,1]$ are hyperparameters, and $u_t$ is sampled uniformly on the $d$-dimensional sphere $\sS^{d-1}$.
Note that for one-point ZGD, we can use $\ell(Z_t, \phi_t)$ with $Z_t \sim \mathcal{D}_t (\phi_t)$ instead of computing the whole population risk $\mathrm{PR} (\phi_t)$ from stochastic feedback.

Similar to $\AZGDtwo$, it is known that $\E [g_{t}] = \frac{d}{\delta}\E_{u_t}[\mathrm{PR}_t(\theta_t+\delta u_t)u_t] = \nabla \widehat{\mathrm{PR}}_{t} (\theta_{t})$ for
$\widehat{\mathrm{PR}}_{t} (\theta) \coloneqq \E_{v} [ \mathrm{PR}_{t} (\theta + \delta v) ]$ with $v \sim \Unif (\sB^{d})$.
As for $\AZGDtwo$, we will provide upper bounds of the optimality regret in terms of the deployed sequence $\{ \phi_{t} \}_{t \in [T]}$,
\begin{align*}
    \Reg_{T}^{\PO} (\AZGD)
    &=  \sum_{t=1}^{T} \big( \E[\mathrm{PR}_{t} (\phi_{t})] - \mathrm{PR}_{t} (\theta_{t}^{\PO}) \big).
\end{align*}

\begin{restatable}{theorem}{thmzgd}
    \label{thm:zgd}
Suppose that $\mathrm{PR}_{t}$ is $L$-Lipschitz \eqref{eq:lipschitz}. If $\PR_t$ is $\mu'$-strongly convex, then $\AZGD$ with parameters $\eta_{t} = \frac{1}{\mu' t}$, $\delta \propto d^{2/3} (\frac{1 + \log T}{T})^{1/3}$, and $\rho = \frac{\delta}{r}$ satisfies the following optimality regret upper bound for $T = \tilde{\Omega} (d^2)$:
    \begin{align*}
        \Reg_{T}^{\PO} (\AZGD)
        &= \gO \left( d^{\frac{2}{3}} T^{\frac{2}{3}} (\log T)^{\frac{1}{3}} + \sum_{t=1}^{T-1} t \Delta_{t}^{\PO} \right).
    \end{align*}
    If $\mathrm{PR}_t$ is convex, then $\AZGD$ with parameters $\eta_t \equiv \eta \propto d^{-1/2} T^{-3/4}$, $\delta \propto d^{1/2} T^{-1/4}$, and $\rho = \frac{\delta}{r}$ satisfies the following optimality regret upper bound for $T = \tilde{\Omega} (d^2)$:
    \begin{align*}
        \Reg_{T}^{\PO} (\AZGD)
        &= \gO \left( d^{\frac12} T^{\frac{3}{4}} \cdot \left( 1 + \sum_{t=1}^{T-1} \Delta_{t}^{\PO} \right) \right).
    \end{align*}
\end{restatable}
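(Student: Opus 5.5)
The plan follows the standard Flaxman--Kalai--McMahan analysis of one-point bandit gradient descent, with a dynamic comparator added. I work with the smoothed risks $\widehat{\PR}_t(\theta)=\E_{v}[\PR_t(\theta+\delta v)]$. Each $\widehat{\PR}_t$ inherits $L$-Lipschitzness and (strong) convexity from $\PR_t$, and satisfies $|\widehat{\PR}_t-\PR_t|\le L\delta$. The stochastic gradient $g_t$ is conditionally unbiased for $\nabla\widehat{\PR}_t(\theta_t)$ given $\theta_t$. The identity $\E[g_t]=\frac{d}{\delta}\E_{u_t}[\PR_t(\theta_t+\delta u_t)u_t]$ holds because $Z_t\sim\gD_t(\phi_t)$ and the tower property give $\E[\ell(Z_t,\phi_t)\mid u_t]=\PR_t(\phi_t)$. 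Since $|\ell|\le F$, the estimator has second moment $\E\|g_t\|^2\le d^2F^2/\delta^2$.

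I then decompose the regret. Let $\tilde\theta_t^\PO=\Pi_{(1-\rho)\Theta}(\theta^\PO_t)$, so that $\|\tilde\theta_t^\PO-\theta_t^\PO\|\le\rho D_\Theta$. For each $t$ I write
\[\PR_t(\phi_t)-\PR_t(\theta^\PO_t)\le \bigl[\widehat{\PR}_t(\theta_t)-\widehat{\PR}_t(\tilde\theta^\PO_t)\bigr]+3L\delta+L\rho D_\Theta .\]
The term $\PR_t(\phi_t)-\PR_t(\theta_t)\le L\delta$ accounts for the deployment perturbation. Two further $L\delta$ terms come from smoothing. The last term comes from shrinking the comparator. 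Summed over $t$, the bias contributes $\gO(LT\delta(1+D_\Theta/r))$.

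For the bracketed term I run the dynamic online gradient descent argument. Nonexpansiveness of the projection gives
\[\E\|\theta_{t+1}-\tilde\theta^\PO_t\|^2\le \E\|\theta_t-\tilde\theta^\PO_t\|^2-2\eta_t\E\langle\nabla\widehat{\PR}_t(\theta_t),\theta_t-\tilde\theta^\PO_t\rangle+\eta_t^2 d^2F^2/\delta^2 .\]
Convexity, or $\mu'$-strong convexity, lower-bounds the inner product by $\widehat{\PR}_t(\theta_t)-\widehat{\PR}_t(\tilde\theta^\PO_t)$, plus $\frac{\mu'}{2}\|\theta_t-\tilde\theta_t^\PO\|^2$ in the strongly convex case. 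To change the comparator I use
\[\|\theta_{t+1}-\tilde\theta^\PO_{t+1}\|^2\le\|\theta_{t+1}-\tilde\theta^\PO_t\|^2+2D_\Theta\Delta^\PO_t+(\Delta_t^\PO)^2\le\|\theta_{t+1}-\tilde\theta^\PO_t\|^2+3D_\Theta\Delta_t^\PO ,\]
which uses that projection is 1-Lipschitz and that $\Delta_t^\PO\le D_\Theta$.

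\emph{Convex case.} With constant $\eta$, telescoping gives
\[\sum_t \bigl[\widehat{\PR}_t(\theta_t)-\widehat{\PR}_t(\tilde\theta_t^\PO)\bigr]\le \frac{D_\Theta^2+3D_\Theta\sum_t\Delta^\PO_t}{2\eta}+\frac{\eta T d^2F^2}{2\delta^2}.\]
Adding the bias term $\gO(T\delta)$ and substituting $\eta\propto d^{-1/2}T^{-3/4}$ and $\delta\propto d^{1/2}T^{-1/4}$: the first term is $\gO(d^{1/2}T^{3/4}(1+\sum\Delta))$, the variance term is $\eta Td^2/\delta^2=\gO(d^{1/2}T^{3/4})$, and the bias is $\gO(d^{1/2}T^{3/4})$. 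This yields the claim.

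\emph{Strongly convex case.} With $\eta_t=1/(\mu' t)$, the standard weighted telescoping applies. The coefficient of $\|\theta_t-\tilde\theta_t^\PO\|^2$ becomes $\frac{1}{2\eta_t}-\frac{1}{2\eta_{t-1}}-\frac{\mu'}{2}=0$. The comparator-shift terms appear with weight $\frac{1}{2\eta_t}=\frac{\mu' t}{2}$, which produces $\gO(\sum_t t\Delta^\PO_t)$. The variance contributes $\sum_t \eta_t d^2F^2/(2\delta^2)=\gO(d^2\log T/(\mu'\delta^2))$. Balancing this against the bias $T\delta$ gives $\delta\propto d^{2/3}((1+\log T)/T)^{1/3}$ and total $\gO(d^{2/3}T^{2/3}(\log T)^{1/3})$.

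The condition $T=\tilde\Omega(d^2)$ ensures $\delta\le r$, so that $\rho=\delta/r\le1$ and all deployed points lie in $\Theta$.

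The main obstacle is the comparator-shift bookkeeping in the strongly convex case. The weight $1/\eta_t$ multiplying $\Delta_t^\PO$ grows linearly in $t$, so the squared-distance increment must be bounded linearly in $\Delta_t^\PO$, using the diameter. Care is also needed that conditioning on $\theta_t$, which is independent of $u_t$ and $Z_t$, justifies the unbiasedness step inside the expectation.
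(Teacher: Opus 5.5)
Your proposal is correct and follows essentially the same argument as the paper. Both use the smoothed risk $\widehat{\mathrm{PR}}_t$, the projected-descent one-step inequality with second-moment bound $d^2F^2/\delta^2$, a diameter-based comparator shift, the $1/(\mu' t)$ weighted telescoping, and the same bias/variance balancing of $\delta$. The differences are cosmetic: you take the comparator $\Pi_{(1-\rho)\Theta}(\theta_t^{\PO})$ and pay a shrinkage cost $L\rho D_\Theta$ via Lipschitzness, where the paper uses $(1-\rho)\theta_t^{\PO}$ and pays $2\rho F$ via convexity and $|\ell|\le F$; and your comparator-shift constant is $3D_\Theta$ rather than $2D_\Theta$.
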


We defer the proof of \cref{thm:zgd} to \cref{subsec:zgd}.
Given the weaker feedback signal, the regret bounds are naturally larger than that of ZGD2.

\begin{remark}
    \citet{chen24} similarly study how to use two deployed models to construct a zeroth-order estimate of the performative risk gradient in a fully performative setting.
    Our result gives a new analysis of the \textit{dynamic} optimality regret of the two-deployment algorithm under partial performativity.
\end{remark}

\section{Experiments}
\label{sec:6}

We conduct experiments evaluating the stability regret of the retraining algorithms considered in the paper. We include several main results in this section and defer further empirical results and additional experimental details to Appendix \ref{sec:exps}.

We use the credit scoring dataset from \cite{yeh2009comparisons}, in particular the processed version from \cite{ustun19}. The dataset contains demographic and socioeconomic features $x_i\in \R^{16}$, including age, education, and credit history, and labels $y_i$ indicating whether a person defaulted on their credit card payment, for $30,000$ individuals. The learner aims to predict the individuals' creditworthiness by minimizing a regularized logistic loss over an $\ell_2$ ball in $d=17$ dimensions (one dimension corresponding to the intercept), after standardizing the features. The performative feedback is modeled as coming from strategic responses: an individual $i$ responds to a deployed model $\theta$ by optimizing the score $\theta^\top x_i$  (equivalently, maximizing the prediction), with an additional quadratic penalty for deviating too far from their initial features $x_{i,0}$. Only certain features are modifiable (for example, age is not), and thus the individuals only manipulate those modifiable features. We model the initial features $x_0$ in the dataset as being drawn from a normal distribution $\mathcal{N}(m, \Sigma)$, and we fit $m$ and $\Sigma$ from $1500$ data points. The individuals' best response has a closed-form expression in this case:
\[x \leftarrow x_0 + A \theta,\]
where $A=a\cdot \,\operatorname{diag}(p_{\mathrm{mod}})$, $a$ is a constant that captures the individuals' utility--cost tradeoff, and $p_{\mathrm{mod}}$ indicates the modifiable coordinates. Putting everything together yields a distribution map
$\gD(\theta)=\gN(A\theta+m,\Sigma)$. We set $a=0.1$.

We introduce an exogenous component $P_t=\gN(m_t, \Sigma)$ with the same covariance matrix as the performative component, which models a time-varying population of non-strategic individuals that do not optimize their predicted score.
To capture ``hard'' and ``easy'' exogenous shifts, we consider two cases: in the first, $m_t$ is sampled independently from a bounded ball at each round; in the second, $m_t$ is sampled once at the beginning and held fixed across $t$. We study the effect of different values of $\alpha_t$; we compare polynomial schedules $\alpha_t=t^{-b}$ with $b \in \{0.25,0.5,1.0,2.0\}$ and the constant schedule $\alpha_t=1$ (or, equivalently, $b = 0$).

\begin{figure}
    \centering
    \includegraphics[width=0.99\linewidth]{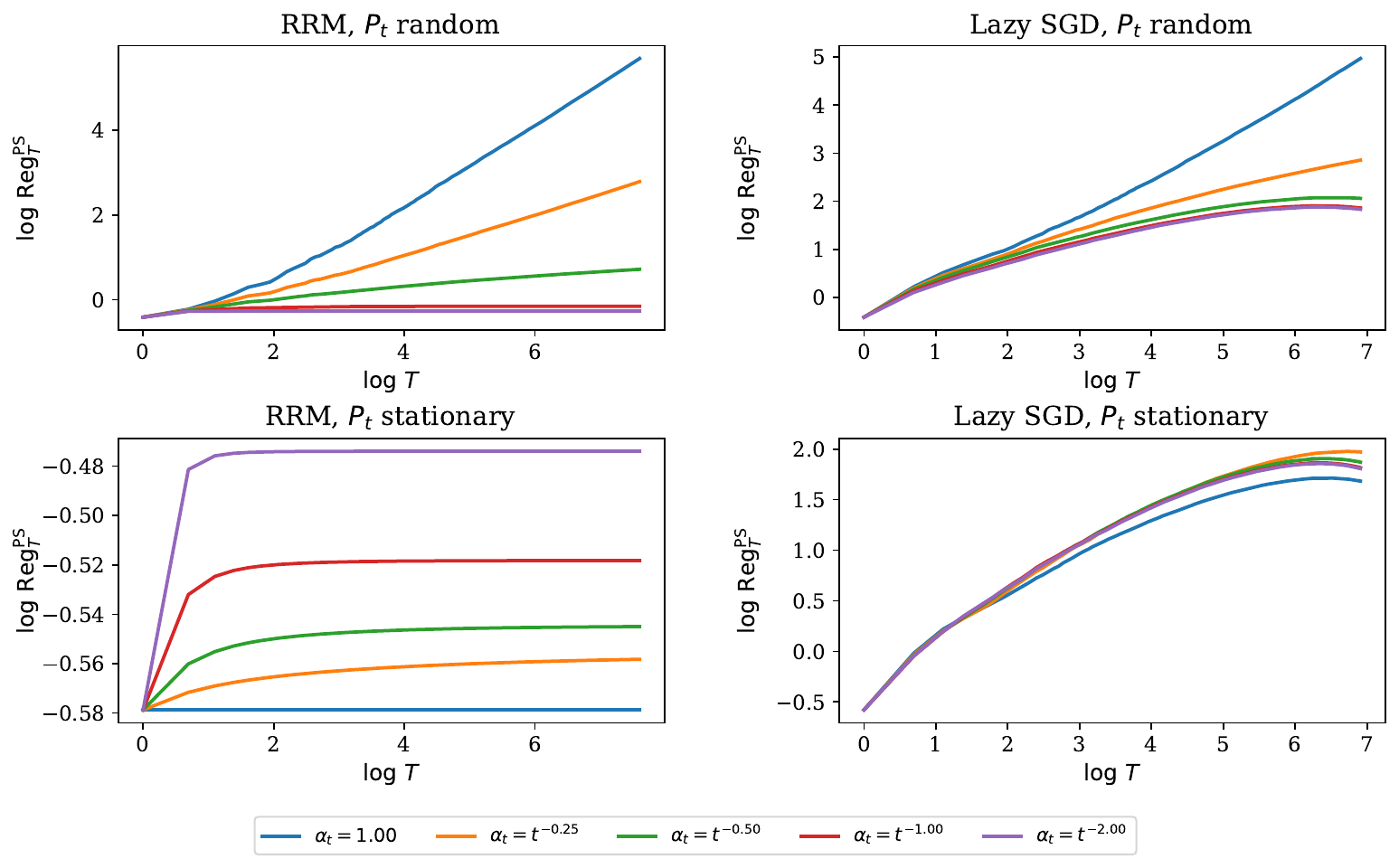}
    \caption{Performative stability regret of $\ARRM$ (left) and $\ARSGDL$ with $r = 1$ (right) in the credit scoring experiment, for varying $\alpha_t$. We consider a randomly varying (top) and a stationary (bottom) exogenous component $P_t$.
    }
    \label{fig:1}
\end{figure}

In \Cref{fig:1} we plot the stability regret of repeated risk minimization (RRM) and SGD with a lazy deployment schedule, for random and stationary $P_t$. The figure shows that the effect of the mixing schedule $\alpha_t$ depends on whether the exogenous component drifts.
With random shifts, faster decay of $\alpha_t$ suppresses the moving exogenous term and produces smaller stability regret for the displayed stability algorithms.
With fixed $P_t$, the ordering is reversed: keeping $\alpha_t$ larger for longer makes the observed distribution closer to a fixed, nonperformative component and yields smaller regret, $\alpha_t = 1$ being the most desirable scenario.
This behavior is consistent with the theoretical results, as well as the path length bound from Proposition \ref{prop:wassbound}. In the random $P_t$ case, the term $\sum_{t} \Delta_{t}^{\PS}$, which is driven by $\alpha_t \gW_1(P_t, P_{t+1})$, is the dominant term in the regret bound. Therefore, larger $\alpha_t$ leads to a larger regret. In the fixed $P_t$ case, the path length is zero, and thus the regret bound is dominated by the performative regret component, which decreases as $\alpha_t$ grows.

\section{Conclusion}
\label{sec:7}

We introduce partially performative prediction---a novel model for learning under distribution shifts that are both induced by performative feedback and exogenous variations. In this setting, we measure performance via regret relative to a sequence of performatively stable and performatively optimal points, rather than a single stable or optimal point, as in the standard performative prediction model. We prove dynamic regret bounds for several standard algorithms, including repeated risk minimization and variants of stochastic gradient descent.

Several questions remain unresolved, such as matching regret lower bounds or tighter upper bounds. As we briefly discussed, the regret rate of greedy SGD is worse than the rate of the other studied algorithms, and we do now know if this gap is fundamental. It would also be valuable to study performative stability regret and performative optimality regret beyond the presented model of partial performativity. These regret notions can be defined with respect to any time-varying map $\gD_t(\theta)$, thus allowing natural extensions to models beyond linear mixtures. Finally, our regularity assumptions on the loss (such as convexity) and the distribution map (such as sensitivity) are still fairly restrictive; finding ways to relax those assumptions remains an open question. 

\bibliographystyle{plainnat}
{\small%
	\bibliography{refs}%
}

\clearpage
\appendix


\section{Auxiliary lemmas}
\label{sec:auxlem}

\begin{restatable}{lemma}{lemrgdtelescope}
    \label{lem:rgd_telescope}
    Suppose that we have a recursive inequality of the form
    \begin{align*}
        F_{t+1} &\le \gamma_{t} F_{t} + \Delta_{t}.
    \end{align*}
    Then we have
    \begin{align}
        \sum_{t=1}^{T} (1 - \gamma_{t}) F_{t} &\le F_{1} - \gamma_{T} F_{T} + \sum_{t=1}^{T-1} \Delta_{t}.
        \label{eq:rgd_telescope}
    \end{align}
\end{restatable}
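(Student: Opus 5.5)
The plan is to rearrange each step of the recursion into a telescoping form and then sum. For each $t \in [T-1]$, the hypothesis $F_{t+1} \le \gamma_t F_t + \Delta_t$ is equivalent to
\begin{align*}
    (1 - \gamma_t) F_t \le F_t - F_{t+1} + \Delta_t .
\end{align*}
Summing over $t = 1, \dots, T-1$, the differences $F_t - F_{t+1}$ telescope to $F_1 - F_T$. This gives
\begin{align*}
    \sum_{t=1}^{T-1} (1 - \gamma_t) F_t \le F_1 - F_T + \sum_{t=1}^{T-1} \Delta_t .
\end{align*}

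The second step is to add the missing $t = T$ term, $(1-\gamma_T)F_T$, to both sides. On the right this combines with $-F_T$ to give $F_1 - F_T + (1-\gamma_T)F_T = F_1 - \gamma_T F_T$. The resulting inequality is exactly \eqref{eq:rgd_telescope}.

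There is no real obstacle; the only point needing care is the boundary term. The recursion is only assumed (or only needed) up to $t = T-1$, so the $t = T$ summand on the left is handled by the algebraic identity above rather than by invoking the recursion. Consequently no sign condition on $F_T$ or $\gamma_T$ is required.

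In the later applications, $F_t = \|\theta_t - \theta_t^{\PS}\|$ and $\gamma_t \le \gamma < 1$. There one uses $1-\gamma_t \ge 1-\gamma$ and drops $-\gamma_T F_T \le 0$, which gives $\sum_t F_t \le \frac{1}{1-\gamma}\bigl(F_1 + \sum_t \Delta_t\bigr)$. These steps, however, belong to the consuming theorems and not to this lemma.
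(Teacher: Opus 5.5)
Your proposal is correct and follows the paper's proof exactly: rearrange each step to $(1-\gamma_t)F_t \le F_t - F_{t+1} + \Delta_t$, telescope over $t = 1,\dots,T-1$, then add $(1-\gamma_T)F_T$ to both sides. Your remark that the $t=T$ term is handled algebraically, so no sign condition on $F_T$ or $\gamma_T$ is needed within the lemma itself, is also accurate.
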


\begin{proof}
    Since we have
    \begin{align*}
        (1 - \gamma_{t}) F_{t} &\le F_{t} - F_{t+1} + \Delta_{t},
    \end{align*}
    the summation of the above yields
    \begin{align*}
        \sum_{t=1}^{T-1} (1 - \gamma_{t}) F_{t} &\le \sum_{t=1}^{T-1} \bigopen{ F_{t} - F_{t+1} } + \sum_{t=1}^{T-1} \Delta_{t} = F_{1} - F_{T} + \sum_{t=1}^{T-1} \Delta_{t}.
    \end{align*}
    We add $(1 - \gamma_{T}) F_{T}$ to both sides to obtain \cref{eq:rgd_telescope}.
\end{proof}


\begin{lemma}
    \label{lem:sublinearrecursion}
    Suppose that we have a recursive inequality of the form
    \begin{align*}
        F_{t+1} &\le \left( 1 - \frac{\alpha}{t + t_0} \right) F_{t} + H (t + t_0)^{-m-1}
    \end{align*}
    for all $t \in \sN$, with $t_0 > 0$ and $0 \le m < \alpha \le 1 + t_0$.
    Let
    \begin{align*}
        S = \max \left\{ F_1 (1 + t_0)^{m}, \, \frac{H}{\alpha - m} \right\}.
    \end{align*}
    Then we have $F_{t} \le S(t+t_0)^{-m}$ for all $t \in \sN$.
\end{lemma}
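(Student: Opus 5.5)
We argue by induction on $t$, the standard route for Chung-type recursions. The base case $t=1$ holds directly from the choice of $S$, since $S \ge F_1(1+t_0)^m$ gives $F_1 \le S(1+t_0)^{-m}$.

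For the inductive step, assume $F_t \le S(t+t_0)^{-m}$ and write $s = t+t_0$. The hypothesis $\alpha \le 1+t_0 \le s$ makes the factor $1-\alpha/s$ nonnegative, so the recursion and the inductive hypothesis combine to give
\begin{align*}
    F_{t+1} \le \Bigl(1-\frac{\alpha}{s}\Bigr) S s^{-m} + H s^{-m-1} = S s^{-m} - (\alpha S - H) s^{-m-1}.
\end{align*}
We need the right-hand side to be at most $S(s+1)^{-m}$.

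Next we lower bound the target. The map $x\mapsto x^{-m}$ is convex for $m\ge 0$, so by the tangent-line inequality
\begin{align*}
    (s+1)^{-m} \ge s^{-m} - m s^{-m-1}.
\end{align*}
Hence $S(s+1)^{-m} \ge S s^{-m} - mS s^{-m-1}$. It therefore suffices to show $\alpha S - H \ge mS$, that is, $(\alpha - m)S \ge H$. This holds because $\alpha > m$ and $S \ge H/(\alpha-m)$, which closes the induction.

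The only delicate points are the sign of $1-\alpha/s$, which is where the assumption $\alpha\le 1+t_0$ enters, and using the convexity inequality in the correct direction. The case $m=0$ is trivial, since both sides then reduce to $S$.
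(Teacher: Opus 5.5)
Your proof is correct and is essentially the paper's argument. It uses the same induction and the same substitution $H \le (\alpha-m)S$, and your tangent-line bound $(s+1)^{-m} \ge s^{-m} - m s^{-m-1}$ is exactly the paper's Bernoulli step $1 - m/s \le (1+1/s)^{-m}$ multiplied by $s^{-m}$. You also make explicit where $\alpha \le 1+t_0$ is used, which the paper leaves tacit.

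One caveat, which the paper shares: multiplying that inequality by $S$ requires $S \ge 0$. This holds whenever $F_1 \ge 0$ or $H \ge 0$, as in the paper's application, but not in general. For example, $t_0=1$, $m=1$, $\alpha=2$, $H=-1$, $F_1=-1$ gives $S=-1$ and allows $F_2=-1/4 > -1/3 = S(2+t_0)^{-m}$. So a nonnegativity hypothesis should be added to the statement.
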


\begin{proof}
    For $t = 1$ we have $F_{1} \le S(1+t_0)^{-m}$ by definition.
    
    Suppose that $F_{t} \le S(t+t_0)^{-m}$.
    Then,
    \begin{align*}
        F_{t+1} &\le \left( 1 - \frac{\alpha}{t + t_0} \right) F_{t} + H (t + t_0)^{-m-1} \\
        &\le \left( 1 - \frac{\alpha}{t + t_0} \right) S (t + t_0)^{-m} + S (\alpha - m) (t + t_0)^{-1} \cdot (t + t_0)^{-m} \\
        &= S \left( 1 - \frac{\alpha}{t + t_0} + \frac{\alpha - m}{t + t_0} \right) (t+t_0)^{-m} \\
        &= S \left( 1 - \frac{m}{t + t_0} \right) (t+t_0)^{-m}.
    \end{align*}
    Since Bernoulli's inequality (for $m \ge 0$ and $\frac{1}{t+t_0} > -1$) implies
    \begin{align*}
        1 - \frac{m}{t + t_0} &\le \bigopen{1 + \frac{1}{t + t_0}}^{-m},
    \end{align*}
    we have
    \begin{align*}
        F_{t+1} &\le S \bigopen{1 + \frac{1}{t + t_0}}^{-m} (t+t_0)^{-m} = S (t + t_0 + 1)^{-m}
    \end{align*}
    as desired.
\end{proof}


\begin{lemma}[Lemma~B.5, \cite{mendlerdunner20}]
    \label{lem:unroll}
    For any $c \in (0, 1)$, $r > 0$, and $T \in \sN$,
    \begin{align*}
        \sum_{t=1}^{T} c^{T-t} t^{-r}
        &\le \frac{c^{T (1 - 2^{-1/r})}}{1-c} + \frac{2T^{-r}}{1-c}.
    \end{align*}
\end{lemma}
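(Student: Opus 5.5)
The plan is to split the sum at a threshold index and bound each piece separately: one piece by a geometric tail in $c$, the other by a uniform bound on $t^{-r}$. Set
\[
t_\star \coloneqq T\,2^{-1/r},
\]
so that $t_\star^{-r} = 2T^{-r}$ and $T - t_\star = T(1-2^{-1/r})$. Write
\[
\sum_{t=1}^{T} c^{T-t}t^{-r} \;=\; \sum_{t\le t_\star} c^{T-t}t^{-r} \;+\; \sum_{t_\star<t\le T} c^{T-t}t^{-r}.
\]

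For the early indices $t \le t_\star$, use $t^{-r}\le 1$, valid since $t\ge 1$ and $r>0$. What remains is $\sum_{t\le t_\star} c^{T-t}$. This is a geometric series in $c$ whose largest term occurs at $t=\lfloor t_\star\rfloor$, where it equals $c^{T-\lfloor t_\star\rfloor}$. Because $c\in(0,1)$ and $T-\lfloor t_\star\rfloor \ge T-t_\star$, this term is at most $c^{T-t_\star}$. Summing the geometric series downward from it gives
\[
\sum_{t\le t_\star} c^{T-t} \;\le\; \frac{c^{T-t_\star}}{1-c} \;=\; \frac{c^{T(1-2^{-1/r})}}{1-c}.
\]
If $t_\star<1$, this piece is empty and the bound holds trivially.

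For the late indices $t>t_\star$, monotonicity of $t\mapsto t^{-r}$ gives $t^{-r} < t_\star^{-r} = 2T^{-r}$. The remaining weights satisfy
\[
\sum_{t_\star<t\le T} c^{T-t} \;\le\; \sum_{k\ge 0} c^k \;=\; \frac{1}{1-c}.
\]
Hence this piece is at most $2T^{-r}/(1-c)$.

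Adding the two bounds yields the claimed inequality. No step is a real obstacle. The only point needing care is the floor in the first piece: since $c<1$, rounding the exponent $T-t_\star$ up to the integer $T-\lfloor t_\star\rfloor$ only decreases the term, so the bound $c^{T-t_\star}$ remains valid.
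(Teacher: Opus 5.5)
Your proof is correct and follows the same route as the paper. Both arguments split the sum at the threshold $T2^{-1/r}$, where $t^{-r}$ crosses $2T^{-r}$, bound the early terms by $t^{-r}\le 1$ and a geometric tail starting at exponent $T(1-2^{-1/r})$, and bound the late terms by $2T^{-r}/(1-c)$. The only difference is cosmetic: the paper splits at $M_T$, the largest integer strictly below $T2^{-1/r}$, while you split at $\lfloor t_\star\rfloor$, which works equally well.
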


\begin{proof}
    (We rewrite the proof from \cite{mendlerdunner20} here for completeness.)
    We write
    \begin{align*}
        \sum_{t=1}^{T} c^{T-t} t^{-r} &= \sum_{t=1}^{M_{T}} c^{T-t} t^{-r} + \sum_{t=M_{T}+1}^{T} c^{T-t} t^{-r},
    \end{align*}
    where we define $M_{T} \coloneqq \max \{ m \in \sN : m^{-r} > 2 T^{-r} \}$ (or $M_T = 0$ if the set is empty for small $T$).
    Then we have
    \begin{align*}
        \sum_{t=1}^{M_{T}} c^{T-t} t^{-r} + \sum_{t=M_{T}+1}^{T} c^{T-t} t^{-r}
        &\le \sum_{t=1}^{M_{T}} c^{T-t} + 2 T^{-r} \cdot \sum_{t=M_{T}+1}^{T} c^{T-t} \\
        &\le \frac{c^{T - M_{T}}}{1 - c} + \frac{2 T^{-r}}{1 - c} \\
        &\le \frac{c^{T(1 - 2^{-1/r})}}{1 - c} + \frac{2 T^{-r}}{1 - c},
    \end{align*}
    where we use $M_{T} < 2^{-\frac{1}{r}} \cdot T$ by its definition.
\end{proof}


\begin{lemma}
    \label{lem:wassone}
    Let $a, b \in [0, 1]$ and $Q_1, Q_2, R_1, R_2 \in \gP(\gZ)$.
    Then we have
    \begin{align*}
        &\gW_1 \big( (1 - a) Q_1 + a R_1, (1 - b) Q_2 + b R_2 \big) \le (1 - a) \gW_1 (Q_1, Q_2) + a \gW_1 (R_1, R_2) + |a - b| \, \gW_1 (Q_2, R_2).
    \end{align*}
\end{lemma}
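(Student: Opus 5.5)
The argument has two parts. First, I bound the change from $(1-a)$ to $(1-b)$ in the first mixture using the triangle inequality for $\gW_1$. Second, I handle the resulting common-weight comparison by convexity (joint coupling).

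Introduce the intermediate measure $M \coloneqq (1-a) Q_2 + a R_2$. Then
\begin{align*}
\gW_1\big((1-a)Q_1 + aR_1,\ (1-b)Q_2 + bR_2\big)
\le \gW_1\big((1-a)Q_1 + aR_1,\ M\big) + \gW_1\big(M,\ (1-b)Q_2 + bR_2\big).
\end{align*}

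For the first term, take an optimal (or $\eta$-optimal) coupling $\pi_Q$ of $(Q_1,Q_2)$ and $\pi_R$ of $(R_1,R_2)$. The mixture $(1-a)\pi_Q + a\pi_R$ is a coupling of the two mixtures with common weight $a$. Its transport cost is $(1-a)\gW_1(Q_1,Q_2) + a\gW_1(R_1,R_2)$, up to $\eta$, which can be sent to zero. If optimal couplings exist, as on Polish spaces with finite first moments, no limiting step is needed.

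For the second term, compute $M - \big((1-b)Q_2 + bR_2\big) = (a-b)(R_2 - Q_2)$. I would use the Kantorovich--Rubinstein dual:
\begin{align*}
\gW_1\big(M,\ (1-b)Q_2 + bR_2\big) = \sup_{\mathrm{Lip}(f)\le 1} \int f \, d\big(M - (1-b)Q_2 - bR_2\big) = |a-b| \sup_{\mathrm{Lip}(f)\le1} \Big| \int f\, d(R_2 - Q_2) \Big|.
\end{align*}
The last supremum equals $\gW_1(Q_2,R_2)$, because the sign can be absorbed by replacing $f$ with $-f$.

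An alternative that avoids duality is an explicit coupling. Assume without loss of generality $a \ge b$. Keep the common mass $(1-a)Q_2 + bR_2$ fixed along the diagonal. Transport the remaining mass $(a-b)R_2$ to $(a-b)Q_2$ using a scaled optimal coupling of $(R_2,Q_2)$. This has cost $(a-b)\gW_1(Q_2,R_2)$. The case $a<b$ is symmetric.

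Summing the two bounds gives the claim. The only delicate point is the choice of intermediate measure: $M$ must keep weight $a$ so that the first term gives exactly the coefficients $(1-a)$ and $a$, while the weight mismatch is charged against $\gW_1(Q_2,R_2)$. The remaining steps are routine properties of $\gW_1$: the triangle inequality, joint convexity under mixing couplings, and duality. For the duality and convexity steps I would tacitly assume finite first moments, so that all distances are finite.
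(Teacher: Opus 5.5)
Your proof is correct and follows essentially the same route as the paper. Both use the intermediate measure $(1-a)Q_2 + aR_2$, convexity of $\gW_1$ for the common-weight term, a shared-mass coupling charging $|a-b|\,\gW_1(Q_2,R_2)$ for the weight mismatch, and the triangle inequality; your Kantorovich--Rubinstein computation of that mismatch term is an equivalent alternative to the coupling, which you also give.
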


\begin{proof}
    Since $\gW_1$ is convex,
    \begin{align*}
        &\gW_1 \big( (1 - a) Q_1 + a R_1, (1 - a) Q_2 + a R_2 \big) \le (1 - a) \gW_1 (Q_1, Q_2) + a \gW_1 (R_1, R_2).
    \end{align*}
    We also have
    \begin{align*}
        \gW_1 \big( (1 - a) Q_2 + a R_2, (1 - b) Q_2 + b R_2 \big)
        &\le |b - a| \, \gW_1 (Q_2, R_2).
    \end{align*}
    Assuming $a \le b$ without loss of generality, this holds because the distributions $(1 - a) Q_2 + a R_2$ and $(1 - b) Q_2 + b R_2$ share a mass of at least $1-b$ from $Q_2$ and $a$ from $R_2$.
    Therefore the optimal transport plan from $Q_2$ to $R_2$ also yields a transport plan from $(1 - a) Q_2 + a R_2$ to $(1 - b) Q_2 + b R_2$ for the remaining mass $b-a = |b-a|$.
    Finally, applying triangle inequality for $\gW_1$ proves the given statement.
\end{proof}


\cref{lem:rgd_lislip} shows that sensitivity inherits Lipschitz properties of $\ell$ to the performative risk.

\begin{restatable}{lemma}{lemrgdlislip}
    \label{lem:rgd_lislip}
    Suppose that the map $\gD(\theta)$ is $\epsilon$-sensitive \eqref{eq:sensitivity} and the function $\ell(z,\theta)$ is $L_{\theta}$-Lipschitz in $\theta \in \Theta$ and $L_{z}$-Lipschitz in $z \in \gZ$.
    Then, $\mathrm{PR}_{t} (\theta)$ is $L_{t}$-Lipschitz, where $L_{t} = (1 - \alpha_{t}) \epsilon L_{z} + L_{\theta}$.
\end{restatable}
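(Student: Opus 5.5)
The plan is to split the difference $\PR_t(\theta)-\PR_t(\theta')$ into a change in the evaluated parameter and a change in the inducing distribution, and to bound each piece separately. Writing $\PR_t(\theta)=\mathrm{DPR}_t(\theta,\theta)$, we insert the intermediate term $\mathrm{DPR}_t(\theta',\theta)$:
\begin{align*}
    \lvert \PR_t(\theta)-\PR_t(\theta')\rvert
    &\le \lvert \mathrm{DPR}_t(\theta,\theta)-\mathrm{DPR}_t(\theta',\theta)\rvert
    + \lvert \mathrm{DPR}_t(\theta',\theta)-\mathrm{DPR}_t(\theta',\theta')\rvert .
\end{align*}

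The first term is an expectation over the single distribution $\gD_t(\theta)$ of $\ell(Z,\theta)-\ell(Z,\theta')$. Since $\ell$ is $L_\theta$-Lipschitz in $\theta$ pointwise in $z$, this term is at most $L_\theta\|\theta-\theta'\|$.

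The second term compares expectations of the fixed function $z\mapsto \ell(z,\theta')$ under $\gD_t(\theta)$ and $\gD_t(\theta')$. Rather than invoking a Wasserstein bound on the full mixtures, we use the linearity of expectation in the mixture \eqref{eq:pp}. The exogenous part $\alpha_t\E_{P_t}[\ell(Z,\theta')]$ appears identically in both expectations and cancels, leaving
\[
(1-\alpha_t)\bigl(\E_{\gD(\theta)}[\ell(Z,\theta')]-\E_{\gD(\theta')}[\ell(Z,\theta')]\bigr).
\]
(Equivalently, \cref{lem:wassone} with $a=b=\alpha_t$ and $R_1=R_2=P_t$ gives $\gW_1(\gD_t(\theta),\gD_t(\theta'))\le(1-\alpha_t)\gW_1(\gD(\theta),\gD(\theta'))$.)

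By Kantorovich--Rubinstein duality, and because $z\mapsto\ell(z,\theta')$ is $L_z$-Lipschitz, the bracket is at most $L_z\,\gW_1(\gD(\theta),\gD(\theta'))$. The $\epsilon$-sensitivity \eqref{eq:sensitivity} bounds this by $L_z\epsilon\|\theta-\theta'\|$. Combining the two terms gives $L_t=(1-\alpha_t)\epsilon L_z+L_\theta$.

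No step is a real obstacle. The one point needing care is the duality step: it requires $\ell(\cdot,\theta')$ to be globally Lipschitz on $\gZ$ with respect to the same metric that defines $\gW_1$, together with integrability, which holds under the standing assumptions.
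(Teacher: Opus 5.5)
Your proof is correct and matches the paper's argument. Both insert an intermediate decoupled risk, bound the change in the evaluated parameter under a fixed distribution by $L_\theta\|\theta-\theta'\|$, and bound the change in the inducing distribution by cancelling the $P_t$ component, which leaves $(1-\alpha_t)L_z\gW_1(\gD(\theta),\gD(\theta'))\le(1-\alpha_t)\epsilon L_z\|\theta-\theta'\|$. The only cosmetic difference is that the paper uses the intermediate $\mathrm{R}_{\gD_t(\theta')}(\theta)=\mathrm{DPR}_t(\theta,\theta')$ and proves a one-sided bound before swapping $\theta,\theta'$, whereas you use $\mathrm{DPR}_t(\theta',\theta)$ with absolute values throughout.
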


\begin{proof}
    (The proof is similar to that of Lemma~2 of \cite{jagadeesan22}.)
    Observe that $\mathrm{PR}_{t} (\theta) = \mathrm{R}_{\gD_{t} (\theta)} (\theta)$
    if we define
    $\mathrm{R}_{D} (\theta) \coloneqq \E_{Z \sim D} [\ell(Z, \theta)]$, and
    \begin{align*}
        &\mathrm{R}_{\gD_{t} (\theta)} (\theta) - \mathrm{R}_{\gD_{t} (\theta')} (\theta') = (1 - \alpha_{t}) \bigopen{ \mathrm{R}_{\gD (\theta)} (\theta) - \mathrm{R}_{\gD (\theta')} (\theta') } + \alpha_{t} \bigopen{ \mathrm{R}_{P_{t}} (\theta) - \mathrm{R}_{P_{t}} (\theta') },
    \end{align*}
    for any $\theta, \theta' \in \Theta$.
    Note that by Lipschitzness and sensitivity of $\gD$,
    \begin{align*}
        (1 - \alpha_{t}) \bigopen{ \mathrm{R}_{\gD (\theta)} (\theta) - \mathrm{R}_{\gD (\theta')} (\theta) } &\le (1 - \alpha_{t}) L_{z} \cdot \gW_1 (\gD (\theta), \gD (\theta')) \\
        &\le (1 - \alpha_{t}) \epsilon L_{z} \norm{\theta - \theta'},
    \end{align*}
    and also
    \begin{align*}
        &(1 - \alpha_{t}) \bigopen{ \mathrm{R}_{\gD (\theta')} (\theta) - \mathrm{R}_{\gD (\theta')} (\theta') } + \alpha_{t} \big( \mathrm{R}_{P_{t}} (\theta) - \mathrm{R}_{P_{t}} (\theta') \big) \\
        &= \bigopen{ (1 - \alpha_{t}) \mathrm{R}_{\gD (\theta')} (\theta) + \alpha_{t} \mathrm{R}_{P_{t}} (\theta) } - \bigopen{ (1 - \alpha_{t}) \mathrm{R}_{\gD (\theta')} (\theta') + \alpha_{t} \mathrm{R}_{P_{t}} (\theta') } \\
        &= \mathrm{R}_{\gD_{t} (\theta')} (\theta) - \mathrm{R}_{\gD_{t} (\theta')} (\theta') \\
        &\le L_{\theta} \norm{\theta - \theta'}.
    \end{align*}
    Taking the sum of the two and swapping $\theta, \theta'$ in the argument concludes the proof.
\end{proof}

\section{Proofs for repeated risk minimization}

We will define and frequently use the simplified notation $\gD_t^\PS \coloneqq \gD_t(\theta_t^\PS)$ throughout the remaining parts of the appendix.


\subsection{\texorpdfstring%
{Proof of \cref{lem:pislipschitz}}%
{Proof of Lemma 3.1}}
\label{subsec:pislipschitz}

Here we prove \cref{lem:pislipschitz}, restated below for the sake of readability.

\lempislipschitz*

\begin{proof}
    (The proof is similar to that of Theorem 3.5 of \cite{perdomo20}.
    For completeness, we provide the full proof adapted to our settings.)

    Define $g(\varphi) = \mathrm{R}_{\gD_{t} (\theta)} (\varphi)$ and $g'(\varphi) = \mathrm{R}_{\gD_{t} (\theta')} (\varphi)$ for fixed $\theta, \theta' \in \Theta$.
    Let $\gG (\theta) \coloneqq \argmin_{\varphi \in \Theta} g(\varphi)$ and $\gG (\theta') \coloneqq \argmin_{\varphi \in \Theta} g'(\varphi)$.
    Then the first-order optimality conditions yield
    \begin{align*}
        \nabla g(\gG(\theta))^{\top} \big( \gG(\theta') - \gG(\theta) \big) &\ge 0, \qquad 
        \nabla g'(\gG(\theta'))^{\top} \big( \gG(\theta) - \gG(\theta') \big) \ge 0.
    \end{align*}
    By strong convexity, we have
    \begin{align*}
        g(\gG(\theta)) - g(\gG(\theta'))
        &\ge \nabla g(\gG(\theta'))^{\top} \big( \gG(\theta) - \gG(\theta') \big) + \frac{\mu}{2} \| \gG(\theta) - \gG(\theta') \|^2, \\
        g(\gG(\theta')) - g(\gG(\theta))
        &\ge \nabla g(\gG(\theta))^{\top} \big( \gG(\theta') - \gG(\theta) \big) + \frac{\mu}{2} \| \gG(\theta) - \gG(\theta') \|^2 \\
        &\ge \frac{\mu}{2} \| \gG(\theta) - \gG(\theta') \|^2,
    \end{align*}
    where the last inequality comes from the first-order optimality condition of $\gG (\theta) = \argmin_{\varphi \in \Theta} g(\varphi)$.
    Therefore we have
    \begin{align*}
        - \nabla g(\gG(\theta'))^{\top} \big( \gG(\theta) - \gG(\theta') \big)
        &\ge \mu \| \gG(\theta) - \gG(\theta') \|^2.
    \end{align*}
    Note that
    \begin{align*}
        - \nabla g(\gG(\theta'))^{\top} \big( \gG(\theta) - \gG(\theta') \big) 
        &\le - \nabla g(\gG(\theta'))^{\top} \big( \gG(\theta) - \gG(\theta') \big) + \nabla g'(\gG(\theta'))^{\top} \big( \gG(\theta) - \gG(\theta') \big) \\
        &\le \| \nabla g(\gG(\theta')) - \nabla g'(\gG(\theta')) \| \cdot \| \gG(\theta) - \gG(\theta') \|,
    \end{align*}
    where the first line comes from the first-order optimality condition of $\gG (\theta') = \argmin_{\varphi \in \Theta} g'(\varphi)$.
    
    Since $\nabla g(\varphi) = \E_{Z \sim \gD_{t} (\theta)} [\nabla_{\theta} \ell(Z, \varphi)]$ and $\nabla g'(\varphi) = \E_{Z \sim \gD_{t} (\theta')} [\nabla_{\theta} \ell(Z, \varphi)]$, we can observe that $\beta_{z}$-smoothness and the Kantorovich-Rubinstein duality of the $\gW_1$ distance implies that
    \begin{align*}
        \| \nabla g(\gG(\theta')) - \nabla g'(\gG(\theta')) \| 
        &\le \norm{ \E_{Z \sim \gD_{t} (\theta)} [\nabla_\theta \ell (Z, \gG(\theta'))] - \E_{Z \sim \gD_{t} (\theta')} [\nabla_\theta \ell (Z, \gG(\theta'))] } \\
        &\le \beta_{z} \cdot \gW_1 \big( \gD_{t} (\theta), \gD_{t} (\theta') \big) 
    \end{align*}
    and by $\epsilon$-sensitivity of $\gD (\theta)$, we have
    \begin{align*}
        \gW_1 (\gD_{t} (\theta), \gD_{t} (\theta'))
        &= \gW_1 \bigopen{ (1 - \alpha_{t}) \gD (\theta) + \alpha_{t} P_{t}, (1 - \alpha_{t}) \gD (\theta') + \alpha_{t} P_{t} } \\
        &\le (1 - \alpha_{t}) \gW_1 \bigopen{ \gD (\theta), \gD (\theta') } \\
        &\le (1 - \alpha_{t}) \epsilon \norm{ \theta - \theta' }.
    \end{align*}
    Therefore we have
    \begin{align*}
        \mu \| \gG(\theta) - \gG(\theta') \|^2
        &\le - \nabla g(\gG(\theta'))^{\top} \big( \gG(\theta) - \gG(\theta') \big) \\
        &\le (1 - \alpha_{t}) \epsilon \beta_{z} \cdot \norm{ \theta - \theta' } \cdot \| \gG(\theta) - \gG(\theta') \|.
    \end{align*}
    If $\gG(\theta) = \gG(\theta')$, the desired inequality is trivial.
    Otherwise, dividing by $\| \gG(\theta) - \gG(\theta') \|$ gives
    \begin{align*}
        \| \gG(\theta) - \gG(\theta') \| &\le \frac{(1 - \alpha_{t}) \epsilon \beta_{z}}{\mu} \| \theta - \theta' \|,
    \end{align*}
    for any $\theta, \theta'$.
    This immediately implies
    \begin{align*}
        \norm{\theta_{t+1} - \theta_{t}^{\PS}} &\le \frac{(1 - \alpha_{t}) \epsilon \beta_{z}}{\mu} \norm{ \theta_{t} - \theta_{t}^{\PS} },
    \end{align*}
    as desired, as $\theta_{t+1} = \mathcal{G} (\theta_{t})$ and $\theta_{t}^{\PS}$ is defined such that $\theta_{t}^{\PS} = \mathcal{G} (\theta_{t}^{\PS})$.
\end{proof}


\subsection{\texorpdfstring%
{Proof of \cref{thm:rrm}}%
{Proof of Theorem 3.2}}
\label{subsec:rrm}

Here we prove \cref{thm:rrm}, restated below for the sake of readability.

\thmrrm*

\begin{proof}
    \cref{lem:pislipschitz} immediately implies
    \begin{align*}
        \norm{\theta_{t+1} - \theta_{t+1}^{\PS}}
        &\le \norm{\theta_{t+1} - \theta_{t}^{\PS}} + \norm{\theta_{t}^{\PS} - \theta_{t+1}^{\PS}} \le \frac{(1 - \alpha_{t}) \epsilon \beta_{z}}{\mu} \norm{ \theta_{t} - \theta_{t}^{\PS} } + \Delta_{t}^{\PS}. 
    \end{align*}
    Therefore we can use \cref{lem:rgd_telescope} as
    \begin{align*}
        \sum_{t=1}^{T} (1 - \gamma_{t}) \norm{\theta_{t} - \theta_{t}^{\PS}} &\le \norm{\theta_{1} - \theta_{1}^{\PS}} - \gamma_{T} \norm{\theta_{T} - \theta_{T}^{\PS}} + \sum_{t=1}^{T-1} \Delta_{t}^{\PS},
    \end{align*}
    with $\gamma_{t} = \frac{(1 - \alpha_{t}) \epsilon \beta_{z}}{\mu}$.

    For any choice of nonnegative weights satisfying $w_{t} \le \frac{1 - \gamma_{t}}{L}$, we can conclude that
    \begin{align*}
        \sum_{t=1}^{T} w_{t} \bigopen{ \mathrm{R}_{\gD_{t}} (\theta_{t}) - \mathrm{R}_{\gD_{t}^{\PS}} (\theta_{t}^{\PS}) } &\le \sum_{t=1}^{T} w_{t} L \norm{ \theta_{t} - \theta_{t}^{\PS} } \\
        &\le \sum_{t=1}^{T} (1 - \gamma_{t}) \norm{ \theta_{t} - \theta_{t}^{\PS} } \\
        &\le \norm{ \theta_{1} - \theta_{1}^{\PS} } + \sum_{t=1}^{T-1} \Delta_{t}^{\PS}.
    \end{align*}
    
    Finally, we can observe that $\gamma_{t}$ is \textit{decreasing} in $\alpha_{t}$, and thus the expression $\frac{1 - \gamma_{t}}{L}$ is an \textit{increasing} function of $\alpha_{t}$.
    Therefore, if we let $\alpha_{\min} = \min_{t \in [T]} \alpha_{t}$ and choose 
    \begin{align*}
        \gamma = \frac{(1 - \alpha_{\min}) \epsilon \beta_{z}}{\mu},
    \end{align*}
    then choosing a constant $w_{t} \equiv w = \frac{1 - \gamma}{L}$ satisfies $w \le \frac{1 - \gamma_{t}}{L}$ for all $t$.
    Then we have
    \begin{align*}
        \mathrm{Reg}_{T}^{ \PS } (\gA_{\mathrm{RRM}})
        &= \frac{1}{w} \sum_{t=1}^{T} w \bigopen{ \mathrm{R}_{\gD_{t}} (\theta_{t}) - \mathrm{R}_{\gD_{t}^{\PS}} (\theta_{t}^{\PS}) } \le \frac{L}{1 - \gamma} \cdot \bigopen{ \norm{ \theta_{1} - \theta_{1}^{\PS} } + \sum_{t=1}^{T-1} \Delta_{t}^{\PS} },
    \end{align*}
    as in \eqref{eq:rrm}.
\end{proof}


\subsection{\texorpdfstring%
{Proof of \cref{prop:wassbound}}%
{Proof of Proposition 3.4}}
\label{subsec:wassbound}

Here we prove \cref{prop:wassbound}, restated below for the sake of readability.

\propwassbound*

\begin{proof}
    Let us define $\theta \coloneqq \theta_{t}^{\PS}$, $\theta' \coloneqq \theta_{t+1}^{\PS}$ and $\gD_t^\PS \coloneqq \gD_t(\theta_t^\PS)$, $\gD_{t+1}^\PS \coloneqq \gD_{t+1}(\theta_{t+1}^\PS)$ for simplicity.
    By the definition of performative stability, $\theta_t^\PS$ and $\theta_{t+1}^\PS$ are the constrained minimizers of $\mathrm{R}_{\gD_t^\PS}$ and $\mathrm{R}_{\gD_{t+1}^\PS}$, respectively.
    
    The first-order optimality conditions yield
    \begin{align*}
        \nabla \mathrm{R}_{\gD_t^\PS}(\theta)^{\top} (\theta' - \theta) &\ge 0, \qquad
        \nabla \mathrm{R}_{\gD_{t+1}^\PS}(\theta')^{\top} (\theta - \theta') \ge 0.
    \end{align*}
    If $\theta=\theta'$, the desired inequality is trivial.
    Otherwise, strong monotonicity of $\nabla \mathrm{R}_{\gD_t^\PS}$ and the above inequalities imply
    \begin{align*}
        \mu \norm{\theta - \theta'}^{2}
        &\le \bigopen{\nabla \mathrm{R}_{\gD_t^\PS}(\theta') - \nabla \mathrm{R}_{\gD_t^\PS}(\theta)}^{\top} (\theta' - \theta) \\
        &= \bigopen{\nabla \mathrm{R}_{\gD_t^\PS}(\theta') - \nabla \mathrm{R}_{\gD_{t+1}^\PS}(\theta')}^{\top} (\theta' - \theta) \\
        &\phantom{={}} + \nabla \mathrm{R}_{\gD_{t+1}^\PS}(\theta')^{\top} (\theta' - \theta)
        - \nabla \mathrm{R}_{\gD_t^\PS}(\theta)^{\top} (\theta' - \theta) \\
        &\le \bigopen{\nabla \mathrm{R}_{\gD_t^\PS}(\theta') - \nabla \mathrm{R}_{\gD_{t+1}^\PS}(\theta')}^{\top} (\theta' - \theta) \\
        &\le \norm{\nabla \mathrm{R}_{\gD_t^\PS}(\theta') - \nabla \mathrm{R}_{\gD_{t+1}^\PS}(\theta')} \cdot \norm{\theta - \theta'}.
    \end{align*}
    Dividing by $\norm{\theta - \theta'}$ gives
    \begin{align*}
        \norm{\theta - \theta'} \le \frac{1}{\mu}\norm{\nabla \mathrm{R}_{\gD_t^{\PS}}(\theta') - \nabla \mathrm{R}_{\gD_{t+1}^{\PS}}(\theta')},
    \end{align*}
    and plugging back in $\theta \coloneqq \theta_{t}^{\PS}$ and $\theta' \coloneqq \theta_{t+1}^{\PS}$,
    \begin{align*}
        \Delta_t^{\PS} &= \norm{\theta_{t}^{\PS} - \theta_{t+1}^{\PS}} \le \frac{1}{\mu}\norm{\nabla \mathrm{R}_{\gD_{t}^{\mathrm{PS}}} (\theta_{t+1}^{\mathrm{PS}}) - \nabla \mathrm{R}_{\gD_{t+1}^{\mathrm{PS}}} (\theta_{t+1}^{\mathrm{PS}})}.
    \end{align*}
    By $\beta_z$-smoothness, Kantorovich-Rubinstein duality, and \cref{lem:wassone}, we have
    \begin{align*}
        &\frac{1}{\mu} \norm{\nabla \mathrm{R}_{\gD_{t}^{\mathrm{PS}}} (\theta_{t+1}^{\mathrm{PS}}) - \nabla \mathrm{R}_{\gD_{t+1}^{\mathrm{PS}}} (\theta_{t+1}^{\mathrm{PS}})} \le \frac{\beta_{z}}{\mu} \cdot \gW_1 (\gD_{t}^{\mathrm{PS}}, \gD_{t+1}^{\mathrm{PS}}) \\
        &= \frac{\beta_{z}}{\mu} \cdot \gW_1 \left( (1 - \alpha_{t}) \gD (\theta_{t}^{\mathrm{PS}}) + \alpha_{t} P_{t}, (1 - \alpha_{t+1}) \gD (\theta_{t+1}^{\mathrm{PS}}) + \alpha_{t+1} P_{t+1} \right) \\
        &\le \frac{\beta_{z}}{\mu} \big( (1 - \alpha_{t}) \gW_1 (\gD (\theta_{t}^{\mathrm{PS}}), \gD (\theta_{t+1}^{\mathrm{PS}})) + \alpha_{t} \gW_1 (P_{t}, P_{t+1}) 
        + \lvert \alpha_{t} - \alpha_{t+1} \rvert \cdot \gW_1 (\gD (\theta_{t+1}^{\mathrm{PS}}), P_{t+1}) \big).
    \end{align*}
    Now, $\gW_1 (\gD (\theta_{t}^{\mathrm{PS}}), \gD (\theta_{t+1}^{\mathrm{PS}})) \le \epsilon \| \theta_t^{\PS} - \theta_{t+1}^{\PS} \| = \epsilon \Delta_{t}^{\PS}$ yields
    \begin{align*}
        \Delta_{t}^{\PS}
        \le \frac{\beta_{z}}{\mu} \big( (1 - \alpha_{t}) \epsilon \Delta_{t}^{\PS}
        &+ \alpha_{t} \gW_1 (P_{t}, P_{t+1}) + \lvert \alpha_{t} - \alpha_{t+1} \rvert \cdot \gW_1 (\gD (\theta_{t+1}^{\mathrm{PS}}), P_{t+1}) \big).
    \end{align*}
    Rearranging terms, we have
    \begin{align*}
        &\bigopen{1 - \frac{\beta_{z}}{\mu} \cdot (1 - \alpha_{t}) \epsilon} \Delta_{t}^{\PS} \le \frac{\beta_{z}}{\mu} \bigopen{ \alpha_{t} \gW_1 (P_{t}, P_{t+1}) + \lvert \alpha_{t} - \alpha_{t+1} \rvert \cdot \gW_1 (\gD (\theta_{t+1}^{\mathrm{PS}}), P_{t+1}) },
    \end{align*}
    and $\gW_1( \gD(\theta_{t+1}^{\PS}), P_{t+1}) \le C$ yields the given statement.
\end{proof}

\section{Proofs for repeated gradient descent}


\subsection{One-step inequality for RGD}
\label{subsec:rgd_onestep}

Here we prove \cref{lem:rgd_onestep}, which provides a one-step inequality for RGD.

\begin{restatable}{lemma}{lemrgdonestep}
    \label{lem:rgd_onestep}
    Suppose that $\gD(\theta)$ is $\epsilon$-sensitive and $\ell(z, \theta)$ is $\mu$-strongly convex \eqref{eq:strong_convexity} and $\beta_{\theta}$-smooth \eqref{eq:smoothness_theta} in $\theta$ and $\beta_z$-smooth \eqref{eq:smoothness_z} in $z$.
    Then the iterates of $\gA_{\mathrm{RGD}}$ satisfy
    \begin{align}
        \norm{\theta_{t+1} - \theta_{t+1}^{\PS}} &\le \gamma_{t} \norm{\theta_{t} - \theta_{t}^{\PS}} + \Delta_{t}^{\PS},
        \label{eq:rgd_onestep}
    \end{align}
    where $\gamma_{t} \coloneqq 1 - \eta_{t} (\mu - (1 - \alpha_{t}) \epsilon \beta_{z}) + \eta_{t}^{2} (\beta_{\theta}^{2} + (1 - \alpha_{t})^{2} \epsilon^{2} \beta_{z}^{2})$.
\end{restatable}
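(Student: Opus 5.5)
The plan is to bound $\norm{\theta_{t+1}-\theta_t^{\PS}}$ by $\gamma_t\norm{\theta_t-\theta_t^{\PS}}$ and then add $\Delta_t^{\PS}$ by the triangle inequality, exactly as in the proof of \cref{thm:rrm}. Write $G_t(\theta) \coloneqq \E_{Z\sim\gD_t(\theta)}[\nabla_\theta \ell(Z,\theta)]$ for the RGD direction. Because $\theta_t^{\PS}$ minimizes $\mathrm{R}_{\gD_t^{\PS}}$ over $\Theta$, the first-order condition gives $\theta_t^{\PS} = \Pi_\Theta(\theta_t^{\PS} - \eta_t G_t(\theta_t^{\PS}))$. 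This fixed-point characterization holds for any constrained minimizer, so no interiority assumption is needed. Since $\Pi_\Theta$ is nonexpansive,
\begin{align*}
\norm{\theta_{t+1}-\theta_t^{\PS}}^2 \le \norm{\theta_t-\theta_t^{\PS} - \eta_t\bigl(G_t(\theta_t)-G_t(\theta_t^{\PS})\bigr)}^2 .
\end{align*}

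Expanding the square gives
\begin{align*}
\norm{\theta_t-\theta_t^{\PS}}^2 - 2\eta_t\langle G_t(\theta_t)-G_t(\theta_t^{\PS}),\theta_t-\theta_t^{\PS}\rangle + \eta_t^2\norm{G_t(\theta_t)-G_t(\theta_t^{\PS})}^2 .
\end{align*}
I would handle both terms by splitting the difference as
\begin{align*}
G_t(\theta_t)-G_t(\theta_t^{\PS}) = \bigl[G_t(\theta_t) - \E_{\gD_t(\theta_t^{\PS})}\nabla\ell(Z,\theta_t)\bigr] + \bigl[\E_{\gD_t(\theta_t^{\PS})}\nabla\ell(Z,\theta_t) - G_t(\theta_t^{\PS})\bigr].
\end{align*}

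The first bracket is a distribution-shift term. Using $\beta_z$-smoothness, Kantorovich--Rubinstein duality, and the mixture bound $\gW_1(\gD_t(\theta),\gD_t(\theta'))\le(1-\alpha_t)\epsilon\norm{\theta-\theta'}$ from the proof of \cref{lem:pislipschitz}, its norm is at most $(1-\alpha_t)\epsilon\beta_z\norm{\theta_t-\theta_t^{\PS}}$. The second bracket is a gradient difference under a fixed distribution. By $\mu$-strong convexity its inner product with $\theta_t-\theta_t^{\PS}$ is at least $\mu\norm{\theta_t-\theta_t^{\PS}}^2$, and by $\beta_\theta$-smoothness its norm is at most $\beta_\theta\norm{\theta_t-\theta_t^{\PS}}$.

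The cross term then gives $-2\eta_t(\mu-(1-\alpha_t)\epsilon\beta_z)\norm{\theta_t-\theta_t^{\PS}}^2$. For the squared term I use $\norm{a+b}^2\le 2\norm a^2+2\norm b^2$, which gives $2\eta_t^2(\beta_\theta^2+(1-\alpha_t)^2\epsilon^2\beta_z^2)\norm{\theta_t-\theta_t^{\PS}}^2$. Together these yield
\begin{align*}
\norm{\theta_{t+1}-\theta_t^{\PS}}^2\le \bigl(1-2\eta_t a_t+2\eta_t^2 b_t\bigr)\norm{\theta_t-\theta_t^{\PS}}^2,
\end{align*}
with $a_t=\mu-(1-\alpha_t)\epsilon\beta_z$ and $b_t=\beta_\theta^2+(1-\alpha_t)^2\epsilon^2\beta_z^2$.

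The main obstacle is converting this squared contraction into the linear factor $\gamma_t = 1-\eta_t a_t+\eta_t^2 b_t$. I would use $\sqrt{1-2x}\le 1-x$ with $x=\eta_t a_t-\eta_t^2 b_t$, which is valid whenever $1-2x\ge 0$. This gives $\sqrt{1-2\eta_t a_t+2\eta_t^2 b_t}\le 1-\eta_t a_t+\eta_t^2 b_t=\gamma_t$. In the degenerate case where the quantity under the square root is nonnegative but $x<0$, the inequality $\sqrt{1-2x}\le 1-x$ still holds, because squaring shows $1-2x\le 1-2x+x^2$. It always holds when both sides are nonnegative, and here $\gamma_t = 1-x\ge \tfrac12$ since $1-2x\ge0$.

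Taking square roots and applying the triangle inequality $\norm{\theta_{t+1}-\theta_{t+1}^{\PS}}\le\norm{\theta_{t+1}-\theta_t^{\PS}}+\Delta_t^{\PS}$ then gives \eqref{eq:rgd_onestep}.
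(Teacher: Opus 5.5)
Your proposal is correct and matches the paper's proof essentially step for step. Both arguments use the fixed-point identity $\theta_t^{\PS} = \Pi_\Theta(\theta_t^{\PS} - \eta_t \nabla \mathrm{R}_{\gD_t^{\PS}}(\theta_t^{\PS}))$ from first-order optimality, nonexpansiveness of $\Pi_\Theta$, the same distribution-shift versus fixed-distribution split of the inner product, $\norm{a+b}^2 \le 2\norm{a}^2 + 2\norm{b}^2$ for the squared term, and $1-2x\le(1-x)^2$ to pass to $\gamma_t$. The differences are cosmetic: the paper splits the squared-norm term through $\nabla \mathrm{R}_{\gD_t}(\theta_t^{\PS})$ rather than reusing your inner-product split, and your handling of the square-root step (noting $\gamma_t \ge \tfrac12$ whenever $1-2x \ge 0$) is slightly more careful than the paper's.
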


\begin{proof}
    The proof is similar to that of Proposition~2.6 in \cite{mendlerdunner20}, but our proof relies only on the first-order optimality conditions in the definition of $\theta_t^\PS$, rather than assuming $\E \mathrm{R}_{\gD_t^\PS}(\theta_t^\PS) = 0$, which in turn requires $\theta_t^\PS$ to lie in the interior of $\Theta$, as in \cite{mendlerdunner20}.
    
    By definition of $\theta_t^\PS$, we can observe that
    \begin{align*}
        \nabla \mathrm{R}_{\gD_t^\PS}(\theta_t^\PS)^\top(\vartheta-\theta_t^\PS) \ge 0,
        \qquad \forall \vartheta\in\Theta.
    \end{align*}
    For any closed convex $\Theta$, a projection $y = \Pi_\Theta(u)$ of $u$ must satisfy
    \begin{align*}
        (y-u)^\top(\vartheta-y) \ge 0,
        \qquad \forall \vartheta \in \Theta.
    \end{align*}
    for $y=\theta_t^\PS$ and $u = \theta_t^\PS-\eta_t\nabla \mathrm{R}_{\gD_t^\PS}(\theta_t^\PS)$, this is equivalent to
    \begin{align*}
        \eta_t\nabla \mathrm{R}_{\gD_t^\PS}(\theta_t^\PS)^\top
        (\vartheta-\theta_t^\PS)
        \ge 0,
        \qquad \forall \vartheta\in\Theta,
    \end{align*}
    which is true by the definition of $\theta_t^\PS$.
    This shows that $y$ is a projection of $u$ onto $\Theta$, i.e.,
    \begin{align*}
        \theta_t^\PS &= \Pi_\Theta \bigopen{\theta_t^\PS - \eta_t \nabla \mathrm{R}_{\gD_t^\PS}(\theta_t^\PS)}
    \end{align*}
    for any $\eta_t>0$.
    Therefore, by nonexpansiveness of $\Pi_\Theta$,
    \begin{align*}
        \norm{\theta_{t+1} - \theta_{t+1}^{\PS}} &\le \norm{\theta_{t+1} - \theta_{t}^{\PS}} + \norm{\theta_{t}^{\PS} - \theta_{t+1}^{\PS}} \\
        &= \bignorm{\Pi_\Theta \bigopen{\theta_t - \eta_t \nabla \mathrm{R}_{\gD_t}(\theta_t)} - \Pi_\Theta \bigopen{\theta_t^\PS - \eta_t \nabla \mathrm{R}_{\gD_t^\PS}(\theta_t^\PS)}} + \Delta_t^{\PS} \\
        &\le \bignorm{\theta_t - \theta_t^\PS - \eta_t \bigopen{\nabla \mathrm{R}_{\gD_t}(\theta_t) - \nabla \mathrm{R}_{\gD_t^\PS}(\theta_t^\PS)}} + \Delta_t^{\PS}.
    \end{align*}
    Squaring and expanding the first norm gives
    \begin{align}
        \begin{aligned}
        \bignorm{\theta_t - \theta_t^\PS - \eta_t \bigopen{\nabla \mathrm{R}_{\gD_t}(\theta_t) - \nabla \mathrm{R}_{\gD_t^\PS}(\theta_t^\PS)}}^2 &= \norm{\theta_t - \theta_t^\PS}^2 - 2\eta_t \bigopen{\nabla \mathrm{R}_{\gD_t}(\theta_t) - \nabla \mathrm{R}_{\gD_t^\PS}(\theta_t^\PS)}^\top(\theta_t-\theta_t^\PS) \\
        &\phantom{={}} + \eta_t^2 \bignorm{\nabla \mathrm{R}_{\gD_t}(\theta_t) - \nabla \mathrm{R}_{\gD_t^\PS}(\theta_t^\PS)}^2.
        \end{aligned}
        \label{eq:rgd_basic1}
    \end{align}
    The inner product term of \eqref{eq:rgd_basic1} can be decomposed as
    \begin{align}
        \begin{aligned}
        \bigopen{\nabla \mathrm{R}_{\gD_t}(\theta_t) - \nabla \mathrm{R}_{\gD_t^\PS}(\theta_t^\PS)}^\top(\theta_t-\theta_t^\PS) &= \bigopen{ \E_{Z \sim \gD_{t}} [\nabla_{\theta} \ell (Z, \theta_{t})] - \E_{Z \sim \gD_{t}^{\PS}} [\nabla_{\theta} \ell (Z, \theta_{t}^{\PS})] }^{\top} ( \theta_{t} - \theta_{t}^{\PS} ) \\
        &= \bigopen{ \E_{Z \sim \gD_{t}} [\nabla_{\theta} \ell (Z, \theta_{t})] - \E_{Z \sim \gD_{t}^{\PS}} [\nabla_{\theta} \ell (Z, \theta_{t})] }^{\top} ( \theta_{t} - \theta_{t}^{\PS} ) \\
        &\phantom{{}\ge{}} {}+{} \bigopen{ \E_{Z \sim \gD_{t}^{\PS}} [\nabla_{\theta} \ell (Z, \theta_{t})] - \E_{Z \sim \gD_{t}^{\PS}} [\nabla_{\theta} \ell (Z, \theta_{t}^{\PS})] }^{\top} ( \theta_{t} - \theta_{t}^{\PS} ).
        \end{aligned}
        \label{eq:rgd_basic2}
    \end{align}
    Define the auxiliary function
    \begin{align*}
        \xi_{t} (z) &\coloneqq \nabla_{\theta} \ell (z, \theta_{t})^{\top} ( \theta_{t} - \theta_{t}^{\PS} ),
    \end{align*}
    which is $\beta_{z} \norm{\theta_{t} - \theta_{t}^{\PS}}$-Lipschitz in $z$.
    The first term of the RHS of \eqref{eq:rgd_basic2} satisfies
    \begin{align}
        \begin{aligned}
        \bigopen{ \E_{Z \sim \gD_{t}} [\nabla_{\theta} \ell (Z, \theta_{t})] - \E_{Z \sim \gD_{t}^{\PS}} [\nabla_{\theta} \ell (Z, \theta_{t})] }^{\top} ( \theta_{t} - \theta_{t}^{\PS} )
        &= \E_{Z \sim \gD_{t}} [\xi_{t} (Z)] - \E_{Z \sim \gD_{t}^{\PS}} [\xi_{t} (Z)] \\
        &\ge - \beta_{z} \norm{\theta_{t} - \theta_{t}^{\PS}} \cdot \gW_1 (\gD_{t}, \gD_{t}^{\PS}) \\
        &\ge - \beta_{z} \norm{\theta_{t} - \theta_{t}^{\PS}} \cdot (1 - \alpha_{t}) \epsilon \norm{\theta_{t} - \theta_{t}^{\PS}} \\
        &= - (1 - \alpha_{t}) \epsilon \beta_{z} \norm{\theta_{t} - \theta_{t}^{\PS}}^2,
        \end{aligned}
        \label{eq:rgd_basic3}
    \end{align}
    where the first inequality follows from the fact that $\xi_{t} (z)$ is Lipschitz, and the second inequality is by $(1 - \alpha_{t}) \epsilon$-sensitivity of $\gD_{t} (\theta)$.
    The second term of the RHS of \eqref{eq:rgd_basic2} satisfies
    \begin{align}
        \begin{aligned}
        \bigopen{ \E_{Z \sim \gD_{t}^{\PS}} [\nabla_{\theta} \ell (Z, \theta_{t})] - \E_{Z \sim \gD_{t}^{\PS}} [\nabla_{\theta} \ell (Z, \theta_{t}^{\PS})] }^{\top} ( \theta_{t} - \theta_{t}^{\PS} ) &\ge \mu \norm{\theta_{t} - \theta_{t}^{\PS}}^2,
        \end{aligned}
        \label{eq:rgd_basic4}
    \end{align}
    by $\mu$-strong monotonicity of the gradients.
    Plugging \eqref{eq:rgd_basic3}, \eqref{eq:rgd_basic4} into \eqref{eq:rgd_basic2}, we have
    \begin{align}
        \begin{aligned}
        \bigopen{\nabla \mathrm{R}_{\gD_t}(\theta_t) - \nabla \mathrm{R}_{\gD_t^\PS}(\theta_t^\PS)}^\top(\theta_t-\theta_t^\PS) &= \bigopen{\nabla \mathrm{R}_{\gD_t}(\theta_t) - \nabla \mathrm{R}_{\gD_t^\PS}(\theta_t)}^\top (\theta_t-\theta_t^\PS) \\
        &\phantom{={}} + \bigopen{\nabla \mathrm{R}_{\gD_t^\PS}(\theta_t) - \nabla \mathrm{R}_{\gD_t^\PS}(\theta_t^\PS)}^\top(\theta_t-\theta_t^\PS) \\
        &\ge \bigopen{\mu - (1-\alpha_t)\epsilon\beta_z}\norm{\theta_t-\theta_t^\PS}^2. \phantom{\frac{a}{a}}
        \end{aligned}
        \label{eq:rgd_basic5}
    \end{align}
    The gradient-difference norm term of \eqref{eq:rgd_basic1} satisfies
    \begin{align}
        \begin{aligned}
        \bignorm{\nabla \mathrm{R}_{\gD_t}(\theta_t) - \nabla \mathrm{R}_{\gD_t^\PS}(\theta_t^\PS)}^2
        &\le 2 \bignorm{\nabla \mathrm{R}_{\gD_{t}} (\theta_{t}) - \nabla \mathrm{R}_{\gD_{t}} (\theta_{t}^{\PS})}^{2} + 2 \bignorm{\nabla \mathrm{R}_{\gD_{t}} (\theta_{t}^{\PS}) - \nabla \mathrm{R}_{\gD_{t}^{\PS}} (\theta_{t}^{\PS})}^{2} \\
        &\le 2 \beta_{\theta}^{2} \norm{\theta_{t} - \theta_{t}^{\PS}}^2 + 2 (1 - \alpha_{t})^{2} \epsilon^{2} \beta_{z}^{2} \norm{\theta_{t} - \theta_{t}^{\PS}}^2 \\
        &= 2 (\beta_{\theta}^{2} + (1 - \alpha_{t})^{2} \epsilon^{2} \beta_{z}^{2}) \norm{\theta_{t} - \theta_{t}^{\PS}}^2,
        \end{aligned}
        \label{eq:rgd_basic6}
    \end{align}
    by smoothness and sensitivity.

    Plugging \eqref{eq:rgd_basic5} and \eqref{eq:rgd_basic6} into \eqref{eq:rgd_basic1}, we have
    \begin{align*}
        \bignorm{\theta_t - \theta_t^\PS - \eta_t \bigopen{\nabla \mathrm{R}_{\gD_t}(\theta_t) - \nabla \mathrm{R}_{\gD_t^\PS}(\theta_t^\PS)}}^2  &\le \bigopen{ 1 - 2 \eta_{t} (\mu - (1 - \alpha_{t}) \epsilon \beta_{z}) + 2 \eta_{t}^{2} (\beta_{\theta}^{2} + (1 - \alpha_{t})^{2} \epsilon^{2} \beta_{z}^{2}) } \norm{\theta_{t} - \theta_{t}^{\PS}}^2 \\
        &\le \bigopen{ 1 - \eta_{t} (\mu - (1 - \alpha_{t}) \epsilon \beta_{z}) + \eta_{t}^{2} (\beta_{\theta}^{2} + (1 - \alpha_{t})^{2} \epsilon^{2} \beta_{z}^{2}) }^2 \norm{\theta_{t} - \theta_{t}^{\PS}}^2 \\
        &= \gamma_{t}^2 \norm{\theta_{t} - \theta_{t}^{\PS}}^2,
    \end{align*}
    where the second inequality uses $1-2x \le (1-x)^2$ for simplification.
    Therefore we have
    \begin{align*}
        \norm{\theta_{t+1} - \theta_{t+1}^{\PS}} &\le \gamma_{t} \norm{\theta_{t} - \theta_{t}^{\PS}} + \Delta_{t}^{\PS},
    \end{align*}
    which is equivalent to \eqref{eq:rgd_onestep}.
\end{proof}


\subsection{\texorpdfstring%
{Proof of \cref{thm:rgd}}%
{Proof of Theorem 4.1}}
\label{subsec:rgd}

Here we prove \cref{thm:rgd}, restated below for the sake of readability.

\thmrgd*

\begin{proof}
    By \cref{lem:rgd_onestep} (proven in \cref{subsec:rgd_onestep}), we have
    \begin{align*}
        \norm{\theta_{t+1} - \theta_{t+1}^{\PS}} &\le \gamma_{t} \norm{\theta_{t} - \theta_{t}^{\PS}} + \Delta_{t}^{\PS},
    \end{align*}
    where $\gamma_{t} \coloneqq 1 - \eta_{t} (\mu - (1 - \alpha_{t}) \epsilon \beta_{z}) + \eta_{t}^{2} (\beta_{\theta}^{2} + (1 - \alpha_{t})^{2} \epsilon^{2} \beta_{z}^{2})$.
    
    \cref{lem:rgd_telescope} yields
    \begin{align*}
        \sum_{t=1}^{T} (1 - \gamma_{t}) \norm{ \theta_{t} - \theta_{t}^{\PS} }
        &\le \norm{ \theta_{1} - \theta_{1}^{\PS} } - \gamma_{T} \norm{ \theta_{T} - \theta_{T}^{\PS} } + \sum_{t=1}^{T-1} \Delta_{t}^{\PS} \\
        &\le \norm{ \theta_{1} - \theta_{1}^{\PS} } + \sum_{t=1}^{T-1} \Delta_{t}^{\PS},
    \end{align*}
    and thus
    \begin{align*}
        \sum_{t=1}^{T} w_{t} \bigopen{ \mathrm{R}_{\gD_{t}} (\theta_{t}) - \mathrm{R}_{\gD_{t}^{\PS}} (\theta_{t}^{\PS}) } &\le \sum_{t=1}^{T} w_{t} L \norm{ \theta_{t} - \theta_{t}^{\PS} } \\
        &\le \sum_{t=1}^{T} (1 - \gamma_{t}) \norm{ \theta_{t} - \theta_{t}^{\PS} } \\
        &\le \norm{ \theta_{1} - \theta_{1}^{\PS} } + \sum_{t=1}^{T-1} \Delta_{t}^{\PS},
    \end{align*}
    for any choice of weights satisfying $w_{t} \le \frac{1 - \gamma_{t}}{L}$.
    Therefore, if we set 
    \begin{align*}
        \overline{\gamma} = 1 - \frac{(\mu - (1 - \alpha_{\min}) \epsilon \beta_{z})^2}{4 ( \beta_{\theta}^{2} + (1 - \alpha_{\min})^{2} \epsilon^{2} \beta_{z}^{2} )},
    \end{align*}
    then choosing a constant $w_{t} \equiv w = \frac{1 - \overline{\gamma}}{L}$ satisfies $w \le \frac{1 - \gamma_{t}}{L}$ for all $t$.
    Then we have
    \begin{align*}
        \mathrm{Reg}_{T}^{ \PS } (\gA_{\mathrm{RGD}})
        &= \frac{1}{w} \sum_{t=1}^{T} w \bigopen{ \mathrm{R}_{\gD_{t}} (\theta_{t}) - \mathrm{R}_{\gD_{t}^{\PS}} (\theta_{t}^{\PS}) } \\
        &\le \frac{1}{w} \bigopen{\norm{ \theta_{1} - \theta_{1}^{\PS} } + \sum_{t=1}^{T-1} \Delta_{t}^{\PS}},
    \end{align*}
    which is equivalent to \eqref{eq:rgd}.
\end{proof}

\section{Proofs for greedy SGD}


\subsection{One-step inequality for greedy SGD}
\label{subsec:sgdonestep}

Here we prove \cref{lem:sgdonestep}, which provides a one-step inequality for greedy SGD.

\begin{restatable}{lemma}{lemsgdonestep}
    \label{lem:sgdonestep}
    Suppose that $\gD(\theta)$ is $\epsilon$-sensitive and $\ell(z, \theta)$ is $\mu$-strongly convex \eqref{eq:strong_convexity} and $\beta_{\theta}$-smooth \eqref{eq:smoothness_theta} in $\theta$ and $\beta_z$-smooth \eqref{eq:smoothness_z} in $z$.
    Also, suppose that we have quadratic-bounded variance (Ass.~\ref{ass:bv}).
    Then, the iterates of $\gA_{\mathrm{RSGD\text{-}G}}$ satisfy
    \begin{align}
        \begin{aligned}
        \E [\norm{ \theta_{t+1} - \theta_{t+1}^{\PS} }^2]
        \le \gamma_{t}' \E [\norm{ \theta_{t} - \theta_{t}^{\PS} }^2]
        &+ 2 \Delta_{t}^{\PS} \sqrt{ \gamma_{t}' \E [\norm{ \theta_{t} - \theta_{t}^{\PS} }^2] } + 2 \eta_{t}^2 \sigma^2 + 2 (\Delta_{t}^{\PS})^2,
        \end{aligned}
        \label{eq:sgdonestep}
    \end{align}
    where $\gamma_t' \coloneqq 1 - 2 \eta_{t} \bigopen{\mu - (1 - \alpha_{t}) \epsilon \beta_{z}} + \eta_{t}^{2} C_V^2 (1 + \frac{(1 - \alpha_{t}) \epsilon \beta_{z}}{\mu})^2$ and $\eta_{t} > 0$ is small enough so that $\gamma_t' \ge 0$.
\end{restatable}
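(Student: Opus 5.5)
We propose to mirror the proof of \cref{lem:rgd_onestep}, now in expectation and in squared form. The shape of the target inequality suggests a specific decomposition. Write $\theta_{t+1}-\theta_{t+1}^{\PS} = (\theta_{t+1}-\theta_t^{\PS}) + (\theta_t^{\PS}-\theta_{t+1}^{\PS})$ and expand the square:
\begin{align*}
\norm{\theta_{t+1}-\theta_{t+1}^{\PS}}^2 \le \norm{\theta_{t+1}-\theta_t^{\PS}}^2 + 2\Delta_t^{\PS}\norm{\theta_{t+1}-\theta_t^{\PS}} + (\Delta_t^{\PS})^2 .
\end{align*}
Taking expectations and applying Jensen, $\E\norm{\theta_{t+1}-\theta_t^{\PS}} \le \sqrt{\E\norm{\theta_{t+1}-\theta_t^{\PS}}^2}$. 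It therefore suffices to prove the core contraction
\begin{align*}
\E\norm{\theta_{t+1}-\theta_t^{\PS}}^2 \le \gamma_t'\,\E\norm{\theta_t-\theta_t^{\PS}}^2 + 2\eta_t^2\sigma^2 .
\end{align*}
The additive $2\eta_t^2\sigma^2$ is slack to be absorbed, either through $\sqrt{a+b}\le\sqrt a+\sqrt b$ or through a cruder $(x+y)^2\le 2x^2+2y^2$ split. This absorption is what produces the constants $2$ in front of $\eta_t^2\sigma^2$ and $(\Delta_t^{\PS})^2$.

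For the core contraction, we reuse the fixed-point identity from the proof of \cref{lem:rgd_onestep}: $\theta_t^{\PS}=\Pi_\Theta(\theta_t^{\PS}-\eta_t \nabla\mathrm{R}_{\gD_t^{\PS}}(\theta_t^{\PS}))$. A cleaner route, however, is to compare against $\theta_t^{\PS}$ directly via nonexpansiveness: $\theta_t^{\PS}\in\Theta$ gives $\norm{\theta_{t+1}-\theta_t^{\PS}}^2\le\norm{\theta_t-\eta_t g_t-\theta_t^{\PS}}^2$, where $g_t=\nabla_\theta\ell(Z_t,\theta_t)$. Expanding, conditioning on $\theta_t$, and using $\E[g_t\mid\theta_t]=\nabla\mathrm{R}_{\gD_t}(\theta_t)$ leaves two terms to control. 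The first is the cross term $-2\eta_t\nabla\mathrm{R}_{\gD_t}(\theta_t)^\top(\theta_t-\theta_t^{\PS})$. We decompose it as in \eqref{eq:rgd_basic2} into a distribution-shift part, bounded via the auxiliary Lipschitz function $\xi_t$ and $(1-\alpha_t)\epsilon$-sensitivity, and a strong-monotonicity part. Combining these with the first-order optimality condition $\nabla\mathrm{R}_{\gD_t^{\PS}}(\theta_t^{\PS})^\top(\theta_t-\theta_t^{\PS})\ge 0$ yields the lower bound $(\mu-(1-\alpha_t)\epsilon\beta_z)\norm{\theta_t-\theta_t^{\PS}}^2$. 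The second is the second-moment term $\eta_t^2\E[\norm{g_t}^2\mid\theta_t]$, which we bound using \eqref{ass:bv} by $\eta_t^2(\sigma^2+C_V^2\norm{\theta_t-\theta^\star_{\gD_t}}^2)$.

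The step we expect to be the main obstacle is relating $\theta^\star_{\gD_t}=\gG(\theta_t)$ back to $\theta_t^{\PS}$. Here \cref{lem:pislipschitz} (more precisely, the Lipschitz bound on $\gG$ established in its proof) gives $\norm{\gG(\theta_t)-\theta_t^{\PS}}\le \frac{(1-\alpha_t)\epsilon\beta_z}{\mu}\norm{\theta_t-\theta_t^{\PS}}$. The triangle inequality then yields
\begin{align*}
\norm{\theta_t-\theta^\star_{\gD_t}}\le \Bigl(1+\frac{(1-\alpha_t)\epsilon\beta_z}{\mu}\Bigr)\norm{\theta_t-\theta_t^{\PS}},
\end{align*}
which is exactly the factor appearing in $\gamma_t'$. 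Taking total expectations gives the core contraction with coefficient $\gamma_t'$, and the requirement $\gamma_t'\ge0$ is what makes the square root in the target inequality well defined. The remaining work is to carry the $\sigma^2$ term through the square root and the cross term with $\Delta_t^{\PS}$ by elementary inequalities, accepting the factor-$2$ losses stated in the lemma.
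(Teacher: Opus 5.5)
Your argument is essentially the paper's proof. Its ingredients are the same: nonexpansiveness of $\Pi_\Theta$ against $\theta_t^{\PS}\in\Theta$, the cross-term lower bound from \eqref{eq:rgd_basic5} combined with first-order optimality of $\theta_t^{\PS}$, the second-moment bound from \eqref{ass:bv} combined with the Lipschitz bound on $\gG$ from \cref{lem:pislipschitz}, and then Jensen and elementary inequalities.

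One bookkeeping claim is wrong, though. The intermediate bound you say suffices, $\E\norm{\theta_{t+1}-\theta_t^{\PS}}^2\le\gamma_t'\E\norm{\theta_t-\theta_t^{\PS}}^2+2\eta_t^2\sigma^2$, does not yield the lemma. If, say, $\theta_t=\theta_t^{\PS}$, your chain only gives $2\eta_t^2\sigma^2+2\sqrt{2}\,\Delta_t^{\PS}\eta_t\sigma+(\Delta_t^{\PS})^2$. This exceeds the target $2\eta_t^2\sigma^2+2(\Delta_t^{\PS})^2$ whenever $0<\Delta_t^{\PS}<2\sqrt{2}\,\eta_t\sigma$.

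What your expansion actually proves, and what the paper uses, is the sharper bound with $+\eta_t^2\sigma^2$. The two factors of $2$ then come from $\sqrt{a+b}\le\sqrt a+\sqrt b$ followed by the AM--GM step $2\Delta_t^{\PS}\eta_t\sigma\le\eta_t^2\sigma^2+(\Delta_t^{\PS})^2$ on the cross term. By contrast, splitting $\norm{\theta_{t+1}-\theta_{t+1}^{\PS}}^2$ itself via $\norm{x+y}^2\le 2\norm{x}^2+2\norm{y}^2$ would put $2\gamma_t'$ in front of $\E\norm{\theta_t-\theta_t^{\PS}}^2$, which is not the stated form.
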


\begin{proof}
    We have
    \begin{align}
        \begin{aligned}
        \norm{\theta_{t+1} - \theta_{t+1}^{\PS}}
        &\le \norm{\theta_{t+1} - \theta_{t}^{\PS}} + \norm{\theta_{t}^{\PS} - \theta_{t+1}^{\PS}} \\
        &= \bignorm{\Pi_{\Theta} ( \theta_{t} - \eta_{t} \nabla_{\theta} \ell (Z_{t}, \theta_{t}) ) - \Pi_{\Theta} (\theta_{t}^{\PS})} + \norm{\theta_{t}^{\PS} - \theta_{t+1}^{\PS}} \\
        &\le \norm{ \theta_{t} - \theta_{t}^{\PS} - \eta_{t} \nabla_{\theta} \ell (Z_{t}, \theta_{t}) } + \Delta_{t}^{\PS},
        \end{aligned}
        \label{eq:sgdproj}
    \end{align}
    since $\theta_t^\PS=\Pi_\Theta(\theta_t^\PS)$.
    Also, under \cref{ass:bv}, we have (conditioned on $\theta_{t}$)
    \begin{align}
        \begin{aligned}
        \E_{Z_{t} \sim \gD_{t}} [\norm{\nabla_{\theta} \ell (Z_{t}, \theta_{t})}^2] &\le \sigma^2 + C_V^2 \norm{\theta_{t} - \theta_{\gD_{t}}^{\star}}^2 \\
        &\le \sigma^2 + C_V^2 \bigopen{ \norm{\theta_{t} - \theta_{t}^{\PS}} + \norm{\theta_{t}^{\PS} - \theta_{\gD_{t}}^{\star}} }^2 \\
        &\le \sigma^2 + C_V^2 \bigopen{ 1 + \frac{(1 - \alpha_{t}) \epsilon \beta_{z}}{\mu} }^2 \norm{\theta_{t} - \theta_{t}^{\PS}}^2
        \end{aligned}
        \label{eq:sigmabound}
    \end{align}
    where the last inequality follows from \cref{lem:pislipschitz}.
    We can observe that
    \begin{align*}
        &\E_{Z_{t} \sim \gD_{t}} \norm{ \theta_{t} - \theta_{t}^{\PS} - \eta_{t} \nabla_{\theta} \ell (Z_{t}, \theta_{t}) }^2 \le \norm{\theta_{t} - \theta_{t}^{\PS}}^2 - 2 \eta_{t} \nabla \mathrm{R}_{\gD_{t}} (\theta_{t})^{\top} ( \theta_{t} - \theta_{t}^{\PS} ) + \eta_{t}^{2} \E_{Z_{t} \sim \gD_{t}} [\bignorm{\nabla_{\theta} \ell (Z_{t}, \theta_{t})}^2].
    \end{align*}
    Since we have $\nabla \mathrm{R}_{\gD_t^\PS}(\theta_t^\PS)^\top (\theta_t - \theta_t^\PS) \ge 0$ by the first-order optimality condition of $\theta_t^\PS$, we have
    \begin{align*}
        \nabla \mathrm{R}_{\gD_{t}} (\theta_{t})^{\top} ( \theta_{t} - \theta_{t}^{\PS} ) 
        &= \bigopen{\nabla \mathrm{R}_{\gD_{t}} (\theta_{t}) - \nabla R_{\gD_t^\PS}(\theta_t^\PS)}^{\top} ( \theta_{t} - \theta_{t}^{\PS} ) + \nabla R_{\gD_t^\PS}(\theta_t^\PS)^\top (\theta_t - \theta_t^\PS) \\
        &\ge \bigopen{\nabla \mathrm{R}_{\gD_{t}} (\theta_{t}) - \nabla R_{\gD_t^\PS}(\theta_t^\PS)}^{\top} ( \theta_{t} - \theta_{t}^{\PS} ) \\
        &\ge \bigopen{\mu - (1 - \alpha_{t}) \epsilon \beta_{z}} \norm{\theta_{t} - \theta_{t}^{\PS}}^2,
    \end{align*}
    by \eqref{eq:rgd_basic5} from the proof of \cref{lem:rgd_onestep}.
    We can also use \eqref{eq:sigmabound} to obtain
    \begin{align*}
        &\norm{\theta_{t} - \theta_{t}^{\PS}}^2 - 2 \eta_{t} \nabla \mathrm{R}_{\gD_{t}} (\theta_{t})^{\top} ( \theta_{t} - \theta_{t}^{\PS} ) + \eta_{t}^{2} \E_{Z_{t} \sim \gD_{t}} [\bignorm{\nabla_{\theta} \ell (Z_{t}, \theta_{t})}^2] \\
        &\le \norm{\theta_{t} - \theta_{t}^{\PS}}^2 - 2 \eta_{t} \bigopen{\mu - (1 - \alpha_{t}) \epsilon \beta_{z}} \norm{\theta_{t} - \theta_{t}^{\PS}}^2 \\
        &\phantom{{}\le{}} + \eta_{t}^{2} \bigopen{\sigma^2 + C_V^2 \bigopen{ 1 + \frac{(1 - \alpha_{t}) \epsilon \beta_{z}}{\mu} }^2 \norm{\theta_{t} - \theta_{t}^{\PS}}^2}.
    \end{align*}
    Therefore, choosing
    \begin{align*}
        \gamma_{t}' &\coloneqq 1 - 2 \eta_{t} \bigopen{\mu - (1 - \alpha_{t}) \epsilon \beta_{z}} + \eta_{t}^{2} C_V^2 \bigopen{ 1 + \frac{(1 - \alpha_{t}) \epsilon \beta_{z}}{\mu} }^2
    \end{align*}
    to simplify as
    \begin{align*}
        \E_{Z_{t} \sim \gD_{t}} [\norm{ \theta_{t} - \theta_{t}^{\PS} - \eta_{t} \nabla_{\theta} \ell (Z_{t}, \theta_{t}) }^2] 
        &\le \gamma_{t}' \norm{ \theta_{t} - \theta_{t}^{\PS} }^2 + \eta_{t}^2 \sigma^2,
    \end{align*}
    conditioned on $\theta_{t}$, or
    \begin{align}
        \E [\norm{ \theta_{t} - \theta_{t}^{\PS} - \eta_{t} \nabla_{\theta} \ell (Z_{t}, \theta_{t}) }^2] 
        &\le \gamma_{t}' \E [\norm{ \theta_{t} - \theta_{t}^{\PS} }^2] + \eta_{t}^2 \sigma^2,
        \label{eq:sgdintermed}
    \end{align}
    where the expectations are now taken on the entire history.
    Therefore, starting from \eqref{eq:sgdproj}, we have
    \begin{align*}
        \norm{\theta_{t+1}-\theta_{t+1}^\PS}^2
        &\le \left( \norm{\theta_t-\theta_t^\PS-\eta_t\nabla\ell(Z_t,\theta_t)}+\Delta_t^\PS \right)^2,
    \end{align*}
    and therefore
    \begin{align*}
        \E [\norm{\theta_{t+1} - \theta_{t+1}^{\PS}}^2] 
        &\le \E [\norm{ \theta_{t} - \theta_{t}^{\PS} - \eta_{t} \nabla_{\theta} \ell (Z_{t}, \theta_{t}) }^2]  + 2 \Delta_{t}^{\PS} \cdot \E [\norm{ \theta_{t} - \theta_{t}^{\PS} - \eta_{t} \nabla_{\theta} \ell (Z_{t}, \theta_{t}) }] + (\Delta_{t}^{\PS})^2 \\
        &\le \E [\norm{ \theta_{t} - \theta_{t}^{\PS} - \eta_{t} \nabla_{\theta} \ell (Z_{t}, \theta_{t}) }^2]  + 2 \Delta_{t}^{\PS} \cdot \sqrt{\E [\norm{ \theta_{t} - \theta_{t}^{\PS} - \eta_{t} \nabla_{\theta} \ell (Z_{t}, \theta_{t}) }^2]} + (\Delta_{t}^{\PS})^2.
    \end{align*}
    Finally, we can use \eqref{eq:sgdintermed}, $\sqrt{a+b}\le\sqrt{a}+\sqrt{b}$, and AM-GM to obtain
    \begin{align*}
        &\E [\norm{ \theta_{t} - \theta_{t}^{\PS} - \eta_{t} \nabla_{\theta} \ell (Z_{t}, \theta_{t}) }^2] + 2 \Delta_{t}^{\PS} \cdot \sqrt{\E [\norm{ \theta_{t} - \theta_{t}^{\PS} - \eta_{t} \nabla_{\theta} \ell (Z_{t}, \theta_{t}) }^2]} + (\Delta_{t}^{\PS})^2 \\
        &\le \gamma_{t}' \E [\norm{ \theta_{t} - \theta_{t}^{\PS} }^2] + \eta_{t}^2 \sigma^2 + 2 \Delta_{t}^{\PS} \sqrt{ \gamma_{t}' \E [\norm{ \theta_{t} - \theta_{t}^{\PS} }^2] + \eta_{t}^2 \sigma^2 } + (\Delta_{t}^{\PS})^2 \\
        &\le \gamma_{t}' \E [\norm{ \theta_{t} - \theta_{t}^{\PS} }^2] + \eta_{t}^2 \sigma^2 + 2 \Delta_{t}^{\PS} \sqrt{ \gamma_{t}' \E [\norm{ \theta_{t} - \theta_{t}^{\PS} }^2] } + 2 \eta_{t} \sigma \Delta_{t}^{\PS} + (\Delta_{t}^{\PS})^2 \\
        &\le \gamma_{t}' \E [\norm{ \theta_{t} - \theta_{t}^{\PS} }^2] + 2 \Delta_{t}^{\PS} \sqrt{ \gamma_{t}' \E [\norm{ \theta_{t} - \theta_{t}^{\PS} }^2] } + 2 \eta_{t}^2 \sigma^2 + 2 (\Delta_{t}^{\PS})^2,
    \end{align*}
    which proves the given statement.
\end{proof}


\subsection{\texorpdfstring%
{Upper bound on $\E \| \theta_{T} - \theta_{T}^{\PS} \|^2$ for greedy SGD}%
{Upper bounds of squared iterate norms for greedy SGD}}
\label{subsec:rsgd_wreg}

Here we prove \cref{lem:rsgd_wreg}, which provides upper bounds of $\E \| \theta_{T} - \theta_{T}^{\PS} \|^2$ for a fixed $T$.

\begin{restatable}{lemma}{lemrsgdwreg}
    \label{lem:rsgd_wreg}
    Suppose that $\gD \colon \Theta \rightarrow \gP (\gZ)$ is $\epsilon$-sensitive and $\ell(z, \theta)$ is $\mu$-strongly convex \eqref{eq:strong_convexity} and $\beta_{\theta}$-smooth \eqref{eq:smoothness_theta} in $\theta$ and $\beta_z$-smooth \eqref{eq:smoothness_z} in $z$
    Also, suppose that $\epsilon < \frac{\mu}{(1 - \alpha_{\min}) \beta_{z}}$ and we have quadratic-bounded variance (Ass.~\ref{ass:bv}).
    Then, $\gA_{\mathrm{RSGD\text{-}G}}$ with step sizes
    \begin{align}
        \eta_{t} &= \frac{4}{A(t + t_0)}, \quad t_0 = \max \left\{ 1, \left\lceil \frac{8B}{A^2} \right\rceil, \left\lceil \frac{8\mu}{A} \right\rceil \right\}
        \label{eq:sgdstepsize}
    \end{align}
    for some constants $A, B > 0$ defined as 
    \begin{align*}
        A \coloneqq 2 \bigopen{\mu - (1 - \alpha_{\min}) \epsilon \beta_{z}},
        \qquad
        B \coloneqq C_V^2 \bigopen{ 1 + \frac{(1 - \alpha_{\min}) \epsilon \beta_{z}}{\mu} }^2
    \end{align*}
    satisfies the following inequality for $T \ge 0$,
    \begin{align*}
        &\E \| \theta_{T+1} - \theta_{T+1}^{\PS} \|^2 \le \left( \frac{t_0 + 1}{T + t_0 + 1} \right)^{3/2} \| \theta_{1} - \theta_{1}^{\PS} \|^2 + \frac{64 \sigma^2}{A^2 (T+t_0)}  + \frac{2}{(T + t_0 + 1)^{3/2}} \sum_{t=1}^{T} (t + t_0 + 1)^{5/2} (\Delta_{t}^{\PS})^2.
    \end{align*}
\end{restatable}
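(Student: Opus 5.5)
Our starting point is the one-step inequality of \cref{lem:sgdonestep}, written for $F_t \coloneqq \E\|\theta_t-\theta_t^{\PS}\|^2$:
\begin{align*}
F_{t+1} \le \gamma_t' F_t + 2\Delta_t^{\PS}\sqrt{\gamma_t' F_t} + 2\eta_t^2\sigma^2 + 2(\Delta_t^{\PS})^2 .
\end{align*}
We then reduce it to the form treated in \cref{lem:sublinearrecursion}.

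\textbf{Step 1: simplify the contraction factor.} Since $\gamma_t$-type quantities are monotone in $\alpha_t$, we have $\mu-(1-\alpha_t)\epsilon\beta_z \ge A/2$ and $C_V^2(1+(1-\alpha_t)\epsilon\beta_z/\mu)^2 \le B$. Hence
\begin{align*}
\gamma_t' \le 1 - A\eta_t + B\eta_t^2 .
\end{align*}
With $\eta_t = 4/(A(t+t_0))$ this becomes
\begin{align*}
\gamma_t' \le 1 - \frac{4}{t+t_0} + \frac{16B}{A^2(t+t_0)^2}.
\end{align*}
The condition $t_0 \ge 8B/A^2$ makes the last term at most $2/(t+t_0)$, so $\gamma_t' \le 1 - 2/(t+t_0)$. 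The other constraint, $t_0 \ge 8\mu/A$, should guarantee $\gamma_t' \ge 0$, which \cref{lem:sgdonestep} requires. It also keeps the recursion parameters inside the admissible range $\alpha \le 1+t_0$ of \cref{lem:sublinearrecursion}.

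\textbf{Step 2: absorb the cross term.} Next we remove the term $2\Delta_t^{\PS}\sqrt{\gamma_t' F_t}$ using a weighted AM--GM:
\begin{align*}
2\Delta\sqrt{x} \le \frac{x}{2(t+t_0)} + 2(t+t_0)\Delta^2 .
\end{align*}
This costs half of the available contraction and leaves
\begin{align*}
F_{t+1} \le \Bigl(1-\frac{3/2}{t+t_0}\Bigr)F_t + \frac{32\sigma^2}{A^2(t+t_0)^2} + 2(t+t_0+1)(\Delta_t^{\PS})^2 ,
\end{align*}
after crudely bounding $2+2(t+t_0)$ by a multiple of $(t+t_0+1)$. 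The exponent $3/2$ is exactly what drives the claimed rate.

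\textbf{Step 3: solve the recursion by linearity.} Split $F_t \le G_t + H_t$, where $G$ carries the initial condition and the $\sigma^2$ forcing term, and $H$ carries the $\Delta$ forcing with $H_1 = 0$.

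For $G$, apply \cref{lem:sublinearrecursion} with $\alpha = 3/2$, $m = 1$, and $H = 32\sigma^2/A^2$. This gives $G_t \lesssim \max\{F_1(1+t_0),\, 64\sigma^2/A^2\}(t+t_0)^{-1}$. The variance contribution is then $64\sigma^2/(A^2(T+t_0))$. For the initial-condition part, rather than the $m=1$ envelope, we directly unroll the homogeneous product using $1-x\le e^{-x}$:
\begin{align*}
\prod_{t=1}^{T}\Bigl(1-\frac{3/2}{t+t_0}\Bigr) \le \Bigl(\frac{t_0+1}{T+t_0+1}\Bigr)^{3/2},
\end{align*}
which is exactly the first term of the claim.

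For $H$, unroll the recursion:
\begin{align*}
H_{T+1} \le \sum_{t=1}^T \prod_{s=t+1}^T\Bigl(1-\frac{3/2}{s+t_0}\Bigr)\, 2(t+t_0+1)(\Delta_t^{\PS})^2 .
\end{align*}
The same product estimate gives $\prod_{s=t+1}^T(\cdots) \le \bigl((t+t_0+1)/(T+t_0+1)\bigr)^{3/2}$. Each summand is therefore at most $2(t+t_0+1)^{5/2}(\Delta_t^{\PS})^2/(T+t_0+1)^{3/2}$, which is the third term.

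\textbf{Main obstacle.} The delicate part is the constant bookkeeping. We must choose the AM--GM weight so that the contraction exponent stays at $3/2$ while the $\Delta$ coefficient grows only linearly in $t$. We must also verify the comparison between $\sum_s 1/(s+t_0)$ and $\log$ ratios with the correct $+1$ shifts. If the exponent $3/2$ does not come out cleanly, the fallback is to take the AM--GM weight $c/(t+t_0)$ with $c$ tuned, at the price of different absolute constants.
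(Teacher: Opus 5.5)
Your proposal is correct and follows the paper's proof essentially step for step. It uses the same bound $\gamma_t' \le 1 - A\eta_t + B\eta_t^2 \le 1 - 2/(t+t_0)$, the same AM--GM weight producing the $(1 - \tfrac{3/2}{t+t_0})$ contraction with forcing $2(t+t_0+1)(\Delta_t^{\PS})^2$, and the same $1-x\le e^{-x}$ product estimate for the initial-condition and path-length terms. The only real difference is the $\sigma^2$ term: the paper subtracts the explicit supersolution $N_t = 64\sigma^2/(A^2(t+t_0-1))$, whereas you invoke \cref{lem:sublinearrecursion}. That works, but only if you apply the lemma to a separate zero-initialized sequence, which gives $S = 64\sigma^2/A^2$ and the bound $64\sigma^2/(A^2(T+t_0+1))$. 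The max-envelope you wrote for $G$ with initial value $F_1$ does not by itself isolate the variance contribution, so make that three-way split explicit.
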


\begin{proof}
    First, note that the case $T=0$ is trivial, and that we have $A > 0$ since $\epsilon < \frac{\mu}{(1 - \alpha_{\min}) \beta_{z}}$.
    
    Let us define $e_{t} = \theta_{t} - \theta_{t}^{\PS}$ for simplicity.
    We also simplify $\gamma_{t}'$ using
    \begin{align}
        \gamma_{t}' &= 1 - 2 \eta_{t} \bigopen{\mu - (1 - \alpha_{t}) \epsilon \beta_{z}} + \eta_{t}^{2} C_V^2 \bigopen{ 1 + \frac{(1 - \alpha_{t}) \epsilon \beta_{z}}{\mu} }^2 \notag \\
        &\le 1 - \eta_{t} \cdot \underbrace{ 2 \bigopen{\mu - (1 - \alpha_{\min}) \epsilon \beta_{z}} }_{\triangleq A} + \eta_{t}^{2} \cdot \underbrace{ C_V^2 \bigopen{ 1 + \frac{(1 - \alpha_{\min}) \epsilon \beta_{z}}{\mu} }^2 } _{\triangleq B} \eqqcolon \overline{\gamma}_{t}',
        \label{eq:aandb}
    \end{align}
    where $\alpha_{\min} = \min_{t \in [T]} \alpha_{t}$.
    Recall that we use step sizes
    \begin{align*}
        \eta_{t} &= \frac{4}{A(t + t_0)}, \quad t_0 = \max \left\{ 1, \left\lceil \frac{8B}{A^2} \right\rceil, \left\lceil \frac{8\mu}{A} \right\rceil \right\},
    \end{align*}
    for time-independent constants $A, B$ defined in \eqref{eq:aandb}.
    Since $1+t_0 > \frac{8\mu}{A}$, we have $\frac{4}{A(1+t_0)} < \frac{1}{2\mu}$ and therefore $\eta_{t} = \frac{4}{A(t+t_0)} < \frac{1}{2\mu}$ for all $t \in [T]$, which ensures that $\overline{\gamma}_{t}' \ge \gamma_{t}' \ge 0$ for all $t$.

    By \eqref{eq:sgdonestep} of \cref{lem:sgdonestep}, we have
    \begin{align*}
        \E \norm{ e_{t+1} }^2
        &\le \overline{\gamma}_{t}' \E \norm{ e_{t} }^2 + 2 \Delta_{t}^{\PS} \sqrt{ \overline{\gamma}_{t}' \E \norm{ e_{t} }^2 } + 2 \eta_{t}^2 \sigma^2 + 2 (\Delta_{t}^{\PS})^2,
    \end{align*}
    where $\overline{\gamma}_{t}' = 1 - \eta_{t} A + \eta_{t}^2 B$.
    Note that
    \begin{align*}
        \overline{\gamma}_{t}' &= 1 - \eta_{t} A + \eta_{t}^2 B \\
        &= 1 - \frac{4}{t + t_0} + \frac{16 B}{A^2 (t + t_0)^2} \\
        &\le 1 - \frac{4}{t + t_0} + \frac{8 B}{A^2} \cdot \frac{2}{t_0 (t + t_0)} \\
        &\le 1 - \frac{4}{t + t_0} + \frac{2}{t + t_0} \\
        &= 1 - \frac{2}{t + t_0}.
    \end{align*}
    Then we have
    \begin{align*}
        \E \norm{ e_{t+1} }^2
        &\le \overline{\gamma}_{t}' \E \norm{ e_{t} }^2 + 2 \Delta_{t}^{\PS} \sqrt{ \overline{\gamma}_{t}' \E \norm{ e_{t} }^2 } + 2 \eta_{t}^2 \sigma^2 + 2 (\Delta_{t}^{\PS})^2 \\
        &\le \overline{\gamma}_{t}' \E \norm{ e_{t} }^2 + 2 \Delta_{t}^{\PS} \sqrt{ \E \norm{ e_{t} }^2 } + 2 \eta_{t}^2 \sigma^2 + 2 (\Delta_{t}^{\PS})^2 \\
        &\le \left( 1 - \frac{2}{t + t_0} \right) \E \norm{ e_{t} }^2 + \frac{32 \sigma^2}{A^2 (t + t_0)^2} + 2 (\Delta_{t}^{\PS})^2 \\
        &\phantom{{}\le{}} + \left( \frac{1}{2 (t + t_0)} \E \norm{ e_{t} }^2 + 2(t + t_0) (\Delta_{t}^{\PS})^2 \right) \\
        &= \left( 1 - \frac{3/2}{t + t_0} \right) \E \norm{ e_{t} }^2 + \frac{32 \sigma^2}{A^2 (t + t_0)^2} + 2 (t + t_0 + 1) (\Delta_{t}^{\PS})^2.
    \end{align*}
    For simplicity, let us define
    \begin{align*}
        F_{t} &\coloneqq \E \norm{ e_{t} }^2, \quad \quad \quad
        a_{t} \coloneqq \left( 1 - \frac{3/2}{t + t_0} \right), \\
        b_{t} &\coloneqq \frac{32 \sigma^2}{A^2 (t + t_0)^2}, \quad
        d_{t} \coloneqq 2 (t + t_0 + 1) (\Delta_{t}^{\PS})^2,
    \end{align*}
    and write
    \begin{align*}
        F_{t+1} &\le a_{t} F_{t} + b_{t} + d_{t}.
    \end{align*}
    Since $(1 - \frac{3/2}{s+t_{0}}) \le \exp (- \frac{3/2}{s+t_{0}})$, we have
    \begin{align*}
        \prod_{s=t+1}^{T} a_{s} &\le \prod_{s=t+1}^{T} \exp \left( - \frac{3/2}{s+t_{0}} \right) \\
        &= \exp \left( - \sum_{s=t+1}^{T} \frac{3/2}{s+t_{0}} \right) \\
        &\le \exp \left( - \frac{3}{2} \cdot \log \frac{T+t_{0}+1}{t+t_{0}+1} \right) \\
        &= \left( \frac{t+t_{0}+1}{T+t_{0}+1} \right)^{3/2}.
    \end{align*}
    Also, we can observe that
    \begin{align*}
        \frac{64 \sigma^2}{A^2(t + t_0)} - \left( 1 - \frac{3/2}{t + t_0} \right) \frac{64 \sigma^2}{A^2(t + t_0 - 1)} &= \frac{32 \sigma^2}{A^2(t + t_0)(t + t_0 - 1)} \\
        &\ge \frac{32 \sigma^2}{A^2(t + t_0)^2},
    \end{align*}
    and thus $N_{t+1} \ge a_{t} N_{t} + b_{t}$ for $N_{t} = \frac{64 \sigma^2}{A^2(t + t_0 - 1)}$. This implies that
    \begin{align*}
        F_{t+1} - N_{t+1} &\le a_{t} (F_{t} - N_{t}) + d_{t},
    \end{align*}
    and therefore
    \begin{align*}
        F_{T+1} &\le N_{T+1} + \left( \prod_{t=1}^{T} a_{t} \right) (F_{1} - N_{1}) + \sum_{t=1}^{T} d_{t} \left( \prod_{s=t+1}^{T} a_{s} \right) \\
        &\le \frac{64 \sigma^2}{A^2(T + t_0)} + \left( \frac{t_{0}+1}{T+t_{0}+1} \right)^{3/2} \cdot F_{1} \\
        &\phantom{{}\le{}} + 2 \sum_{t=1}^{T} (t + t_0 + 1) (\Delta_{t}^{\PS})^2 \cdot \left( \frac{t+t_{0}+1}{T+t_{0}+1} \right)^{3/2},
    \end{align*}
    which is equivalent to the given statement.
\end{proof}


\subsection{\texorpdfstring%
{Proof of \cref{thm:rsgdg}}%
{Proof of Proposition 4.2}}
\label{subsec:rsgdg}

Here we prove \cref{thm:rsgdg}, restated below for the sake of readability.

\thmrsgdg*

\begin{proof}
    Using the same choice of learning rates $\eta_t$ and the same auxiliary definitions (e.g., $e_t, F_t$) as in \cref{lem:rsgd_wreg}, we start from the result of \cref{lem:rsgd_wreg} with $T \leftarrow t-1$,
    \begin{align*}
        \E \norm{e_{t}}^2
        &\le \frac{64 \sigma^2}{A^2(t + t_0 - 1)} + \left( \frac{t_{0}+1}{t+t_{0}} \right)^{3/2} \!\!\!\!\cdot \| \theta_1 - \theta_1^{\PS} \|^2 \\
        &\phantom{{}\le{}} + \frac{2}{(t+t_{0})^{3/2}} \sum_{s=1}^{t-1} (s + t_0 + 1)^{5/2} (\Delta_{s}^{\PS})^2,
    \end{align*}
    which implies
    \begin{align*}
        \sqrt{\E \norm{e_{t}}^2}
        &\le \frac{8 \sigma}{A(t + t_0 - 1)^{1/2}} + \left( \frac{t_{0}+1}{t+t_{0}} \right)^{3/4} \cdot \| \theta_{1} - \theta_{1}^{\PS} \| \\
        &\phantom{{}\le{}} + \frac{2}{(t+t_{0})^{3/4}} \left( \sum_{s=1}^{t-1} (s + t_0 + 1)^{5/2} (\Delta_{s}^{\PS})^2 \right)^{\!\!1/2},
    \end{align*}
    since $\sqrt{a + b + c} \le \sqrt{a} + \sqrt{b} + \sqrt{c}$ for $a, b, c \ge 0$.
    We can observe that
    \begin{align*}
        \sum_{t=1}^{T} \frac{8 \sigma}{A(t + t_0 - 1)^{1/2}}
        &\le \frac{16 \sigma}{A} (T + t_0)^{1/2}, \\
        \sum_{t=1}^{T} \left( \frac{t_{0}+1}{t+t_{0}} \right)^{3/4} \cdot \| \theta_{1} - \theta_{1}^{\PS} \|
        &\le 4 (t_{0}+1)^{3/4} \| \theta_{1} - \theta_{1}^{\PS} \| \cdot (T + t_0)^{1/4},
    \end{align*}
    and we can substitute the last term with an upper bound
    \begin{align*}
        \sum_{s=1}^{t-1} (s + t_0 + 1)^{5/2} (\Delta_{s}^{\PS})^2
        &\le \sum_{s=1}^{T-1} (s + t_0 + 1)^{5/2} (\Delta_{s}^{\PS})^2,
    \end{align*}
    so that
    \begin{align*}
        &\sum_{t=1}^{T} \frac{2}{(t+t_{0})^{3/4}} \left( \sum_{s=1}^{t-1} (s + t_0 + 1)^{5/2} (\Delta_{s}^{\PS})^2 \right)^{\!\!1/2} \\
        &\le 8 (T + t_0)^{1/4} \left( \sum_{s=1}^{T-1} (s + t_0 + 1)^{5/2} (\Delta_{s}^{\PS})^2 \right)^{\!\!1/2}
    \end{align*}
    
    Finally, since $\PR_{t}$ is $L$-Lipschitz, we have
    \begin{align*}
        \mathrm{Reg}_{T}^{\PS} (\ARSGDG) 
        &\le L \sum_{t=1}^{T} \mathbb{E} \norm{ e_{t} } \\
        &\le L \sum_{t=1}^{T} \sqrt{ \E \| e_{t} \|^2 } \\
        &\le 8 L (T + t_0)^{1/4} \left( (t_{0}+1)^{3/4} \| \theta_{1} - \theta_{1}^{\PS} \| + \left( \sum_{s=1}^{T-1} (s + t_0 + 1)^{5/2} (\Delta_{s}^{\PS})^2 \right)^{\!\!1/2} \, \right)  + \frac{16 \sigma L}{A} (T + t_0)^{1/2},
    \end{align*}
    and therefore
    \begin{align*}
        \mathrm{Reg}_{T}^{\PS} (\ARSGDG)
        &\le \gO \left( T^{1/2} + T^{1/4} \left( \sum_{t=1}^{T-1} (t + t_0 + 1)^{5/2} (\Delta_{t}^{\PS})^2 \right)^{\!\!1/2} \, \right),
    \end{align*}
    as desired.
\end{proof}

\section{Proofs for lazy SGD}


\subsection{One-step inequality for lazy SGD}
\label{subsec:rsgdl_inner}

Here we prove \cref{lem:rsgdl_inner}, which provides a one-step inequality for the iterations of lazy SGD within step $t$, based on the standard SGD analysis under a fixed distribution $\gD_t$.

\begin{restatable}{lemma}{rsgdlinner}
    \label{lem:rsgdl_inner}
    Suppose that $\ell(z, \theta)$ is $\mu$-strongly convex in $\theta$ and the stochastic gradient oracle has quadratic-bounded variance \eqref{ass:bv}.
    Then, one iteration of $\ARSGDL$ satisfies
    \begin{align}
        \E \| \varphi_{t, j+1} - \theta_{\gD_t} \|^2
        &\le \left( 1 - 2 \eta_{t, j} \mu + \eta_{t, j}^{2} C_V^2 \right) \E \norm{\varphi_{t, j} - \theta_{\gD_t}}^2 + \eta_{t, j}^{2} \sigma^2,
        \label{eq:rsgdl_inner_onestep}
    \end{align}
    where $\theta_{\gD_t} \coloneqq \argmin_{\theta} \mathrm{R}_{\gD_t} (\theta)$.
\end{restatable}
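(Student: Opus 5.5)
This is the standard one-step analysis of projected SGD on a fixed strongly convex objective. Throughout, the distribution $\gD_t = \gD_t(\theta_t)$ is held fixed during deployment $t$. Write $\theta_{\gD_t}$ for the minimizer of $\mathrm{R}_{\gD_t}$ over $\Theta$, which is what $\theta^{\star}_{\gD_t}$ in \eqref{ass:bv} denotes.

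\textbf{Step 1: reduce to an unprojected step.} Since $\theta_{\gD_t}\in\Theta$, we have $\theta_{\gD_t} = \Pi_\Theta(\theta_{\gD_t})$. Nonexpansiveness of $\Pi_\Theta$ then gives
\[
\|\varphi_{t,j+1}-\theta_{\gD_t}\| \le \|\varphi_{t,j}-\theta_{\gD_t}-\eta_{t,j}\nabla_\theta \ell(Z_{t,j},\varphi_{t,j})\|.
\]
This is the same trick used in \eqref{eq:sgdproj}.

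\textbf{Step 2: expand and take conditional expectations.} Square the bound from Step 1 and expand. Then condition on $\varphi_{t,j}$ and on $\theta_t$, which fixes $\gD_t$. Because $Z_{t,j}$ is drawn independently from $\gD_t$, the cross term has conditional expectation
\[
-2\eta_{t,j}\,\nabla \mathrm{R}_{\gD_t}(\varphi_{t,j})^\top(\varphi_{t,j}-\theta_{\gD_t}).
\]
The squared-gradient term is bounded by \eqref{ass:bv} as $\eta_{t,j}^2\bigl(\sigma^2 + C_V^2\|\varphi_{t,j}-\theta_{\gD_t}\|^2\bigr)$.

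\textbf{Step 3: lower-bound the inner product.} This is the only point that needs some care, because $\theta_{\gD_t}$ may lie on the boundary of $\Theta$, so its gradient need not vanish. I handle this as in the proof of \cref{lem:rgd_onestep}. First split the gradient:
\[
\nabla \mathrm{R}_{\gD_t}(\varphi_{t,j})^\top(\varphi_{t,j}-\theta_{\gD_t})
= \bigl(\nabla \mathrm{R}_{\gD_t}(\varphi_{t,j})-\nabla \mathrm{R}_{\gD_t}(\theta_{\gD_t})\bigr)^\top(\varphi_{t,j}-\theta_{\gD_t})
+ \nabla \mathrm{R}_{\gD_t}(\theta_{\gD_t})^\top(\varphi_{t,j}-\theta_{\gD_t}).
\]
The second term is nonnegative by the first-order optimality of $\theta_{\gD_t}$ over $\Theta$, since $\varphi_{t,j}\in\Theta$. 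The first term is at least $\mu\|\varphi_{t,j}-\theta_{\gD_t}\|^2$ by $\mu$-strong monotonicity of $\nabla\mathrm{R}_{\gD_t}$, which follows from strong convexity of $\ell$ (or of the expected risk, as in \cref{rem:admissible}).

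\textbf{Step 4: conclude.} Combining Steps 2 and 3 gives
\[
\E\bigl[\|\varphi_{t,j+1}-\theta_{\gD_t}\|^2 \,\big|\, \varphi_{t,j},\theta_t\bigr]
\le (1-2\eta_{t,j}\mu+\eta_{t,j}^2C_V^2)\,\|\varphi_{t,j}-\theta_{\gD_t}\|^2+\eta_{t,j}^2\sigma^2 .
\]
Taking the total expectation via the tower property yields \eqref{eq:rsgdl_inner_onestep}. The coefficient $1-2\eta_{t,j}\mu+\eta_{t,j}^2C_V^2$ is deterministic, so it passes through the expectation unchanged; the only random quantities are $\varphi_{t,j}$ and $\theta_{\gD_t}$, and the latter depends only on $\theta_t$.
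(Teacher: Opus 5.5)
Your proposal is correct and follows essentially the same argument as the paper. Both use nonexpansiveness of $\Pi_\Theta$ toward $\theta_{\gD_t}$, expansion of the square, first-order optimality plus $\mu$-strong monotonicity to lower-bound the inner product, the quadratic variance bound, and then total expectation. Your explicit conditioning on $\theta_t$, which fixes $\gD_t$ and $\theta_{\gD_t}$, spells out a step the paper leaves implicit.
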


\begin{proof}
    Similarly as in the previous proofs, we have
    \begin{align*}
        \E \| \varphi_{t, j+1} - \theta_{\gD_t} \|^2 
        &\le \E \norm{ \varphi_{t, j} - \theta_{\gD_t} - \eta_{t, j} \nabla_{\theta} \ell (Z_{t, j}, \varphi_{t, j}) }^2 \\
        &\le \norm{\varphi_{t, j} - \theta_{\gD_t}}^2 - 2 \eta_{t, j} \nabla \mathrm{R}_{\gD_t} (\varphi_{t, j})^{\top} ( \varphi_{t, j} - \theta_{\gD_t} ) + \eta_{t, j}^{2} \E [\bignorm{\nabla_{\theta} \ell (Z_{t, j}, \varphi_{t, j})}^2]
    \end{align*}
    where the expectation here is over $Z_{t, j} \sim \gD_t$ conditioned on $\varphi_{t, j}$.
    
    Since $\theta_{\gD_t} = \argmin_{\theta} \mathrm{R}_{\gD_t} (\theta)$, the first-order optimality condition of $\theta_{\gD_t}$ and $\mu$-strong monotonicity of $\nabla \mathrm{R}_{\gD_t} (\theta)$ yields
    \begin{align*}
        \nabla \mathrm{R}_{\gD_t} (\varphi_{t, j})^{\top} ( \varphi_{t, j} - \theta_{\gD_t} )
        &\ge \left( \nabla \mathrm{R}_{\gD_t} (\varphi_{t, j}) - \nabla \mathrm{R}_{\gD_t} (\theta_{\gD_t}) \right)^{\top} ( \varphi_{t, j} - \theta_{\gD_t} ) \\
        &\ge \mu \| \varphi_{t, j} - \theta_{\gD_t} \|^2,
    \end{align*}
    and by \cref{ass:bv}, we have
    \begin{align*}
        \E [\bignorm{\nabla_{\theta} \ell (Z_{t, j}, \varphi_{t, j})}^2] &\le \sigma^2 + C_V^2 \| \varphi_{t, j} - \theta_{\gD_t} \|^2.
    \end{align*}
    Therefore we have
    \begin{align*}
        \E \| \varphi_{t, j+1} - \theta_{\gD_t} \|^2
        &\le \left( 1 - 2 \eta_{t, j} \mu + \eta_{t, j}^{2} C_V^2 \right) \norm{\varphi_{t, j} - \theta_{\gD_t}}^2 + \eta_{t, j}^{2} \sigma^2,
    \end{align*}
    conditioned on $\varphi_{t, j}$, and thus
    \eqref{eq:rsgdl_inner_onestep}
    by taking expectations over the entire history.
\end{proof}


\subsection{\texorpdfstring%
{Upper bounds of $\E \| \theta_{T} - \theta_{T}^{\PS} \|^2$ for lazy SGD}%
{Upper bounds of squared iterate norms for lazy SGD}}
\label{subsec:lem_rsgd_lazy}

Here we prove \cref{lem:rsgd_lazy}, which provides upper bounds of $\E \| \theta_{T} - \theta_{T}^{\PS} \|^2$ for a fixed $T$.

\begin{restatable}{lemma}{lemrsgdlazy}
    \label{lem:rsgd_lazy}
    Suppose that $\gD \colon \Theta \rightarrow \gP (\gZ)$ is $\epsilon$-sensitive, $z \mapsto \ell(z, \theta)$ is $\beta_{z}$-smooth for all $\theta \in \Theta$, and $\mathrm{R}_{D} (\theta)$ is $\mu$-strongly convex and $\beta_{\theta}$-smooth for all $D \in \gC$.
    Also, suppose that $\epsilon < \frac{\mu}{(1 - \alpha_{\min}) \beta_{z}}$ and we have quadratic-bounded variance (Ass.~\ref{ass:bv}).
    Then $\ARSGDL$ with step sizes
    \begin{align}
        \eta_{t, j} = \frac{4}{A(j+t_0)}, \qquad t_0 = \max \left\{ 1, \left\lceil \frac{8B}{A^2} \right\rceil, \left\lceil \frac{8\mu}{A} \right\rceil \right\},
        \label{eq:rsgdl_stepsize}
    \end{align}
    for constants $A = 2 \mu$ and $B = C_V^2$, and $n(t) \ge n_0 t^{r}$ samples per deployment with large enough $n_0$ satisfies
    \begin{align*}
        \E \| \theta_{T+1} - \theta_{T+1}^{\PS} \|^2
        &\le \tilde{\gamma}^{T} \| \theta_{1} - \theta_{1}^{\PS} \|^2 + J \!\cdot\! \left( \tilde{\gamma}^{T (1 - 2^{-\frac{1}{r}})} + 2T^{-r} \right) \\
        &\phantom{{}\le{}} + (1 + \tau) \sum_{t=1}^{T} \tilde{\gamma}^{T-t} (\Delta_{t}^{\PS})^2,
    \end{align*}
    where $\tau, J > 0$ and $0 < \tilde{\gamma} < 1$ are chosen constants independent of $T$.
\end{restatable}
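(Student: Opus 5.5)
The argument splits into an inner and an outer recursion. Within deployment $t$ the distribution $\gD_t = \gD_t(\theta_t)$ is fixed, so the inner iterates $\varphi_{t,j}$ are ordinary projected SGD on the $\mu$-strongly convex risk $\mathrm{R}_{\gD_t}$. Its minimizer is $\theta_{\gD_t} = \gG(\theta_t)$, the RRM map from the proof of \cref{lem:pislipschitz}. Across deployments, the error $\norm{\theta_{t+1} - \theta_t^{\PS}}$ splits into an SGD optimization error plus the RRM contraction. We then pass to $\theta_{t+1}^{\PS}$ by paying $\Delta_t^{\PS}$, and unroll a geometric recursion using \cref{lem:unroll}.

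\textbf{Inner recursion.} The step sizes $\eta_{t,j} = 4/(A(j+t_0))$ with $A = 2\mu$ and $B = C_V^2$ are chosen as in \cref{lem:rsgd_wreg}. With them, the factor in \cref{lem:rsgdl_inner} satisfies
\begin{align*}
1 - 2\eta_{t,j}\mu + \eta_{t,j}^2 C_V^2 \le 1 - \frac{2}{j+t_0}.
\end{align*}
The same $N_j$-comparison argument as in \cref{lem:rsgd_wreg} then gives, conditionally on $\theta_t$,
\begin{align*}
\E\norm{\theta_{t+1} - \gG(\theta_t)}^2 \le \Bigl(\tfrac{t_0+1}{n(t)+t_0+1}\Bigr)^{2} \norm{\theta_t - \gG(\theta_t)}^2 + \frac{c\,\sigma^2}{n(t)+t_0}.
\end{align*}
Here $c$ is a constant (e.g.\ $c = 16/\mu^2$). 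We use that $\Theta$ is convex and the projection is nonexpansive, and that $\theta_{\gD_t}$ can be taken as the constrained minimizer, so the first-order condition replaces the gradient-zero condition exactly as in \cref{lem:rgd_onestep}. By \cref{lem:pislipschitz} and the triangle inequality,
\begin{align*}
\norm{\theta_t - \gG(\theta_t)} \le (1+\gamma)\norm{\theta_t - \theta_t^{\PS}}, \qquad \gamma = \frac{(1-\alpha_{\min})\epsilon\beta_z}{\mu} < 1.
\end{align*}

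\textbf{Outer recursion.} Write $e_t = \theta_t - \theta_t^{\PS}$ and use the expansion
\begin{align*}
\theta_{t+1} - \theta_{t+1}^{\PS} = (\theta_{t+1} - \gG(\theta_t)) + (\gG(\theta_t) - \theta_t^{\PS}) + (\theta_t^{\PS} - \theta_{t+1}^{\PS}).
\end{align*}
Using $\norm{a+b}^2 \le (1+\tau')\norm{a}^2 + (1+1/\tau')\norm{b}^2$ twice, we peel off first the $\Delta_t^{\PS}$ term and then the SGD error. The middle term is at most $\gamma\norm{e_t}$. The SGD error is at most $\kappa_{n(t)}(1+\gamma)^2\norm{e_t}^2 + c\sigma^2/n(t)$, where
\begin{align*}
\kappa_{n} = \Bigl(\tfrac{t_0+1}{n+t_0+1}\Bigr)^2 = \gO(n_0^{-2}).
\end{align*}
Choose $\tau'$ so that $(1+\tau')\gamma^2 < 1$. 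Choose $n_0$ large enough that the coefficient of $\E\norm{e_t}^2$ is at most some $\tilde\gamma < 1$. This gives
\begin{align*}
\E\norm{e_{t+1}}^2 \le \tilde\gamma\,\E\norm{e_t}^2 + J' t^{-r} + (1+\tau)(\Delta_t^{\PS})^2,
\end{align*}
with $n(t) \ge n_0 t^r$ turning the variance into $J' t^{-r}$ and $1+\tau = 1+1/\tau'$.

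\textbf{Unrolling.} Unrolling gives
\begin{align*}
\tilde\gamma^T \norm{e_1}^2 + J'\sum_t \tilde\gamma^{T-t} t^{-r} + (1+\tau)\sum_t \tilde\gamma^{T-t}(\Delta_t^{\PS})^2.
\end{align*}
\cref{lem:unroll} bounds the middle sum by $\frac{1}{1-\tilde\gamma}\bigl(\tilde\gamma^{T(1-2^{-1/r})} + 2T^{-r}\bigr)$, so we set $J = J'/(1-\tilde\gamma)$. The main obstacle is the inner step: getting a conditional SGD bound whose contraction on the initial error is uniformly small once $n(t) \ge n_0$, so that it can absorb the $(1+\gamma)^2$ blow-up. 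The exponent $2$ coming from the factor $2/(j+t_0)$ already suffices, but if the constants are tight I would instead take the crude bound $1-2/(j+t_0)$ and just enlarge $n_0$.
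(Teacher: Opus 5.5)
Your proposal is correct and follows the paper's proof essentially step for step: inner SGD convergence to $\theta_{\gD_t}=\gG(\theta_t)$, the RRM contraction from \cref{lem:pislipschitz}, two Young splittings, a large $n_0$ to force $\tilde\gamma<1$, and \cref{lem:unroll} for the $t^{-r}$ sum. The one real difference is the inner recursion. You use the comparison-sequence argument and get an $\gO(n(t)^{-2})$ contraction of the initial error, whereas the paper invokes \cref{lem:sublinearrecursion} with $m=1$, $\alpha=2$ and gets $(1+t_0)/(n(t)+t_0)$; either suffices.

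There is one small bookkeeping slip. The $\gamma^2\norm{e_t}^2$ term passes through both Young splittings, so the condition you need is $(1+\tau')(1+\tau'')\gamma^2<1$, matching the paper's $(1+\frac{1}{\tau})(1+\frac{1}{\zeta})\gamma^2<1$, not $(1+\tau')\gamma^2<1$. This is equally easy to arrange since $\gamma<1$.
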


\begin{proof}
First, note that the case $T=0$ is trivial.

Suppose that $T > 0$.
We start with \cref{lem:rsgdl_inner} (proved in \cref{subsec:rsgdl_inner}), which shows
\begin{align*}
    \E \| \varphi_{t, j+1} - \theta_{\gD_t} \|^2
    &\le \left( 1 - 2 \eta_{t, j} \mu + \eta_{t, j}^{2} C_V^2 \right) \E \norm{\varphi_{t, j} - \theta_{\gD_t}}^2 + \eta_{t, j}^{2} \sigma^2.
\end{align*}
Let $F_{t,j} = \E \norm{\varphi_{t, j} - \theta_{\gD_t}}^2$.
Then we can rewrite as
\begin{align*}
    F_{t,j+1}
    &\le ( 1 - \eta_{t, j} A + \eta_{t, j}^{2} B ) F_{t,j} + \eta_{t, j}^{2} \sigma^2,
\end{align*}
with $A = 2 \mu$ and $B = C_V^2$.
Similarly to the proof of \cref{lem:rsgd_wreg} in \cref{subsec:rsgd_wreg}, the choice of
\begin{align*}
    \eta_{t, j} = \frac{4}{A(j+t_0)}, \qquad t_0 = \max \left\{ 1, \left\lceil \frac{8B}{A^2} \right\rceil, \left\lceil \frac{8\mu}{A} \right\rceil \right\}
\end{align*}
yields
\begin{align*}
    F_{t,j+1}
    &\le \left( 1 - \frac{2}{j+t_0} \right) F_{t,j} + \frac{4 \sigma^2}{\mu^2 (j+t_0)^2}.
\end{align*}

By \cref{lem:sublinearrecursion} with $m = 1$ and $\alpha = 2$, we have
\begin{align*}
    F_{t, j+1} &\le S (j + t_0)^{-1},
\end{align*}
for $t_0 = \max \{ 1, \lceil \frac{8B}{A^2} \rceil, \lceil \frac{8\mu}{A} \rceil \} = \max \{ 1, \lceil \frac{2C_V^2}{\mu^2} \rceil, \lceil \frac{8\mu}{A} \rceil \}$ and
\begin{align*}
    S &= \max \left\{ F_1 (1 + t_0), \frac{H}{\alpha - m} \right\} \\
    &= \max \left\{ (1 + t_0) \E \norm{\theta_{t} - \theta_{\gD_t}}^2, \frac{4 \sigma^2}{\mu^2} \right\} \\
    &\le (1 + t_0) \E \norm{\theta_{t} - \theta_{\gD_t}}^2 + \frac{4 \sigma^2}{\mu^2}.
\end{align*}

In particular, for $j = n(t)$ we have
\begin{align}
    \E \norm{\theta_{t+1} - \theta_{\gD_t}}^2
    &\le \frac{1}{n(t) + t_0} \cdot \left( (1 + t_0) \E \norm{\theta_{t} - \theta_{\gD_t}}^2 + \frac{4 \sigma^2}{\mu^2} \right).
    \label{eq:rsgdl_1}
\end{align}
Observe that
\begin{align*}
    \| \theta_{\gD_t} - \theta_{t}^{\PS} \|
    &\le \frac{(1 - \alpha_{t}) \epsilon \beta_{z}}{\mu} \| \theta_{t} - \theta_{t}^{\PS} \|,
\end{align*}
and also,
\begin{align*}
    \| \theta_{t} - \theta_{\gD_t} \|
    &\le \| \theta_{t} - \theta_{t}^{\PS} \| + \| \theta_{\gD_t} - \theta_{t}^{\PS} \|
    \le \left( 1 + \frac{(1 - \alpha_{t}) \epsilon \beta_{z}}{\mu} \right) \| \theta_{t} - \theta_{t}^{\PS} \|.
\end{align*}
Therefore, from each inequality we can deduce that
\begin{align}
    \E \| \theta_{\gD_t} - \theta_{t}^{\PS} \|^2
    &\le \left( \frac{(1 - \alpha_{t}) \epsilon \beta_{z}}{\mu} \right)^2 \E \| \theta_{t} - \theta_{t}^{\PS} \|^2
    \label{eq:rsgdl_2}
\end{align}
and
\begin{align}
    \E \| \theta_{t+1} - \theta_{\gD_t} \|^2 
    &\le \frac{1}{n(t) + t_0} \cdot \left( (1 + t_0) \E \norm{\theta_{t} - \theta_{\gD_t}}^2 + \frac{4 \sigma^2}{\mu^2} \right) \notag \\
    &\le \frac{1}{n(t) + t_0} \cdot \left( (1 + t_0) \left( 1 + \frac{(1 - \alpha_{t}) \epsilon \beta_{z}}{\mu} \right)^2 \E \| \theta_{t} - \theta_{t}^{\PS} \|^2 + \frac{4 \sigma^2}{\mu^2} \right) \notag \\
    &\le \frac{1}{n(t) + t_0} \cdot \left( 4 (1 + t_0) \E \| \theta_{t} - \theta_{t}^{\PS} \|^2 + \frac{4 \sigma^2}{\mu^2} \right).
    \label{eq:rsgdl_3}
\end{align}
We can compute
\begin{align*}
    \E \| \theta_{t+1} - \theta_{t}^{\PS} \|^2
    &= \E \| \theta_{t+1} - \theta_{\gD_t} \|^2 + 2 \E \left[ (\theta_{t+1} - \theta_{\gD_t})^{\top} (\theta_{\gD_t} - \theta_{t}^{\PS}) \right] + \E \| \theta_{\gD_t} - \theta_{t}^{\PS} \|^2 \\
    &\le \E \| \theta_{t+1} - \theta_{\gD_t} \|^2 + 2 \E \left[ \| \theta_{t+1} - \theta_{\gD_t} \| \cdot \| \theta_{\gD_t} - \theta_{t}^{\PS} \| \right] + \E \| \theta_{\gD_t} - \theta_{t}^{\PS} \|^2 \\
    &\le \E \| \theta_{t+1} - \theta_{\gD_t} \|^2 + 2 \sqrt{\E \| \theta_{t+1} - \theta_{\gD_t} \|^2 \cdot \E \| \theta_{\gD_t} - \theta_{t}^{\PS} \|^2 } + \E \| \theta_{\gD_t} - \theta_{t}^{\PS} \|^2 \\
    &\le (1 + \zeta) \E \| \theta_{t+1} - \theta_{\gD_t} \|^2 + \left( 1 + \frac{1}{\zeta} \right) \E \| \theta_{\gD_t} - \theta_{t}^{\PS} \|^2,
\end{align*}
for any $\zeta > 0$, which we choose later.

By \cref{eq:rsgdl_1,eq:rsgdl_2,eq:rsgdl_3}, we have
\begin{align*}
    \E \| \theta_{t+1} - \theta_{t}^{\PS} \|^2
    &\le \left( \frac{4 (1 + t_0) (1 + \zeta)}{n(t) + t_0} + \left( 1 + \frac{1}{\zeta} \right) \left( \frac{(1 - \alpha_{t}) \epsilon \beta_{z}}{\mu} \right)^2 \right) \E \| \theta_{t} - \theta_{t}^{\PS} \|^2
    + \frac{1 + \zeta}{n(t) + t_0} \cdot \frac{4 \sigma^2}{\mu^2} \\
    &\le \left( \frac{4 (1 + t_0) (1 + \zeta)}{n_0} + \left( 1 + \frac{1}{\zeta} \right) \left(  \frac{(1 - \alpha_{\min}) \epsilon \beta_{z}}{\mu} \right)^2 \right) \E \| \theta_{t} - \theta_{t}^{\PS} \|^2 
    + \frac{1 + \zeta}{n(t) + t_0} \cdot \frac{4 \sigma^2}{\mu^2},
\end{align*}
where the last inequality is by $\epsilon < \frac{\mu}{(1 - \alpha_{\min}) \beta_{z}}$.
Substituting
\begin{align*}
    c = \frac{4 (1 + t_0) (1 + \zeta)}{n_0} + \left( 1 + \frac{1}{\zeta} \right) \left( \frac{(1 - \alpha_{\min}) \epsilon \beta_{z}}{\mu} \right)^2,
\end{align*}
we can write
\begin{align*}
    \E \| \theta_{t+1} - \theta_{t}^{\PS} \|^2
    &\le c \E \| \theta_{t} - \theta_{t}^{\PS} \|^2 + \frac{1 + \zeta}{n(t) + t_0} \cdot \frac{4 \sigma^2}{\mu^2}.
\end{align*}

Recall that $\Delta_{t}^{\PS} = \| \theta_{t}^{\PS} - \theta_{t+1}^{\PS} \|$.
By the AM-GM inequality, we have
\begin{align*}
    \E \| \theta_{t+1} - \theta_{t+1}^{\PS} \|^2
    &\le \E \| \theta_{t+1} - \theta_{t}^{\PS} \|^2 + 2 \Delta_{t}^{\PS} \sqrt{\E \| \theta_{t+1} - \theta_{t}^{\PS} \|^2} + (\Delta_{t}^{\PS})^2 \\
    &\le \left( 1 + \frac{1}{\tau} \right) \E \| \theta_{t+1} - \theta_{t}^{\PS} \|^2 + (1 + \tau) (\Delta_{t}^{\PS})^2 \\
    &\le c \left( 1 + \frac{1}{\tau} \right) \E \| \theta_{t} - \theta_{t}^{\PS} \|^2 + \frac{(1 + \frac{1}{\tau}) (1 + \zeta)}{n(t) + t_0} \cdot \frac{4 \sigma^2}{\mu^2} \\
    &\phantom{{}\le{}} + (1 + \tau) (\Delta_{t}^{\PS})^2
\end{align*}
for any $\tau > 0$, which we choose later.

Let $\tilde{\gamma} = c (1 + \frac{1}{\tau})$.
Now we choose constants $\zeta, \tau > 0$ that satisfy
\begin{align*}
    \left( 1 + \frac{1}{\tau} \right) \left( 1 + \frac{1}{\zeta} \right) \left( \frac{(1 - \alpha_{\min}) \epsilon \beta_{z}}{\mu} \right)^2 < 1,
\end{align*}
and $n_0$ as a large enough constant so that
\begin{align*}
    \tilde{\gamma} = \left( 1 + \frac{1}{\tau} \right) \left( \frac{4 (1 + t_0) (1 + \zeta)}{n_0} + \left( 1 + \frac{1}{\zeta} \right) \left( \frac{(1 - \alpha_{\min}) \epsilon \beta_{z}}{\mu} \right)^2 \right) < 1,
\end{align*}
which is possible as we assume $\epsilon < \frac{\mu}{(1 - \alpha_{\min}) \beta_{z}}$.
Then, we have
\begin{align*}
    \E \| \theta_{t+1} - \theta_{t+1}^{\PS} \|^2
    &\le \tilde{\gamma} \cdot \E \| \theta_{t} - \theta_{t}^{\PS} \|^2 + \frac{(1 + \frac{1}{\tau}) (1 + \zeta)}{n(t) + t_0} \cdot \frac{4 \sigma^2}{\mu^2} + (1 + \tau) (\Delta_{t}^{\PS})^2 \\
    &\le \tilde{\gamma} \cdot \E \| \theta_{t} - \theta_{t}^{\PS} \|^2 + \frac{4 \sigma^2 (1 + \frac{1}{\tau}) (1 + \zeta)}{n_0 \mu^2} t^{-r} + (1 + \tau) (\Delta_{t}^{\PS})^2,
\end{align*}
since $n_0 t^r \le n(t) \le n(t) + t_0$.

Finally, by \cref{lem:unroll}, we have
\begin{align*}
    \E \| \theta_{T+1} - \theta_{T+1}^{\PS} \|^2
    &\le \tilde{\gamma}^{T} \| \theta_{1} - \theta_{1}^{\PS} \|^2  + \frac{4 \sigma^2 (1 + \frac{1}{\tau}) (1 + \zeta)}{n_0 \mu^2} \sum_{t=1}^{T} \tilde{\gamma}^{T-t} t^{-r} + (1 + \tau) \sum_{t=1}^{T} \tilde{\gamma}^{T-t} (\Delta_{t}^{\PS})^2 \\
    &\le \tilde{\gamma}^{T} \| \theta_{1} - \theta_{1}^{\PS} \|^2 + \frac{4 \sigma^2 (1 + \frac{1}{\tau}) (1 + \zeta)}{n_0 \mu^2 (1-\tilde{\gamma})} \left( \tilde{\gamma}^{T (1 - 2^{-\frac{1}{r}})} + 2T^{-r} \right)  + (1 + \tau) \sum_{t=1}^{T} \tilde{\gamma}^{T-t} (\Delta_{t}^{\PS})^2,
\end{align*}
which proves the given statement for $J =  \frac{4 \sigma^2 (1 + \frac{1}{\tau}) (1 + \zeta)}{n_0 \mu^2 (1-\tilde{\gamma})}$.
\end{proof}


\subsection{\texorpdfstring%
{Proof of \cref{thm:rsgdl}}%
{Proof of Proposition 4.4}}
\label{subsec:rsgdl}

Here we prove \cref{thm:rsgdl}, restated below for the sake of readability.

\thmrsgdl*

\begin{proof}
    Using the same step sizes $\eta_{t, j}$ as in \cref{lem:rsgd_lazy}, we start from the result of \cref{lem:rsgd_lazy} with $T \leftarrow t-1$,
    \begin{align*}
        \E \| e_t \|^2
        &\le \tilde{\gamma}^{t-1} \| e_1 \|^2 + J \!\cdot\! \left( \tilde{\gamma}^{\beta (t-1)} + 2(t-1)^{-r} \right) + (1 + \tau) \sum_{s=1}^{t-1} \tilde{\gamma}^{t-s-1} (\Delta_{s}^{\PS})^2,
    \end{align*}
    where we define $e_{t} = \theta_{t} - \theta_{t}^{\PS}$ and $\beta = 1 - 2^{-1/r}$ for simplicity.
    This implies
    \begin{align*}
        \sqrt{\E \| e_t \|^2}
        &\le \tilde{\gamma}^{\frac{t-1}{2}} \| e_1 \| + \sqrt{J} \cdot \tilde{\gamma}^{\frac{\beta (t-1)}{2}} + \sqrt{2J} \cdot (t-1)^{-r/2} + \sqrt{1 + \tau} \left( \sum_{s=1}^{t-1} \tilde{\gamma}^{t-s-1} (\Delta_{s}^{\PS})^2 \right)^{\!\!1/2},
    \end{align*}
    since $\sqrt{a + b + c + d} \le \sqrt{a} + \sqrt{b} + \sqrt{c} + \sqrt{d}$ for $a, b, c, d \ge 0$.
    We can immediately observe that
    \begin{align*}
        \sum_{t=1}^{T} \tilde{\gamma}^{\frac{t-1}{2}} \| e_1 \|
        &\le \frac{\| e_1 \|}{1 - \tilde{\gamma}^{1/2}}, \qquad 
        \sum_{t=1}^{T} \sqrt{J} \cdot \tilde{\gamma}^{\frac{\beta (t-1)}{2}}
        \le \frac{\sqrt{J}}{1 - \tilde{\gamma}^{\beta/2}},
    \end{align*}
    and the last term can be upper-bounded using the inequality
    \begin{align*}
        \sum_{s=1}^{t-1} \tilde{\gamma}^{t-s-1} (\Delta_{s}^{\PS})^2
        &\le \left( \sum_{s=1}^{t-1} \tilde{\gamma}^{\frac{t-s-1}{2}} \Delta_{s}^{\PS} \right)^2,
    \end{align*}
    which implies
    \begin{align*}
        \sum_{t=1}^{T} \left( \sum_{s=1}^{t-1} \tilde{\gamma}^{t-s-1} (\Delta_{s}^{\PS})^2 \right)^{\!\!1/2}
        &\le \sum_{t=1}^{T} \sum_{s=1}^{t-1} \tilde{\gamma}^{\frac{t-s-1}{2}} \Delta_{s}^{\PS} \\
        &= \sum_{s=1}^{T-1} \Delta_{s}^{\PS} \left( \sum_{t=s+1}^{T} \tilde{\gamma}^{\frac{t-s-1}{2}} \right) \\
        &\le \frac{1}{1 - \tilde{\gamma}^{1/2}} \cdot \sum_{s=1}^{T-1} \Delta_{s}^{\PS}.
    \end{align*}
    Therefore (recalling that $e_1 = \theta_1 - \theta_1^{\PS}$) we have  
    \begin{align*}
        \mathrm{Reg}_{T}^{\PS} (\ARSGDL) 
        &\le L \sum_{t=1}^{T} \mathbb{E} \norm{ e_{t} }
        \le L \sum_{t=1}^{T} \sqrt{ \E \| e_{t} \|^2 } \\
        &\le L \left( \frac{\| \theta_{1} - \theta_{1}^{\PS} \|}{1 - \tilde{\gamma}^{1/2}} + \frac{\sqrt{J}}{1 - \tilde{\gamma}^{\beta/2}} + \sqrt{2J} \sum_{t=1}^{T-1} t^{-r/2} + \frac{\sqrt{1 + \tau}}{1 - \tilde{\gamma}^{1/2}} \sum_{t=1}^{T-1} \Delta_{t}^{\PS} \right).
    \end{align*}
    Eliminating all $\gO(1)$ terms, we have
    \begin{align*}
        \mathrm{Reg}_{T}^{\PS} (\ARSGDL)
        &\le \gO \left( \sum_{t=1}^{T-1} t^{-r/2} + \sum_{t=1}^{T-1} \Delta_{t}^{\PS} \right),
    \end{align*}
    as desired.
\end{proof}

\section{Proofs for zeroth-order gradient descent}
\label{sec:f}

As we state in \cref{sec:5}, we make the following mild assumptions throughout \cref{sec:f}: $|\ell(z, \theta)| \le F$ for all $z,\theta$ for some $F < \infty$, and $\Theta$ contains a ball of radius $r>0$ centered at $0$.


\subsection{\texorpdfstring%
{Proof of \cref{thm:zgdtwopoint}}%
{Proof of Theorem 5.2}}
\label{subsec:zgdtwopoint}

Here we prove \cref{thm:zgdtwopoint}, restated below for the sake of readability.

\thmzgdtwopoint*

\begin{proof}
Since we assume $\Theta$ is a bounded (convex) set, we denote the (finite) diameter of $\Theta$ by $D_{\Theta}$, i.e., $D_{\Theta} \coloneqq \sup \{ \| \theta - \theta' \| : \theta, \theta' \in \Theta \}$.

We first prove the strongly convex case, and then prove the convex case.

\paragraph{Strongly convex case.}
Recall that for two-point ZGD, we use
\begin{align*}
    \tilde{g}_{t} &= \big( \mathrm{PR}_{t} (\theta_{t} + \delta u_{t}) - \mathrm{PR}_{t} (\theta_{t} - \delta u_{t}) \big) u_{t} \cdot \frac{d}{2 \delta}.
\end{align*}
Then, we have $\E[\tilde{g}_{t}] = \nabla \widehat{\mathrm{PR}}_t(\theta_t)$ and
\begin{align}
    \E [\norm{\tilde{g}_{t}}^2] &\le d^2 L^2,
    \label{eq:zgd_newgradbound}
\end{align}
since $\mathrm{PR}_{t}$ and $\widehat{\mathrm{PR}}_{t}$ are both $L$-Lipschitz.

Then, $\mu'$-strong convexity of $\widehat{\mathrm{PR}}_{t} (\theta)$ implies
\begin{align}
    \begin{aligned}
    \E [\norm{\theta_{t+1} - \theta_{t}^{u}}^2]
    &= \E [\norm{\Pi_{(1 - \rho) \Theta} (\theta_{t} - \eta_{t} \tilde{g}_{t}) - \theta_{t}^{u}}^2] \\
    &\le \E [\norm{\theta_{t} - \theta_{t}^{u} - \eta_{t} \tilde{g}_{t}}^2] \\
    &= \E [\norm{\theta_{t} - \theta_{t}^{u}}^2 - 2 \eta_{t} \tilde{g}_{t}^{\top} ( \theta_{t} - \theta_{t}^{u} ) + \eta_{t}^{2} \norm{\tilde{g}_{t}}^2] \\
    &= \norm{\theta_{t} - \theta_{t}^{u}}^2 - 2 \eta_{t} \nabla \widehat{\mathrm{PR}}_{t} (\theta_{t})^{\top} ( \theta_{t} - \theta_{t}^{u} ) + \eta_{t}^{2} \E [\norm{\tilde{g}_{t}}^2],
    \end{aligned}
    \label{eq:zgd_basic1same}
\end{align}
for any comparator $\theta_{t}^{u} \in (1 - \rho) \Theta$ which we choose later, since $\E[\tilde{g}_{t}] = \nabla \widehat{\mathrm{PR}}_t(\theta_t)$.
(The expectations here are conditioned on $\theta_{t}$.)

By $\mu'$-strong convexity of $\widehat{\mathrm{PR}}_{t}$, \eqref{eq:zgd_newgradbound}, and \eqref{eq:zgd_basic1same}, we have
\begin{align}
    \begin{aligned}
    \widehat{\mathrm{PR}}_{t} (\theta_{t}) - \widehat{\mathrm{PR}}_{t} (\theta_{t}^{u})
    &\le \nabla \widehat{\mathrm{PR}}_{t} (\theta_{t})^{\top} ( \theta_{t} - \theta_{t}^{u} ) - \frac{\mu'}{2} \lVert \theta_{t} - \theta_{t}^{u} \rVert^2 \\
    &\le \left( \frac{1}{2 \eta_{t}} - \frac{\mu'}{2} \right) \lVert \theta_{t} - \theta_{t}^{u} \rVert^2
    - \frac{1}{2 \eta_{t}} \E [\lVert \theta_{t+1} - \theta_{t}^{u} \rVert^2] + \eta_{t} \frac{d^2 L^2}{2},
    \end{aligned}
    \label{eq:aaadonta}
\end{align}
for any choice of comparators $\theta_{t}^{u}, \theta_{t+1}^{u} \in (1 - \rho) \Theta$.
By triangle inequality, we have
\begin{align*}
    \lVert \theta_{t+1} - \theta_{t}^{u} \rVert
    &\ge \lVert \theta_{t+1} - \theta_{t+1}^{u} \rVert - \lVert \theta_{t}^{u} - \theta_{t+1}^{u} \rVert,
\end{align*}
and since $\lVert \theta_{t+1} - \theta_{t+1}^{u} \rVert \le D_{\Theta}$, we can use
\begin{align*}
    \lVert \theta_{t+1} - \theta_{t}^{u} \rVert^2
    &\ge \lVert \theta_{t+1} - \theta_{t+1}^{u} \rVert^2 - 2(\theta_{t+1} - \theta_{t+1}^{u})^{\top} (\theta_{t}^{u} - \theta_{t+1}^{u}) + \lVert \theta_{t}^{u} - \theta_{t+1}^{u} \rVert^2 \\
    &\ge \lVert \theta_{t+1} - \theta_{t+1}^{u} \rVert^2 - 2 D_{\Theta} \lVert \theta_{t}^{u} - \theta_{t+1}^{u} \rVert.
\end{align*}
Plugging in $\eta_{t} = \frac{1}{\mu' t}$ and noting that $\frac{1}{2 \eta_1} - \frac{\mu'}{2} = 0$, we have
\begin{align*}
    \widehat{\mathrm{PR}}_{t} (\theta_{t}) - \widehat{\mathrm{PR}}_{t} (\theta_{t}^{u}) \le \frac{\mu' (t-1)}{2} \lVert \theta_{t} - \theta_{t}^{u} \rVert^2
    - \frac{\mu' t}{2} \E [\lVert \theta_{t+1} - \theta_{t+1}^{u} \rVert^2] + \mu' D_{\Theta} t \lVert \theta_{t}^{u} - \theta_{t+1}^{u} \rVert + \eta_{t} \frac{d^2 L^2}{2},
\end{align*}
and taking the sum and expectations over the entire history,
\begin{align}
    \begin{aligned}
    \E \left[ \sum_{t=1}^{T} \big( \widehat{\mathrm{PR}}_{t} (\theta_{t}) - \widehat{\mathrm{PR}}_{t} (\theta_{t}^{u}) \big) \right]
    &\le \frac{d^2 L^2}{2 \mu'} \sum_{t=1}^{T} \frac{1}{t} + \mu' D_{\Theta} \sum_{t=1}^{T} t \lVert \theta_{t}^{u} - \theta_{t+1}^{u} \rVert \\
    &\le \frac{d^2 L^2}{2 \mu'} (1 + \log T) + \mu' D_{\Theta} \sum_{t=1}^{T} t \lVert \theta_{t}^{u} - \theta_{t+1}^{u} \rVert.
    \end{aligned}
    \label{eq:zgd_basic2same}
\end{align}

Recall that $\phi_{t}^{+} = \theta_{t} + \delta u_{t}$ and $\widehat{\mathrm{PR}}_{t} (\theta) = \E_{v \sim \Unif(\sB^{d})} [\mathrm{PR}_{t}(\theta + \delta v)]$. 
Since $\mathrm{PR}_{t}$ and $\widehat{\mathrm{PR}}_{t}$ are both $L$-Lipschitz,
we have
\begin{align*}
    \mathrm{PR}_{t} (\phi_{t}^{+}) - \mathrm{PR}_{t} (\theta_{t})
    &\le \delta L, \qquad
    \mathrm{PR}_{t} (\theta_{t}) - \widehat{\mathrm{PR}}_{t} (\theta_{t})
    \le \delta L, \qquad
    - \mathrm{PR}_{t} (\theta_{t}^{u}) + \widehat{\mathrm{PR}}_{t} (\theta_{t}^{u})
    \le \delta L.
\end{align*}
The same arguments hold when we replace $\phi_{t}^{+}$ with $\phi_{t}^{-}$.
Therefore, for any choice of $\phi_{t} \in \{\phi_{t}^{+}, \phi_{t}^{-}\}$, we have
\begin{align*}
    \E \left[ \sum_{t=1}^{T} \big( \mathrm{PR}_{t} (\phi_{t}) - \mathrm{PR}_{t} (\theta_{t}^{u}) \big) \right] 
    \le \frac{d^2 L^2}{2 \mu'} (1 + \log T) + 3 \delta L T + \mu' D_{\Theta} \sum_{t=1}^{T} t \lVert \theta_{t}^{u} - \theta_{t+1}^{u} \rVert.
\end{align*}

Now choose $\theta_{t}^{u} = (1 - \rho) \theta_{t}^{\PO}$ for all $t \in [T]$, where we can check $\theta_{t}^{u} \in (1 - \rho) \Theta$, and $\theta_{T+1}^{u} = \theta_{T}^{u}$.
Then, by convexity of $\mathrm{PR}_{t}$ and $0 \in \Theta$, we have
\begin{align}
    \begin{aligned}
    \mathrm{PR}_{t} (\theta_{t}^{u})
    &\le (1 - \rho) \mathrm{PR}_{t} (\theta_{t}^{\PO}) + \rho \mathrm{PR}_{t} (0) = \mathrm{PR}_{t} (\theta_{t}^{\PO}) + \rho \big( \mathrm{PR}_{t} (0) - \mathrm{PR}_{t} (\theta_{t}^{\PO}) \big) 
    \le \mathrm{PR}_{t} (\theta_{t}^{\PO}) + 2 \rho F.
    \end{aligned}
\end{align}
Therefore, taking the maximum over all possible choices of $\phi_{t} \in \{\phi_{t}^{+}, \phi_{t}^{-}\}$, we have
\begin{align*}
    \Reg_{T}^{\PO} (\AZGDtwo)
    &= \sum_{t=1}^{T}
    \left(
    \max_{\phi_t \in \{\phi_t^+,\phi_t^-\}} \E[\mathrm{PR}_t(\phi_t)]
    - \mathrm{PR}_t(\theta_t^{\PO})
    \right) \\
    &\le \frac{d^2 L^2}{2 \mu'} (1 + \log T) + 3 \delta L T + 2 \rho F T + \mu' D_{\Theta} (1 - \rho) \sum_{t=1}^{T-1} t \Delta_{t}^{\PO},
\end{align*}
where the $t = T$ term is eliminated by choosing $\theta_{T+1}^{u} = \theta_{T}^{u}$.

Setting $\rho = \frac{\delta}{r}$ and $\delta = \frac{d^2 L}{6 \mu' T}$, we have\footnote{We can improve the leading constant a bit by choosing $\delta = \frac{d^2 L^2}{2 \mu'} (3 L + 2 \frac{F}{r})^{-1} \cdot \frac{1 + \log T}{T}$, but we choose the above $\delta$ for simplicity.}
\begin{align*}
    \Reg_{T}^{\PO} (\AZGDtwo) &\le \frac{d^2 L^2}{2 \mu'} \left( 2 + \log T + \frac{2 F}{3Lr} \right) + \mu' D_{\Theta} (1 - \rho) \sum_{t=1}^{T-1} t \Delta_{t}^{\PO} = \gO \left( d^2 \log T + \sum_{t=1}^{T-1} t \Delta_{t}^{\PO} \right),
\end{align*}
and $T \ge \frac{d^2L}{6\mu' r}$ ensures that $\rho = \frac{\delta}{r} \le 1$, which proves the given statement.

\paragraph{Convex case.}

Suppose that $\mathrm{PR}_{t}$ and thus $\widehat{\mathrm{PR}}_{t}$ are convex and Lipschitz.
We use $\eta_{t} \equiv \eta$, which we choose later.
Since this case corresponds to $\mu' = 0$, from \eqref{eq:aaadonta} and the triangle inequality argument, we have
\begin{align*}
    &\widehat{\mathrm{PR}}_{t} (\theta_{t}) - \widehat{\mathrm{PR}}_{t} (\theta_{t}^{u}) \le \frac{1}{2 \eta} \lVert \theta_{t} - \theta_{t}^{u} \rVert^2
    - \frac{1}{2 \eta} \E [\lVert \theta_{t+1} - \theta_{t+1}^{u} \rVert^2] + \frac{D_{\Theta}}{\eta} \lVert \theta_{t}^{u} - \theta_{t+1}^{u} \rVert + \eta \frac{d^2 L^2}{2},
\end{align*}
and therefore
\begin{align*}
    \E \left[ \sum_{t=1}^{T} \big( \widehat{\mathrm{PR}}_{t} (\theta_{t}) - \widehat{\mathrm{PR}}_{t} (\theta_{t}^{u}) \big) \right] 
    &\le \frac{1}{2 \eta} \lVert \theta_{1} - \theta_{1}^{u} \rVert^2 + \frac{\eta d^2 L^2 T}{2} + \frac{D_{\Theta}}{\eta} \sum_{t=1}^{T} \lVert \theta_{t}^{u} - \theta_{t+1}^{u} \rVert \\
    &\le \frac{D_{\Theta}^2}{2 \eta} + \frac{\eta d^2 L^2 T}{2} + \frac{D_{\Theta}}{\eta} \sum_{t=1}^{T} \lVert \theta_{t}^{u} - \theta_{t+1}^{u} \rVert.
\end{align*}
Following the same steps as in the strongly convex case, we have
\begin{align*}
    \Reg_{T}^{\PO} (\AZGDtwo)
    &= \sum_{t=1}^{T}
    \left(
    \max_{\phi_t \in \{\phi_t^+,\phi_t^-\}} \E[\mathrm{PR}_t(\phi_t)]
    - \mathrm{PR}_t(\theta_t^{\PO})
    \right) \\
    &\le \frac{D_{\Theta}^2}{2 \eta} + \frac{\eta d^2 L^2 T}{2} + 3 \delta L T + 2 \rho F T + \frac{D_{\Theta} (1 - \rho)}{\eta} \sum_{t=1}^{T-1} \Delta_{t}^{\PO}.
\end{align*}
If we choose $\rho = \frac{\delta}{r}$, for which we have
\begin{align*}
    \Reg_{T}^{\PO} (\AZGDtwo)
    &\le \frac{D_{\Theta}^2}{2 \eta} + \frac{\eta d^2 L^2 T}{2} + \left( 3 L + 2 \frac{F}{r} \right) \delta T + \frac{D_{\Theta}}{\eta} \left( 1 - \frac{\delta}{r} \right) \sum_{t=1}^{T-1} \Delta_{t}^{\PO},
\end{align*}
and then choose\footnote{We can improve the leading constant a bit by choosing $(3 L + 2 \frac{F}{r})^{-1} \frac{dD_{\Theta}L}{2\sqrt{T}}$, but we choose the above $\delta$ for  simplicity.}
\begin{align*}
    \eta = \frac{D_{\Theta}}{d L \sqrt{T}},
    \qquad
    \delta = \frac{dD_{\Theta}}{6\sqrt{T}},
\end{align*}
then we have
\begin{align*}
    \Reg_{T}^{\PO} (\AZGDtwo)
    &\le \frac{3 dD_{\Theta}L\sqrt{T}}{2} + \frac{dD_{\Theta}F\sqrt{T}}{3r} + d L \sqrt{T} \left( 1 - \frac{dD_{\Theta}}{6r\sqrt{T}} \right) \sum_{t=1}^{T-1} \Delta_{t}^{\PO} = \gO \left( d \sqrt{T} \cdot \left( 1 + \sum_{t=1}^{T-1} \Delta_{t}^{\PO} \right) \right),
\end{align*}
and $T \ge \frac{d^2 D_{\Theta}^2}{36 r^2}$ ensures that $\rho = \frac{\delta}{r} \le 1$, which proves the given statement.
\end{proof}


\subsection{\texorpdfstring%
{Proof of \cref{thm:zgd}}%
{Proof of Theorem 5.3}}
\label{subsec:zgd}

Here we prove \cref{thm:zgd}, restated below for the sake of readability.

\thmzgd*

\begin{proof}

We first prove the strongly convex case, and then prove the convex case.

\paragraph{Strongly convex case.}
For any comparator $\theta_{t}^{u} \in (1 - \rho) \Theta$ (which we choose later), $\mu'$-strong convexity of $\widehat{\mathrm{PR}}_{t} (\theta)$ implies
\begin{align}
    \begin{aligned}
    \E [\norm{\theta_{t+1} - \theta_{t}^{u}}^2]
    &= \E [\norm{\Pi_{(1 - \rho) \Theta} (\theta_{t} - \eta_{t} g_{t}) - \theta_{t}^{u}}^2] \\
    &\le \E [\norm{\theta_{t} - \theta_{t}^{u} - \eta_{t} g_{t}}^2] \\
    &= \E [\norm{\theta_{t} - \theta_{t}^{u}}^2 - 2 \eta_{t} g_{t}^{\top} ( \theta_{t} - \theta_{t}^{u} ) + \eta_{t}^{2} \norm{g_{t}}^2] \\
    &= \norm{\theta_{t} - \theta_{t}^{u}}^2 - 2 \eta_{t} \nabla \widehat{\mathrm{PR}}_{t} (\theta_{t})^{\top} ( \theta_{t} - \theta_{t}^{u} ) + \eta_{t}^{2} \E [\norm{g_{t}}^2],
    \end{aligned}
    \label{eq:zgd_basic1}
\end{align}
since $\E[g_{t}] = \nabla \widehat{\mathrm{PR}}_t(\theta_t)$.
Also, since $g_{t} = \frac{d}{\delta} \cdot \ell (Z_{t}, \theta_{t} + \delta u_{t}) u_{t}$ with $|\ell(z, \theta)| \le F$ and $\lVert u_{t} \rVert = 1$,
\begin{align*}
    \E [\norm{g_{t}}^2] &\le \frac{d^2 F^2}{\delta^2}.
\end{align*}
By $\mu'$-strong convexity of $\widehat{\mathrm{PR}}_{t}$ and \eqref{eq:zgd_basic1}, we have
\begin{align}
    \begin{aligned}
    \widehat{\mathrm{PR}}_{t} (\theta_{t}) - \widehat{\mathrm{PR}}_{t} (\theta_{t}^{u})
    &\le \nabla \widehat{\mathrm{PR}}_{t} (\theta_{t})^{\top} ( \theta_{t} - \theta_{t}^{u} ) - \frac{\mu'}{2} \lVert \theta_{t} - \theta_{t}^{u} \rVert^2 \\
    &\le \left( \frac{1}{2 \eta_{t}} - \frac{\mu'}{2} \right) \lVert \theta_{t} - \theta_{t}^{u} \rVert^2
    - \frac{1}{2 \eta_{t}} \E [\lVert \theta_{t+1} - \theta_{t}^{u} \rVert^2] + \eta_{t} \frac{d^2 F^2}{2 \delta^2},
    \end{aligned}
    \label{eq:zgdonestep}
\end{align}
for any choice of comparators $\theta_{t}^{u}, \theta_{t+1}^{u} \in (1 - \rho) \Theta$.
By triangle inequality, we have
\begin{align*}
    \lVert \theta_{t+1} - \theta_{t}^{u} \rVert
    &\ge \lVert \theta_{t+1} - \theta_{t+1}^{u} \rVert - \lVert \theta_{t}^{u} - \theta_{t+1}^{u} \rVert,
\end{align*}
and since $\lVert \theta_{t+1} - \theta_{t+1}^{u} \rVert \le D_{\Theta}$, we can use
\begin{align*}
    \lVert \theta_{t+1} - \theta_{t}^{u} \rVert^2
    &\ge \lVert \theta_{t+1} - \theta_{t+1}^{u} \rVert^2 - 2(\theta_{t+1} - \theta_{t+1}^{u})^{\top} (\theta_{t}^{u} - \theta_{t+1}^{u}) + \lVert \theta_{t}^{u} - \theta_{t+1}^{u} \rVert^2 \\
    &\ge \lVert \theta_{t+1} - \theta_{t+1}^{u} \rVert^2 - 2 D_{\Theta} \lVert \theta_{t}^{u} - \theta_{t+1}^{u} \rVert.
\end{align*}
Plugging in $\eta_{t} = \frac{1}{\mu' t}$ and noting that $\frac{1}{2 \eta_1} - \frac{\mu'}{2} = 0$, we have
\begin{align*}
    &\widehat{\mathrm{PR}}_{t} (\theta_{t}) - \widehat{\mathrm{PR}}_{t} (\theta_{t}^{u}) \le \frac{\mu' (t-1)}{2} \lVert \theta_{t} - \theta_{t}^{u} \rVert^2
    - \frac{\mu' t}{2} \E [\lVert \theta_{t+1} - \theta_{t+1}^{u} \rVert^2] + \mu' D_{\Theta} t \lVert \theta_{t}^{u} - \theta_{t+1}^{u} \rVert + \eta_{t} \frac{d^2 F^2}{2 \delta^2},
\end{align*}
and taking the sum and expectations over the entire history,
\begin{align}
    \begin{aligned}
    \E \left[ \sum_{t=1}^{T} \big( \widehat{\mathrm{PR}}_{t} (\theta_{t}) - \widehat{\mathrm{PR}}_{t} (\theta_{t}^{u}) \big) \right] 
    &\le \frac{d^2 F^2}{2 \delta^2 \mu'} \sum_{t=1}^{T} \frac{1}{t} + \mu' D_{\Theta} \sum_{t=1}^{T} t \lVert \theta_{t}^{u} - \theta_{t+1}^{u} \rVert \\
    &\le \frac{d^2 F^2}{2 \delta^2 \mu'} (1 + \log T) + \mu' D_{\Theta} \sum_{t=1}^{T} t \lVert \theta_{t}^{u} - \theta_{t+1}^{u} \rVert.
    \end{aligned}
    \label{eq:zgd_basic2}
\end{align}

Recall that $\phi_{t} = \theta_{t} + \delta u_{t}$ and $\widehat{\mathrm{PR}}_{t} (\theta) = \E_{v \sim \Unif(\sB^{d})} [\mathrm{PR}_{t}(\theta + \delta v)]$. 
Since $\mathrm{PR}_{t}$ and $\widehat{\mathrm{PR}}_{t}$ are both $L$-Lipschitz, we have
\begin{align*}
    \mathrm{PR}_{t} (\phi_{t}) - \mathrm{PR}_{t} (\theta_{t})
    &\le \delta L, \\
    \mathrm{PR}_{t} (\theta_{t}) - \widehat{\mathrm{PR}}_{t} (\theta_{t})
    &\le \delta L, \\
    - \mathrm{PR}_{t} (\theta_{t}^{u}) + \widehat{\mathrm{PR}}_{t} (\theta_{t}^{u})
    &\le \delta L,
\end{align*}
and therefore
\begin{align*}
    \E \left[ \sum_{t=1}^{T} \big( \mathrm{PR}_{t} (\phi_{t}) - \mathrm{PR}_{t} (\theta_{t}^{u}) \big) \right] 
    &\le \frac{d^2 F^2}{2 \delta^2 \mu'} (1 + \log T) + 3 \delta L T + \mu' D_{\Theta} \sum_{t=1}^{T} t \lVert \theta_{t}^{u} - \theta_{t+1}^{u} \rVert.
\end{align*}

Now we choose $\theta_{t}^{u} = (1 - \rho) \theta_{t}^{\PO}$ for all $t \in [T]$, where we can check $\theta_{t}^{u} \in (1 - \rho) \Theta$, and $\theta_{T+1}^{u} = \theta_{T}^{u}$.
Then, by convexity of $\mathrm{PR}_{t}$ and $0 \in \Theta$, we have
\begin{align}
    \begin{aligned}
    \mathrm{PR}_{t} (\theta_{t}^{u})
    &\le (1 - \rho) \mathrm{PR}_{t} (\theta_{t}^{\PO}) + \rho \mathrm{PR}_{t} (0) \\
    &= \mathrm{PR}_{t} (\theta_{t}^{\PO}) + \rho \big( \mathrm{PR}_{t} (0) - \mathrm{PR}_{t} (\theta_{t}^{\PO}) \big) \\
    &\le \mathrm{PR}_{t} (\theta_{t}^{\PO}) + 2 \rho F.
    \end{aligned}
    \label{eq:zgd_basic3}
\end{align}
Therefore we have
\begin{align*}
    \Reg_{T}^{\PO} (\AZGD) &= \E \left[ \sum_{t=1}^{T} \big( \mathrm{PR}_{t} (\phi_{t}) - \mathrm{PR}_{t} (\theta_{t}^{\PO}) \big) \right] \\
    &\le \frac{d^2 F^2}{2 \delta^2 \mu'} (1 + \log T) + 3 \delta L T + 2 \rho F T + \mu' D_{\Theta} (1 - \rho) \sum_{t=1}^{T-1} t \Delta_{t}^{\PO}.
\end{align*}
If we choose $\rho = \frac{\delta}{r}$ and
\begin{align*}
    \delta = \left( \frac{d^2 F^2}{2 \mu'} (1 + \log T) \right)^{\frac{1}{3}} \cdot \left( \left( 3 L + \frac{2 F}{r} \right) T \right)^{-\frac{1}{3}},
\end{align*}
we have
\begin{align*}
    \Reg_{T}^{\PO} (\AZGD)
    &\le 2 \left( \frac{d^2 F^2}{2 \mu'} (1 + \log T) \right)^{\frac{1}{3}} \cdot \left( \left( 3 L + \frac{2 F}{r} \right) T \right)^{\frac{2}{3}} + \mu' D_{\Theta} \sum_{t=1}^{T-1} t \Delta_{t}^{\PO} \\
    &= \gO \left( d^{\frac{2}{3}} T^{\frac{2}{3}} (\log T)^{\frac{1}{3}} + \sum_{t=1}^{T-1} t \Delta_{t}^{\PO} \right),
\end{align*}
and $\frac{T}{1 + \log T} \ge \frac{d^2 F^2}{2 \mu' r^3 \left( 3L + \frac{2F}{r} \right)}$ ensures that $\rho = \frac{\delta}{r} \le 1$, which proves the given statement.


\paragraph{Convex case.}

For the case when $\widehat{\mathrm{PR}}_{t}$ is convex, we use constant $\eta_{t} \equiv \eta$ which we choose later.
Since this case corresponds to $\mu' = 0$, from \eqref{eq:zgdonestep} and the triangle inequality argument, we have
\begin{align*}
    &\widehat{\mathrm{PR}}_{t} (\theta_{t}) - \widehat{\mathrm{PR}}_{t} (\theta_{t}^{u}) \le \frac{1}{2 \eta} \lVert \theta_{t} - \theta_{t}^{u} \rVert^2
    - \frac{1}{2 \eta} \E [\lVert \theta_{t+1} - \theta_{t+1}^{u} \rVert^2] + \frac{D_{\Theta}}{\eta} \lVert \theta_{t}^{u} - \theta_{t+1}^{u} \rVert + \eta \frac{d^2 F^2}{2 \delta^2},
\end{align*}
and therefore
\begin{align*}
    \E \left[ \sum_{t=1}^{T} \big( \widehat{\mathrm{PR}}_{t} (\theta_{t}) - \widehat{\mathrm{PR}}_{t} (\theta_{t}^{u}) \big) \right]
    &\le \frac{1}{2 \eta} \lVert \theta_{1} - \theta_{1}^{u} \rVert^2 + \frac{\eta d^2 F^2 T}{2 \delta^2} + \frac{D_{\Theta}}{\eta} \sum_{t=1}^{T} \lVert \theta_{t}^{u} - \theta_{t+1}^{u} \rVert \\
    &\le \frac{D_{\Theta}^2}{2 \eta} + \frac{\eta d^2 F^2 T}{2 \delta^2} + \frac{D_{\Theta}}{\eta} \sum_{t=1}^{T} \lVert \theta_{t}^{u} - \theta_{t+1}^{u} \rVert.
\end{align*}
Following the same steps as in the strongly convex case, we have
\begin{align*}
    \Reg_{T}^{\PO} (\AZGD) &= \E \left[ \sum_{t=1}^{T} \big( \mathrm{PR}_{t} (\phi_{t}) - \mathrm{PR}_{t} (\theta_{t}^{\PO}) \big) \right] \\
    &\le \frac{D_{\Theta}^2}{2 \eta} + \frac{\eta d^2 F^2 T}{2 \delta^2} + 3 \delta L T + 2 \rho F T + \frac{D_{\Theta} (1 - \rho)}{\eta} \sum_{t=1}^{T-1} \Delta_{t}^{\PO}.
\end{align*}
If we choose $\rho = \frac{\delta}{r}$, for which we have
\begin{align*}
    \Reg_{T}^{\PO} (\AZGD)
    &\le \frac{D_{\Theta}^2}{2 \eta} + \frac{\eta d^2 F^2 T}{2 \delta^2} + \left( 3 L + 2 \frac{F}{r} \right) \delta T + \frac{D_{\Theta}}{\eta} \left( 1 - \frac{\delta}{r} \right) \sum_{t=1}^{T-1} \Delta_{t}^{\PO},
\end{align*}
and then choose
\begin{align*}
    \eta =
    \sqrt{\frac{D_{\Theta}^3}{dF} \cdot \frac{1}{3L + \frac{2F}{r}}}\, T^{-3/4},
    \qquad
    \delta =
    \sqrt{\frac{D_{\Theta}dF}{3L + \frac{2F}{r}}}\, T^{-1/4},
\end{align*}
so that
\begin{align*}
    &\frac{D_{\Theta}^2}{2 \eta}
    = \frac{\eta d^2 F^2 T}{2 \delta^2}
    = \frac{1}{2} \sqrt{D_{\Theta}dF \left( 3L + \frac{2F}{r} \right)}\, T^{3/4}, \\
    &\left( 3 L + 2 \frac{F}{r} \right) \delta T
    = \sqrt{D_{\Theta}dF \left( 3L + \frac{2F}{r} \right)}\, T^{3/4},
\end{align*}
then we have
\begin{align*}
    \Reg_{T}^{\PO} (\AZGD) &\le 2 \sqrt{D_{\Theta}dF \left( 3L + \frac{2F}{r} \right)}\, T^{3/4} + \sqrt{\frac{dF}{D_{\Theta}} \left( 3L + \frac{2F}{r} \right)}\, T^{3/4} \left( 1 - \frac{\delta}{r} \right) \sum_{t=1}^{T-1} \Delta_{t}^{\PO} \\
    &= \gO \left( d^{\frac12} T^{\frac{3}{4}} \cdot \left( 1 + \sum_{t=1}^{T-1} \Delta_{t}^{\PO} \right) \right),
\end{align*}
and $T \ge (\frac{D_{\Theta}dF}{r^2 \left( 3L + \frac{2F}{r} \right)})^2$ ensures that $\rho = \frac{\delta}{r} \le 1$, which proves the given statement.
\end{proof}

\clearpage{}
\section{Experimental details and further results}
\label{sec:exps}

\subsection{Credit scoring experiment}
\label{sec:exps-credit}

\paragraph{Setup.}
Here we provide a more detailed description of the credit scoring experiment summarized in \Cref{sec:6}.
We minimize the regularized logistic loss,
\begin{align*}
    \ell(z, \theta)
    = \log \left( 1 + e^{-\theta^\top z} \right)
    + \frac{\lambda}{2} \|\theta\|^2
\end{align*}
over the Euclidean ball $\Theta = \{\theta : \|\theta\|_2 \le 1\}$ with $\lambda = 1$, where $z = y x$, where $y\in \{-1, +1\}$, for features $x$ and labels $y$.
The environment is built from the processed credit data used in the actionable-recourse experiments, available as the \texttt{credit} dataset from \url{https://github.com/ustunb/actionable-recourse}.
We first standardize each of the $17$ credit features to have zero mean and unit variance. 
The Gaussian base distribution is also fitted to $1500$ signed feature vectors $xy$ randomly sampled without replacement, from which we compute the empirical mean $m$ and covariance $\Sigma$.
The first six feature coordinates, corresponding to marital-status and age indicators, are treated as non-modifiable.
The remaining eleven coordinates, corresponding to education, balance, payment, spending, and overdue-history features, are treated as modifiable.
This modifiable features can be altered strategically.

As we discuss in \Cref{sec:6}, the performative component can be written as
\begin{align*}
    \gD(\theta) = \gN(A\theta+m, \Sigma),
\end{align*}
where $\Sigma$ is the empirical covariance and $A = a \operatorname{diag} (p_{\mathrm{mod}})$ with $a = 0.1$.
The vector $p_{\mathrm{mod}}$ is zero on the non-modifiable coordinates and one on the modifiable coordinates, which implies that $\|A\|_{\mathrm{op}} = a = 0.1$.
The exogenous component is normal with the same covariance matrix $\Sigma$ as the performative component,
\begin{align*}
    P_t = \gN(m_t, \Sigma).
\end{align*}
For the exogenous sequence, we consider both an i.i.d. randomly varying sequence for $m_t$ in the unit Euclidean ball and the stationary sequence $m_t\equiv m_0$, where $m_0$ is drawn once from the same bounded set.
The experiments use $d = 17$, $T_{\mathrm{stab}} = 2000$ for the algorithms $\ARRM, \ARGD$, and $\ARSGDG$, and $T_{\mathrm{lazy}} = 1000$ for $\ARSGDL$.
The $\alpha_t$ grid consists of $\alpha_t = 1$ and polynomial schedules $\alpha_t = \alpha_0 t^{-b}$ with $\alpha_0 = 1$ and $b \in \{0.25, 0.5, 1.0, 2.0\}$.
For algorithms $\ARSGDG$ and $\ARSGDL$ with stochasticity, we average over $10$ runs.

The logistic loss is $\lambda$-strongly convex in $\theta$, and 
if $\| z \| \le R_z$ and $\| \theta \| \le R_{\theta}$ for some finite $R_z, R_{\theta}$, then $\ell$ is $\lambda_z + \frac{R_z^2}{4}$-smooth in $z$ and $\lambda_z + \frac{R_z R_{\theta}}{4}$-smooth in $\theta$.
Since Gaussian distributions technically have unbounded support, when plotting the theoretical upper bounds later we use slightly conservative constants as a high-probability surrogate for both the smoothness constants above and the Lipschitz constants.

We can also compute
\begin{align*}
    \gW_1\bigl( \gD(\theta), \gD(\theta') \bigr)
    = \|A(\theta - \theta')\|
    \le 0.1 \|\theta - \theta'\|.
\end{align*}
Consequently, the sensitivity condition is satisfied.

\paragraph{Computing the stable points.}
To approximate $\theta_t^\PS$ empirically, we solve the fixed-point problem
\begin{align*}
    \theta_t^\PS
    = \argmin_{\theta' \in \Theta}
    \E_{Z \sim \gD_t (\theta_t^\PS)} [\ell(Z, \theta')]
\end{align*}
by a bilevel fixed-point iteration heuristic.
At each outer step, the inner fixed-distribution risk minimization is approximated by projected gradient descent using common random numbers.
The experiment uses $20$ outer iterations, $60$ inner gradient steps, and inner step size $0.2$ for the fixed point iteration to compute $\theta_t^\PS$, and use $512$ Monte Carlo samples to estimate the gradient.
Risk evaluations for plotting use $16384$ common random samples, and these are shared across algorithms and comparator evaluations for a consistent comparison.

\begin{figure}
    \centering
    \includegraphics[width=0.95\linewidth]{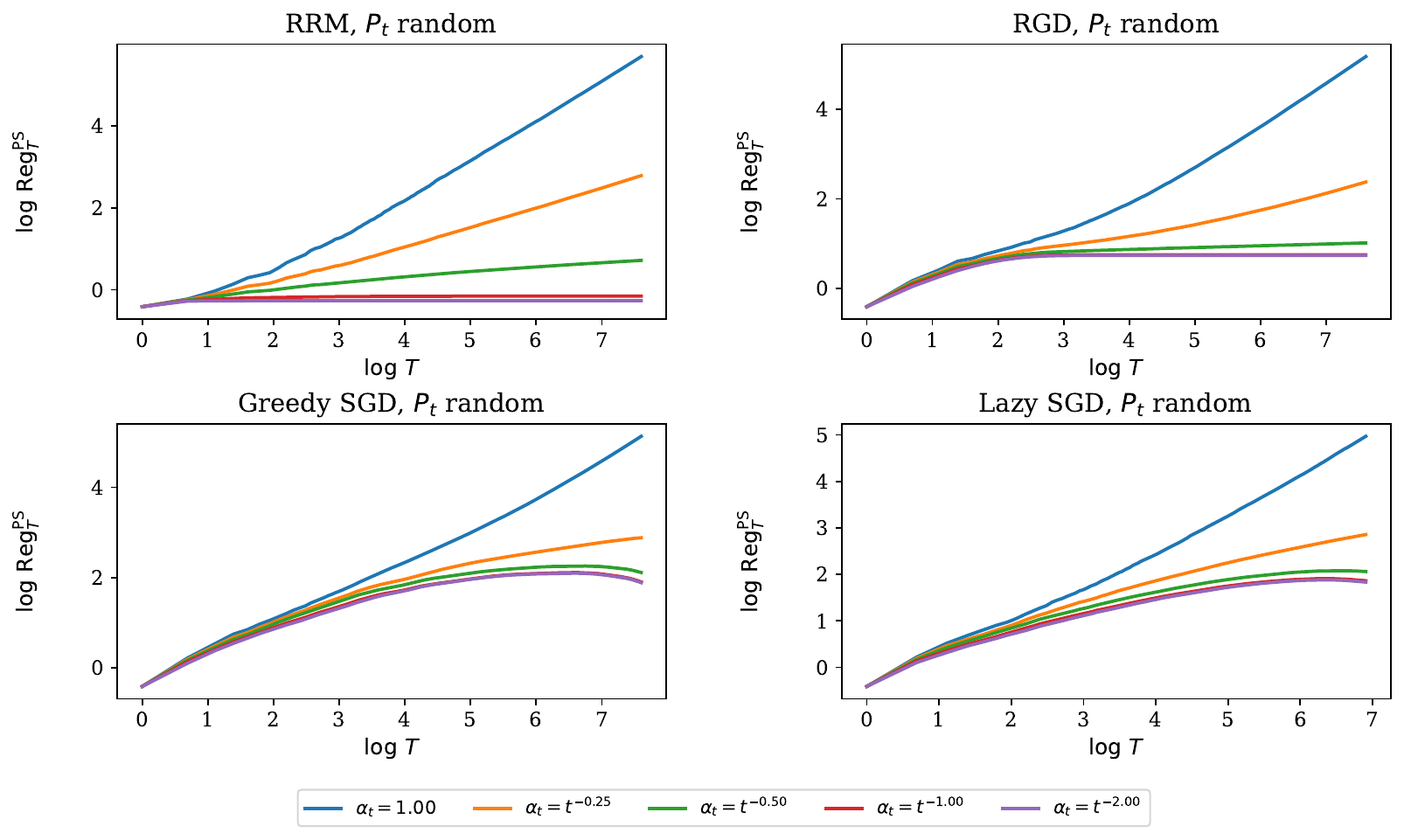}
    \caption{Performative stability regret of $\ARRM$, $\ARGD$, $\ARSGDG$, and $\ARSGDL$ with $r = 1$ in the credit scoring experiment, for varying $\alpha_t$. We consider a randomly varying exogenous component $P_t$.
    }
    \label{fig:2}
\end{figure}
\begin{figure}
    \centering
    \includegraphics[width=0.95\linewidth]{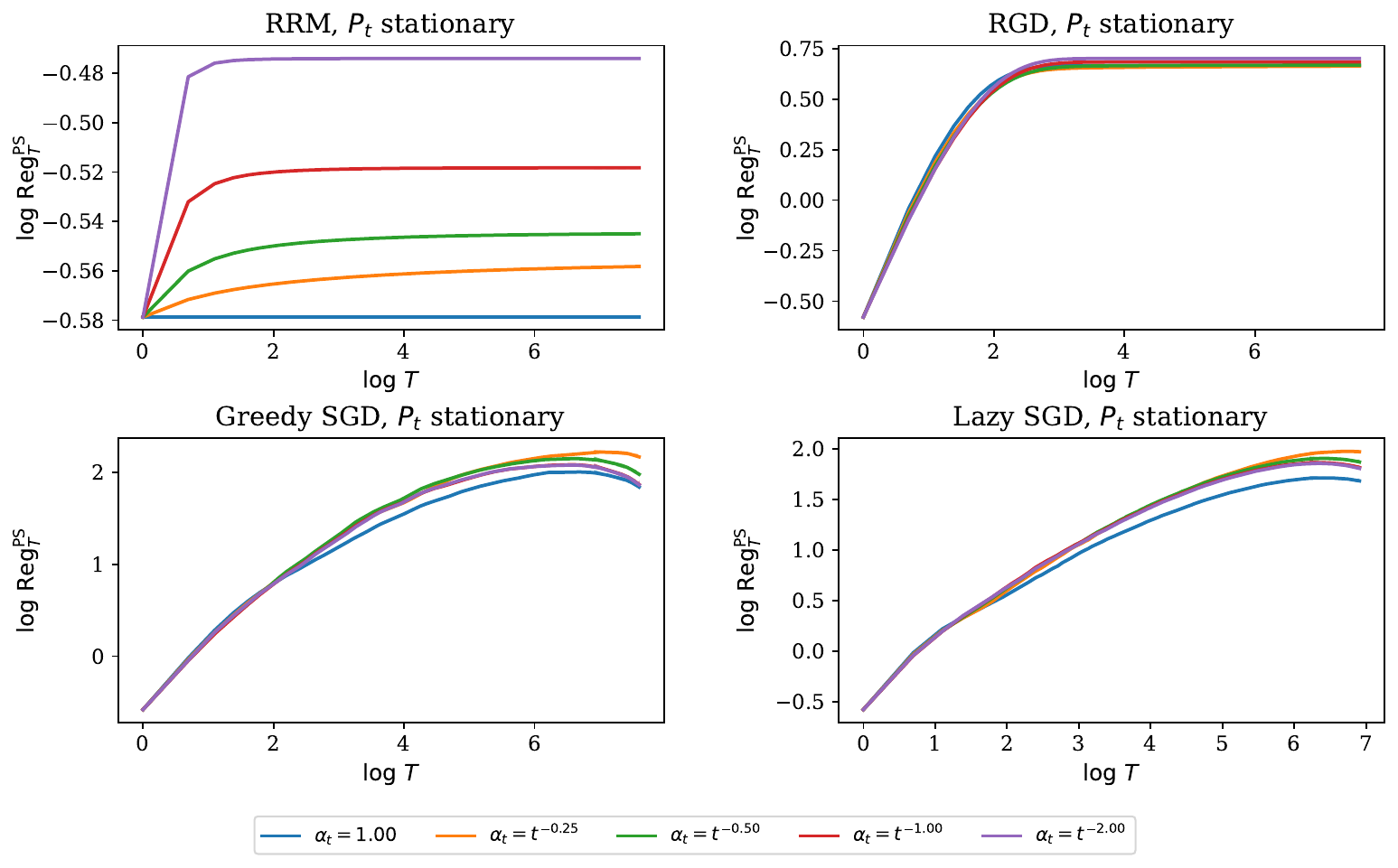}
    \caption{Performative stability regret of $\ARRM$, $\ARGD$, $\ARSGDG$, and $\ARSGDL$ with $r = 1$ in the credit scoring experiment, for varying $\alpha_t$. We consider a stationary exogenous component $P_t$.}
    \label{fig:3}
\end{figure}

\paragraph{Results.}
\Cref{fig:2} shows that for the randomly varying exogenous shift, $\ARRM$ and $\ARGD$ have a clear ordering with respect to the $\alpha_t$ schedule, with $\alpha_t\propto t^{-1.0}$ producing smaller regret than $\alpha_t\propto t^{-0.5}$, which matches the trend we can expect from the path-length dependence in \Cref{thm:rrm,thm:rgd} and our discussions in \Cref{sec:6}.
The stochastic algorithms have less separation among the decaying schedules, yet we can observe that the constant schedule remains visibly worse with a slope near $1$ (i.e., linear regret) for $\ARSGDG$ and $\ARSGDL$ as well.

\Cref{fig:3} shows that for the stationary case, we have the opposite trend compared to that of the randomly varying case in general, with the constant schedule performing slightly better than the decaying schedules.
The distinction is clear for $\ARRM$ and less prominent for gradient-based methods like $\ARGD$, $\ARSGDG$, and $\ARSGDL$.
Note that because the reference for stability regret $\theta_t^{\PS}$ may be different from the minimizer of the performative risk $\theta_t^{\PO}$, it is possible for the regret to decrease when the iterates $\theta_t$ coincidentally get close to $\theta_t^{\PO}$.

\Cref{fig:4} compares the empirical stability regret of $\ARRM$ and $\ARGD$ to the finite upper bounds we derive in \Cref{thm:rrm,thm:rgd}.
In the randomly varying case, \Cref{fig:4} again shows that $\alpha_t\propto t^{-1.0}$ produces smaller regret than $\alpha_t\propto t^{-0.5}$ for both algorithms, matching the path-length dependence in \Cref{thm:rrm,thm:rgd}.
We can also observe that, even though Gaussian distributions are unbounded, the finite upper bounds are still qualitatively correct in their ordering and level, which suggests that the conservative high-probability envelope is a reasonable surrogate for the Lipschitz constants in this case.
Moreover, for decaying $\alpha_t$, the slopes of the log-log plots near large $T$ of the empirical regret curves are slightly smaller than the rates predicted by the upper bounds, which leaves room for both an improvement in the path-length dependence of the upper bounds or the existence of a more carefully constructed adversarial lower bound, which we leave for future investigation.

\begin{figure}[t]
    \centering
    \includegraphics[width=0.9\linewidth]{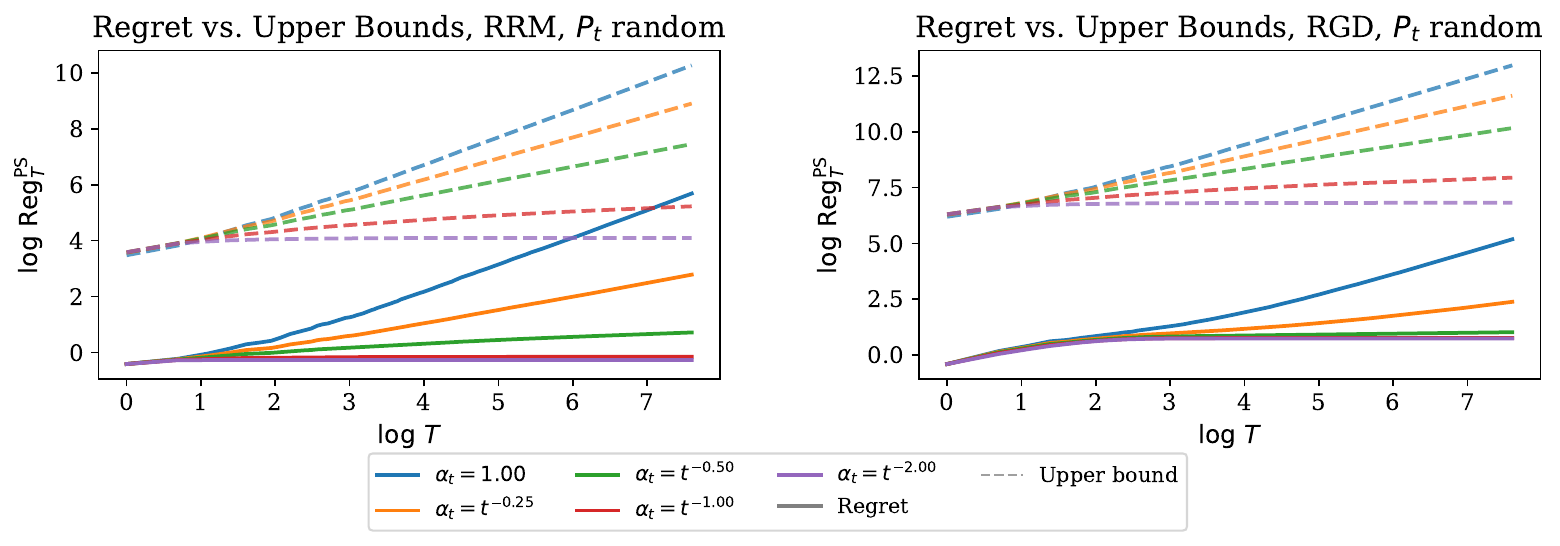}
    \caption{Performative stability regret vs. theoretical upper bounds of $\ARRM$ (left) and $\ARGD$ (right) in the credit scoring experiment, for varying $\alpha_t$. We consider a randomly varying exogenous component $P_t$.}
    \label{fig:4}
\end{figure}


\subsection{Further results on synthetic data}
\label{sec:exps-quad}

\paragraph{Setup.}
We also report an experiment on synthetic Gaussian data.
We minimize the squared loss $\ell(z, \theta) = \frac12 \|z-\theta\|^2$ over $\Theta = [-B, B]^d$ with $B=5$.
We use a partially performative setting similar to that of \Cref{sec:exps-credit}:
\begin{align*}
    Z_t
    &\sim \gD_t(\theta_t)
    = (1-\alpha_t) \gD(\theta_t) + \alpha_t P_t,\\
    \gD(\theta)
    &= \gN(A\theta+m, \Sigma),
    \qquad
    P_t = \gN(m_t, \Sigma_P).
\end{align*}
We use $\|A\|_{\mathrm{op}} = \epsilon = 0.3$, $\Sigma = \sigma^2I$, $\Sigma_P = \sigma_P^2 I$, and $\sigma = \sigma_P = 0.5$.
The vector $m$ is sampled once from $[-1,1]^d$, and the matrix $A$ is sampled once and rescaled to have operator norm $\epsilon$.
The $\alpha_t$ grid consists of the constant schedule $\alpha_t = 1$ and polynomial decays $\alpha_t \propto t^{-b}$ with $b\in\{0.5, 0.75, 1.0, 1.5, 2.0\}$.
For the exogenous sequence, we report both a shared i.i.d. randomly varying sequence in the unit Euclidean ball and the stationary sequence $m_t\equiv m_0$, where $m_0$ is drawn once from the same bounded construction.
The experiments use $d \in \{2, 10\}$, $T_{\mathrm{stab}} = 100000$ for the algorithms $\ARRM, \ARGD$, and $\ARSGDG$ and $T_{\mathrm{lazy}} = 1000$ for $\ARSGDL$.

The quadratic loss is $1$-strongly convex in $\theta$ and $1$-smooth in both $\theta$ and $z$.
Since $\gD(\theta)$ and $\gD(\theta')$ have the same covariance for any $\theta, \theta'$, we have
\begin{align*}
    \gW_1 \bigl( \gD(\theta), \gD(\theta') \bigr)
    &=
    \gW_1 \bigl( \gN(A \theta + m, \Sigma), \gN(A \theta + m', \Sigma) \bigr) \\
    &= \| A (\theta - \theta') \|
    \le
    \epsilon \| \theta - \theta' \|.
\end{align*}
Thus $\gD$ is $\epsilon$-sensitive, and the partially performative map has sensitivity at most $(1-\alpha_t) \epsilon$.
Since $\epsilon = 0.3 < 1 = \mu/L_z$, the stability condition is satisfied.

As in the previous experiment, because Gaussian distributions have unbounded support, we use slightly conservative constants as a high-probability surrogate for the Lipschitz constants that are only used to compute the theoretical upper bounds.

\paragraph{Computing the stable points.}
For the stable point, the minimizer of the squared loss under a fixed distribution is its mean.
Hence $\theta_t^\PS$ solves
\begin{align*}
    \theta_t^\PS
    &= (1-\alpha_t) (A \theta_t^\PS + m) + \alpha_t m_t,
\end{align*}
and therefore can be evaluated in closed form as the solution to the linear system:
\begin{align*}
    \theta_t^\PS
    = \bigl( I - (1-\alpha_t) A \bigr)^{-1}
    \bigl( (1-\alpha_t)m + \alpha_t m_t \bigr).
\end{align*}

\paragraph{Results.}
\Cref{fig:8} compares the empirical stability regret of $\ARRM$ and $\ARGD$ to the finite upper bounds under randomly varying shifts for $\alpha_t \propto t^{-0.5}$ and $\alpha_t \propto t^{-1.0}$ for the Gaussian experiment with dimension $d = 10$.
We have similar trends as in \Cref{fig:4}, and the difference between slopes of the empirical and upper bound curves (indicating the looseness of the upper bounds) are more pronounced in this case.

\Cref{fig:9} shows the optimization iterates in the experiment in two dimensions.
For the randomly varying sequence with $\alpha_t \propto t^{-1.0}$, we can visually observe that the algorithm iterates $\theta_t$ track the moving target $\theta_t^{\PS}$ with a slight lag.
The log-distance panels on the right show that this tracking error contracts over time but is also repeatedly perturbed by the movement of the reference.

\begin{figure}[t]
    \centering
    \includegraphics[width=0.91\linewidth]{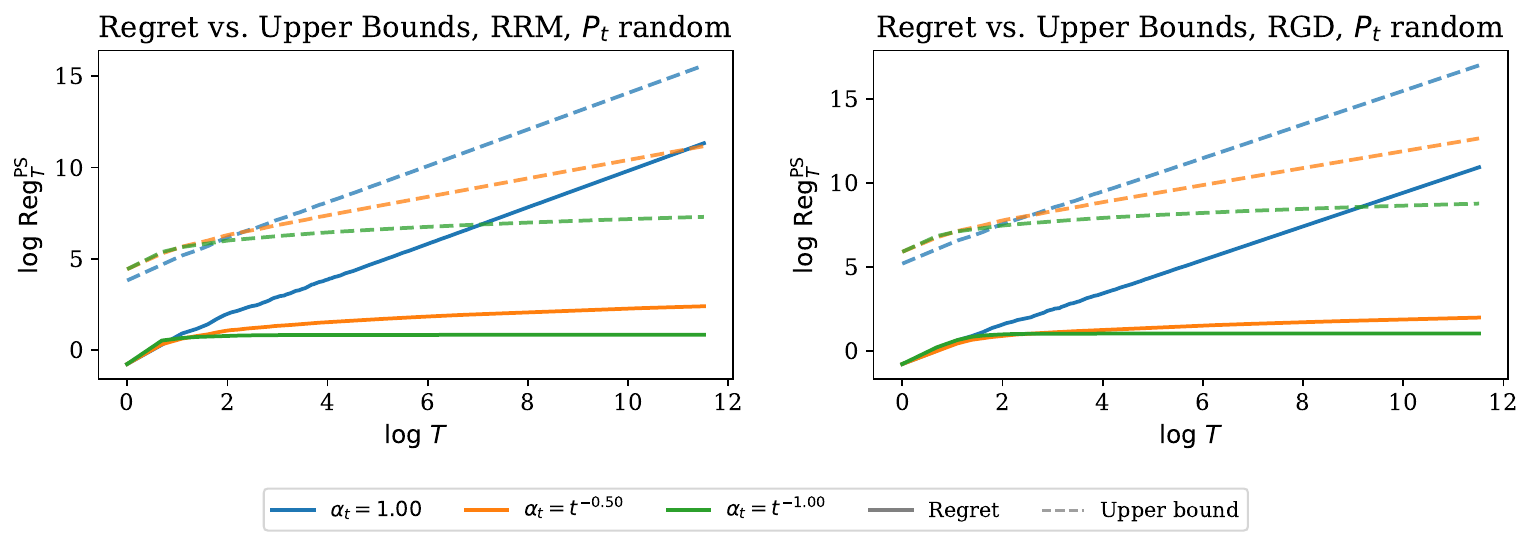}
    \caption{Performative stability regret vs. theoretical upper bounds of $\ARRM$ (left) and $\ARGD$ (right) in the quadratic-Gaussian experiment, for varying $\alpha_t$. We consider a randomly varying exogenous component $P_t$.}
    \label{fig:8}
\end{figure}

\begin{figure}[t]
    \centering
    \includegraphics[height=0.88\textheight]{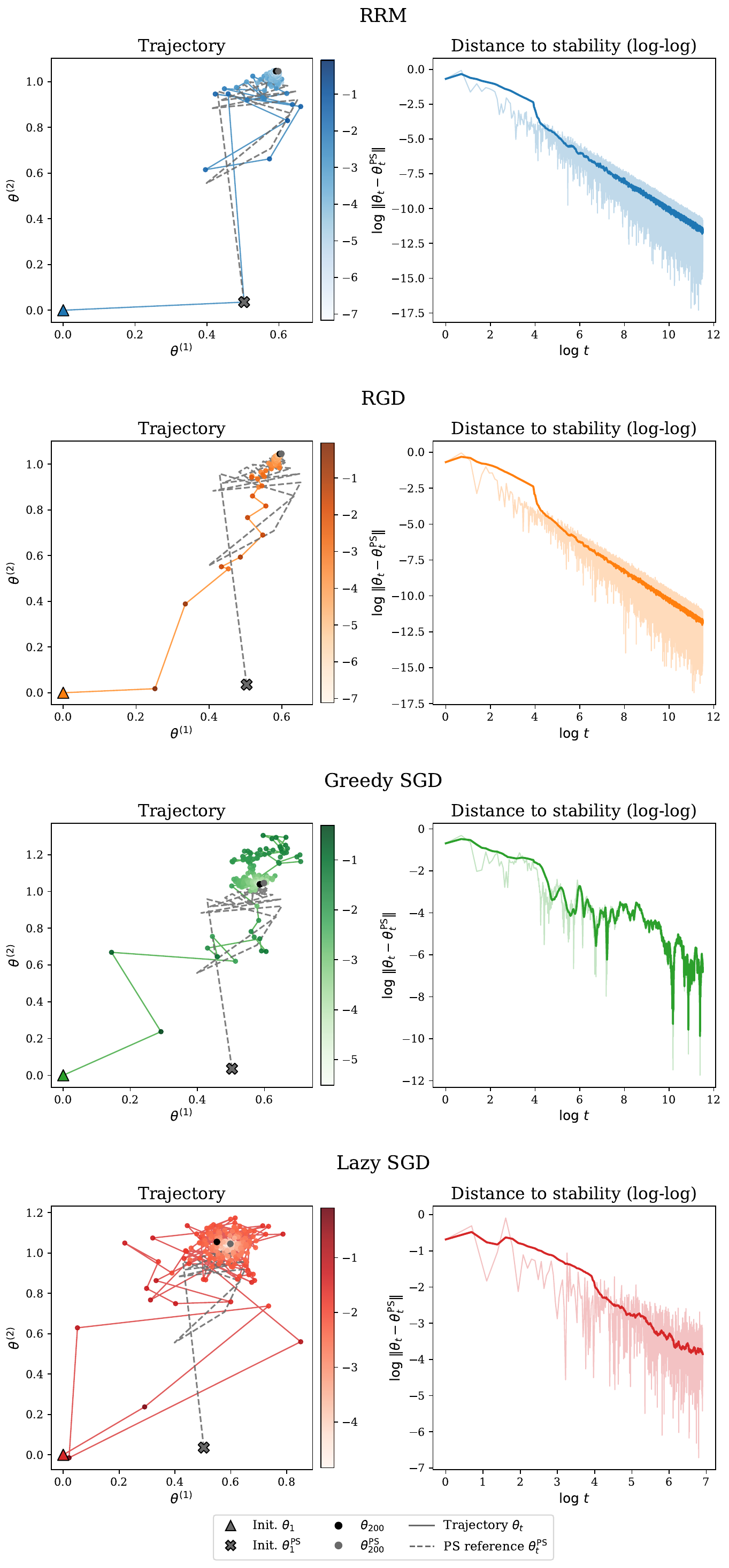}
    \caption{Representative two-dimensional iterate vs. stability trajectories of $\ARRM$, $\ARGD$, $\ARSGDG$, and $\ARSGDL$ with $r = 1$ in the 2D quadratic-Gaussian experiment, for $\alpha_t \propto t^{-1.0}$. We consider a randomly varying exogenous component $P_t$.
    The left subfigures show the actual movement of $\theta_t$ and $\theta_t^{\PS}$ in the 2D space for the initial $200$ steps, and the right subfigures show the log-log plots of the distance $\| \theta_t - \theta_t^{\PS} \|$.
    We consider both the raw trajectories and the moving average of the last $50$ iterations in the right log-log plot.
    }
    \label{fig:9}
\end{figure}

\end{document}